\theoremstyle{plain}
\newtheorem{theorem}{Theorem}
\newtheorem{lemma}{Lemma}
\newtheorem{cor}{Corollary}
\theoremstyle{definition}
\newtheorem{assumption}{Assumption}
\theoremstyle{remark}
\newtheorem{rmk}{Remark}
\newcommand{\1}{\mathds{1}}
\newcommand{\w}{\boldsymbol{w}}
\newcommand{\x}{\boldsymbol{x}}
\newcommand{\y}{\boldsymbol{y}}
\newcommand{\z}{\boldsymbol{z}}
\newcommand{\e}{\boldsymbol{e}}
\let\v\undefined
\newcommand{\v}{\boldsymbol{v}}
\let\u\undefined
\newcommand{\u}{\boldsymbol{u}}
\let\b\undefined
\newcommand{\b}{\boldsymbol{b}}
\newcommand{\I}{\boldsymbol{I}}
\newcommand{\A}{\boldsymbol{A}}
\newcommand{\Z}{\boldsymbol{Z}}
\newcommand{\R}{\mathbb{R}}
\newcommand{\vnorm}[1]{\|#1\|}
\newcommand{\inner}[1]{\left\langle #1\right\rangle}
\title{Beyond Discreteness: Sample Complexity Analysis of Straight-Through Estimator for 1-bit Quantization}
\author{
Halyun Jeong$^1$ \; Jack Xin$^2$ \; Penghang Yin$^1$\\
$^1$Department of Mathematics and Statistics, University at Albany, SUNY\\
$^2$Department of Mathematics, University of California, Irvine\\
\texttt{\{hjeong2,pyin\}@albany.edu}\\
\texttt{jackx@uci.edu}
}
\date{}
\begin{document}

\maketitle

\begin{abstract}
Training quantized neural networks requires addressing the non-differentiable and discrete nature of the underlying optimization problem. To tackle this challenge, the straight-through estimator (STE) has become the most widely adopted heuristic, allowing backpropagation through discrete operations by introducing biased yet valid surrogate gradients. However, its theoretical properties remain largely unexplored, with few existing analyses focus on the generalization error by assuming an infinite amount of training data. In contrast, this work presents the first sample complexity analysis of STE in the context of neural network quantization. Our theoretical results highlight the critical role of sample size in the success of STE, a key insight absent from existing studies. Specifically, by analyzing the quantization-aware training of a two-layer neural network with binary weights and activations, we derive the sample complexity bounds in terms of the data dimensionality that guarantee the convergence of STE-based optimization to the global minimum for both ergodic and non-ergodic analyses.
Moreover, in the presence of label noises, we prove an intriguing recurrence property of STE-gradient method, where the iterate repeatedly escape from and return to the optimal binary weights. Finally, we empirically demonstrate that STE fails for general non-Gaussian data but its effectiveness can be restored through normalization, underscoring its practical importance in effective quantization.
\end{abstract}

\medskip
\noindent\textbf{Keywords.} quantization, straight-through estimator, 1-bit compressed sensing

\medskip
\noindent\textbf{MSC codes.} 68T07, 68Q32, 90C26, 94A12

\section{Introduction}

Deep neural networks (DNNs) have revolutionized machine learning, achieving remarkable success across various domains, including computer vision \cite{resnet,krizhevsky2017imagenet}, reinforcement learning \cite{mnih2013playing,sutton1999policy}, and natural language processing \cite{vaswani2017attention}. However, state-of-the-art DNNs often contain billions of parameters, requiring substantial computational power and memory. The high resource consumption at inference time poses significant challenges for deploying these models on resource-constrained devices, such as smartphones, embedded systems, and Internet-of-Things (IoT) devices. To address this challenge, network quantization has emerged as a well-adopted and robust
solution, aiming to reduce the precision of weights and activations while preserving model performance. Quantized DNNs operate using low-precision arithmetic rather than standard full-precision floating-point representations, significantly reducing memory footprint and computational cost. This compression technique enables real-time inference on energy-efficient specialized accelerators  \cite{rastegari2016xnor,wang2018training}.

Training quantized neural networks, also known as quantization-aware training (QAT) \cite{ashbrock2021stochastic,halfwave_17,pact,Hubara2017QuantizedNN,rastegari2016xnor,yin2018binaryrelax}, poses significant mathematical challenges due to the discrete nature of low-precision weights and activations, which prohibits the direct use of standard backpropagation. A widely adopted approach to address this challenge is the straight-through estimator (STE) \cite{bengio2013estimating}, which enables gradient-like optimization by using fake gradients for piecewise constant objectives. In essence, STE is an informative heuristic that replaces the almost-everywhere zero derivative of discrete components (e.g., quantized activation function) \emph{exclusively during the backward pass}, yielding an unconventional yet practical "gradient" by modifying the chain rule. STE has played a pivotal role in quantization methods, making it possible to train low-bitwidth neural networks efficiently. Additionally, STE has been extensively used to handle the minimization of discrete-valued objectives that arise in machine learning such as neural
architecture search \cite{riad2022learning,single-path}, discrete latent variables \cite{jang2016categorical,kunes2023gradient,paulus2020rao}, adversarial attack \cite{athalye2018obfuscated}, and sparse recovery \cite{mohamed2023straight}, among others \cite{mao2022enhance,wagstaff2022universal,xu2019relation,yang2022injecting}.
However, despite its empirical success, STE remains an informal heuristic with limited theoretical justification, leading to potential optimization instability and performance degradation.

The main challenge in analyzing STE arises from the absence of a valid gradient of the discrete-valued sample loss $l(\w;\z)$, where $\w$ represents the model parameters and $\z$ is a sample. The theoretical foundations of STE have primarily been studied in the context of neural network quantization \cite{li2017training,yin2018binaryrelax,yin2018understanding}. In particular, \cite{yin2018understanding} was among the first to provide a rigorous analysis of STE for activation quantization. While subsequent studies have explored alternative quantization settings, these analyses focus on minimizing the population loss: $\min_{\w}\,\mathbb{E}_{\z} [l(\w;\z)]$, over the data distribution—an assumption that unrealistically presumes access to infinitely many samples \cite{long2021learning,long2023recurrence,wei2025roste}. Under such an assumption, $\mathbb{E}_{\z} [l(\w;\z)]$ is differentiable  with a valid gradient almost everywhere (a.e.), which enables the analysis of the correlation between the gradient of the population loss $\nabla_{\w} \mathbb{E}_{\z} [l(\w;\z)]$ and the STE gradient $\tilde{\nabla}_{\w} l(\w;\z)$ (also referred to as the coarse gradient \cite{yin2018understanding,yin2018blended}). Here $\tilde{\nabla}_{\w}$ denotes the unconventional STE gradient operator, intended to be distinguished from the standard gradient $\nabla_{\w}$. Notably, \cite{yin2018understanding} provides convergence insights and demonstrates that, when STE is properly chosen, the expectation of STE gradient $\mathbb{E}_{\z} [\tilde{\nabla}_{\w} l(\w;\z)]$, despite being \emph{biased}, maintains a positive correlation with the population gradient $\nabla_{\w} \mathbb{E}_{\z} [l(\w;\z)]$, thereby ensuring a descent property of the STE gradient method for activation-only quantization. With finitely many samples, however, analyzing the empirical risk minimization: $\min_{\w} \frac{1}{N}\sum_{i=1}^N l(\w;\z_i)$, becomes dramatically more challenging, as the empirical loss gradient is a.e. zero. In this setting, a rigorous sample complexity analysis is essential to establish the effectiveness of STE-based optimization.
Moreover, existing works typically consider either weight-only \cite{ajanthan2021mirror,dockhorn2021demystifying,jin2025parq,liu2023binary} or activation-only quantization \cite{long2021learning,yin2018understanding}. The analysis appears to be even more intricate when both weights and activations are quantized, as it involves the use of dual STEs.

\subsection{Contributions}
In this paper, we present the theoretical analysis of the straight-through estimator (STE) for quantization-aware training, addressing an optimization problem that is discrete in both its objective and constraint. Specifically, our main contributions are summarized as follows:
\begin{itemize}
    \item We provide the first analysis of the sample complexity that guarantees the effectiveness of STE. For training a two-layer binarized neural net with random Gaussian inputs, we show that
$O(n^2)$ samples suffice for convergence of the ergodic (averaged) iterates to the optimal solution, while $O(n^4)$ samples are sufficient for the non-ergodic (last iterate) convergence, where $n$ is the dimension of data.
Moreover, we empirically validate that our $O(n^2)$ bound for the ergodic convergence case is tight.

    \item For the non-ergodic convergence analysis, we discover a surprising recurrence behavior of the STE gradient algorithm in the presence of label noise: the iterates repeatedly hit exactly and escape from the optimal binary weights. This stands in contrast to classical problems like linear regression and compressed sensing, where exact recovery of the underlying parameters is typically impossible under observation noise.

    \item We empirically show that convergence guarantees fail to hold for general sub-Gaussian distributions. Nevertheless, we demonstrate that applying a normalization procedure can improve the performance of STE when the data deviate from normality, underscoring the practical importance of normalization in the effective use of STE.

    \item To support the analysis, we leverage techniques from the context of 1-bit compressed sensing \cite{friedlander2021nbiht}, revealing important connections between these two domains. Additionally, we introduce a novel application of occupation time analysis, a tool traditionally used in dynamical systems, to rigorously analyze the drift dynamics of the iterates.

\end{itemize}


\paragraph{Notations.} We clarify the mathematical notations that will
be used throughout this paper: For $m\in\mathbb{N}$, we denote $[m] := \{1,2,\dots,m\}$. We denote vectors by bold
small letters and matrices by bold capital ones.
Moreover, the Hadamard (element-wise) product of two vectors $\x, \y\in\R^n$ is defined as $\x\odot\y := (x_1 y_1, \dots, x_n y_n)\in\R^n$. The Heaviside step function is given by $\1_{\{x\geq 0\}} = 1$ if $x\geq 0$ and $0$ otherwise. In  addition, sign function is defined as
\begin{equation*}
\mathrm{sign}(x)=
 \begin{cases}
    1 & \mbox{if } x\geq 0, \\
    -1 & \mbox{if } x< 0.
 \end{cases}
\end{equation*}
$\forall \, p \in [1,\infty]$, $\|\x\|_p:= (\sum_{i=1}^n |x_i|^p)^{1/p}$ denotes the $\ell_p$ norm of $\x\in\R^n$. In particular, we simply denote by $\|\cdot\|$  the $\ell_2$ norm. If $f$ and $g$ depend on a parameter, say $q$, then we write $f = O(g)$ or $f \lesssim g$ if there exist constant $C > 0$ and $q_0 > 0$ such that $f(q) \le C g(q)$ for all $q > q_0$. Similarly, we write $f = \Omega(g)$ if there exist  constants $C > 0$ and $q_0 > 0$ such that $f(q) \ge C g(q)$ for all $q > q_0$. We write $f \approx g$ if $f = O(g)$ and $f = \Omega(g)$.


\section{Background and Related Work}
\label{sec:background_related_work}

\paragraph{Straight-Through Estimator.} In the classical sense, STE is simply an identity mapping used in the backward pass to replace the a.e. zero derivative of the Heaviside step function $\theta(x):=\1_{\{x\geq 0\}}$. Specifically, this approach treats $\theta^\prime(x)$ as 1, backpropagating as if
$\theta(x)$ were the identity function \cite{hinton2012neural}.
This original idea can be traced back to the celebrated perceptron algorithm \cite{rosenblatt1957perceptron,rosenblatt1962principles} for training single-layer perceptrons, and is now commonly referred to as the identity STE. More generally, for a composite function $l \circ \theta(\x)$, where $l:\R^n \to \R$ is differentiable and $\theta$ is the Heaviside function applied element-wise to the input $\x\in\R^n$, by overloading the notation "$\approx$", the identity STE modifies the chain-rule to evaluate
$$
\nabla_{\x} \, (l\circ \theta)(\x) = \theta^\prime(\x) \odot \nabla \, l(\theta(\x)) \approx \nabla \, l(\theta(\x)),
$$
where $\nabla \, l(\theta(\x))$ denotes the gradient of $l$ evaluated at $\theta(\x)$. Later, variants of the STE heuristic for replacing $\theta^\prime$ have been proposed, including the derivative of sigmoid function \cite{bengio2013estimating,bnn_16}. In an orthogonal direction, the expressiveness and approximation power of neural nets with binary weights or activations were investigated in \cite{gunturk2021approximation,shen2021neural,siegel2022high,vershynin2020memory}. More recently, the use of STE has been extended to facilitate backpropagation through multi-bit quantized (multi-step) functions in the quantization of deep neural networks, where more sophisticated STEs such as the derivatives of ReLU, log-tailed ReLU, and clipped ReLU have been adopted \cite{halfwave_17,pact,Hubara2017QuantizedNN,kim2020binaryduo,yin2018blended}.

Theoretical underpinnings of STE were initially developed in the context of weight-only quantization, focusing on minimizing general objective functions with quantized parameters using the identity STE \cite{li2017training,yin2018binaryrelax}. A key advancement in understanding why STE works was made in \cite{yin2018understanding}, which analyzed the expected neural network loss over Gaussian inputs with binary activations, enabling a fine-grained study of more sophisticated STEs beyond the identity STE. Notably, \cite{long2021learning, yin2018understanding} also demonstrated that effective STE choices for activation quantization are typically not unique. However, all prior works either consider general loss functions \cite{ajanthan2021mirror,jin2025parq,li2017training,liu2023binary,yin2018binaryrelax} or population network losses \cite{long2021learning,long2023recurrence,wei2025roste,yin2018understanding}, in contrast to the finite-sample setting addressed in this work.



\paragraph{1-bit Compressed Sensing.}
The 1-bit compressed sensing problem was first introduced by \cite{boufounos20081}. The objective is to recover an $s$-sparse signal $\x$ from its 1-bit measurements, given by $\b = \mathrm{sign}(\A\x)$, which are the signs of the linear measurements, where $\A$ is typically a random matrix such as Gaussian. While 1-bit compressed sensing offers a practical approach to discretizing measurements for efficient sparse recovery, due to this discontinuous nature, it has been posing several interesting challenges.
Numerous approaches have been proposed to address the 1-bit compressed sensing problem, including linear programming methods \cite{plan2013one}, convex relaxations \cite{plan2012robust}, and the generalized Lasso \cite{plan2016generalized}, among others.  However, these methods provide theoretical guarantees with only sub-optimal reconstruction error at most $O(\sqrt{s/N})$, where $N$ is the number of samples. \cite{jacques2013robust} proposed Normalized Binary Iterative Hard Thresholding (NBIHT), and it has been known for its strong empirical performance for 1-bit compressed sensing. Despite the analytical difficulties arising from the discrete and non-smooth loss induced by quantization, recent breakthroughs have established NBIHT's near-optimal sample complexity. \cite{friedlander2021nbiht} introduced Restricted Approximate Invertibility Condition (RAIC), resembling the Restricted Isometry Property (RIP) for conventional compressed sensing  \cite{baraniuk2008simple,candes2006robust,eldar2012compressed,foucart2013invitation}. Subsequent works \cite{chen2024optimal,matsumoto2024binary} have further affirmed its optimality concerning the sparsity level $s$, thereby showing the optimal reconstruction error of $O(s\log N/N)$, matching the lower bound of reconstruction error found in \cite{jacques2013robust}.

\paragraph{Dynamical Systems and Ergodic Theory.}
The argument we used to establish our ergodic theorem, which is based on occupation time analysis, shares a similar flavor with Kac's Lemma or Kakutani's skyscraper argument \cite{dani2012ergodic, petersen1989ergodic} in Ergodic Theory.
However, since our setting lacks the formal measure-theoretic structure typically assumed in ergodic theory (e.g., invariant measures or measure-preserving transformations), our analysis was developed independently. Nonetheless, its emphasis on recurrence through occupation time is reminiscent of the spirit of ergodic-theoretic arguments.

\section{Problem Setup}
Given a sample $\Z \in \R^{m \times n}$, we consider the following two-layer neural network as in \cite{long2021learning}, where $\w \in \R^n$ represents trainable weights, while $\v \in \R^m\setminus \{\mathbf{0}\}$ is fixed and known:
\begin{equation*}
    y(\w;\Z) := \sum_{j=1}^m v_j \theta(\z_j^\top \w) = \v^\top \theta(\Z \w) \in\R.
\end{equation*}
Here, $\theta(x) = \1_{\{x \geq 0\}}$ denotes the binary (Heaviside) activation function, and $\z_j^\top$ represents the $j$-th row of the data matrix $\Z\in \R^{m \times n}$. Each $\z_j\in\R^n$ can be interpreted as an image patch in computer vision (CV) tasks or as a token embedding in natural language processing (NLP) applications. Accordingly, $n$ represents the filter size of a convolutional layer in CV (see Figure~\ref{fig:linear_layer}) or embedding dimension in NLP, and  $m$ corresponds to the number of image patches or the context length of a sample, respectively. The last (or second) layer is a fully-connected layer, which is kept unquantized in practice. Hence, the loss remains differentiable w.r.t. the weights of the second layer, thus not requiring the use of STE for its training. Since our work aims to analyze the convergence behavior of the STE, the quantization of the second layer is of independent interest.

\begin{figure}[h!]
    \centering
    \includegraphics[width=0.5\linewidth]{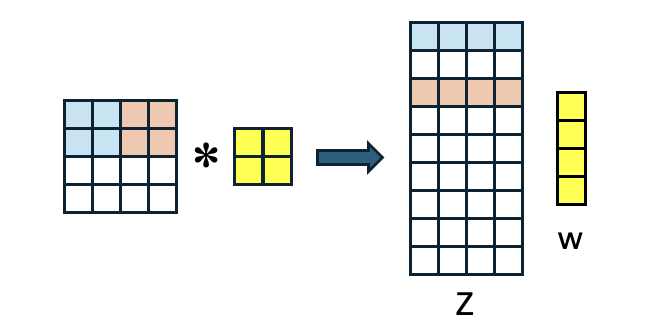}
    \caption{The first layer can be interpreted as an unrolled single-channel convolutional layer, where $*$ denotes the convolution operation.}
    \label{fig:linear_layer}
\end{figure}

\begin{assumption}[Input Data]\label{assump:data}
The row vectors of input data, $\z_j\in\R^n$ for all $j\in[m]$, follow i.i.d. standard Gaussian distribution $\mathcal{N}(0,\I_n)$. We also make the mild assumption that $m\lesssim O(2^n)$.
\end{assumption}

We assume the label $y_{\Z}$ is generated by an underlying model with binary weights $\w^*$, where the second-layer weights $\v\neq \mathbf{0}$ is given. Note that since $y$ is positively scale-invariant to $\w$, meaning that $y(c \, \w;\Z) = y(\w;\Z)$ for all $c>0$, we assume for simplicity that the true weights $\w^*\in\mathbb{Q}_1 := \left\{\pm \frac{1}{\sqrt{n}}\right\}^n$ is unit-normed. That said, the label $y_{\Z}$ is generated by
    \begin{equation}\label{eq:labels}
    y_{\Z} := y(\w^*;\Z) + \xi = \v^\top \theta(\Z \w^*) + \xi,
\end{equation}
which is possibly corrupted with unknown additive noise $\xi\in\R$.

\begin{assumption}[Label Noise]\label{assump:noise}
 The label noise $\xi$ follows symmetric and sub-Gaussian distribution with its sub-Gaussian norm bounded by $K_\xi$.
 In particular, for mean-zero Gaussian noise with variance $\sigma^2$, we have $K_\xi \approx \sigma$.
\end{assumption}

Moreover, we consider the following squared loss function:
\begin{equation*}\label{sample_loss}
\begin{aligned}
l(\w; \Z):=\frac{1}{2}\left(y(\w; \Z)- y_{\Z} \right)^2
= \frac{1}{2}\left(\v^\top\theta(\Z\w)- y_{\Z} \right)^2.
\end{aligned}
\end{equation*}

Then given the labeled dataset $\{(\Z^{(i)}, y_{\Z^{(i)}})\}_{i=1}^N$ and sample loss 

$l_i(\w): = l(\w;\Z^{(i)})$, training fully binarized model requires to solve the following empirical risk minimization problem:
\begin{equation}\label{eq:qat}
\min_{\w} L(\w):= \frac{1}{N}\sum_{i=1}^N l_i(\w) = \frac{1}{2N}\sum_{i=1}^N  \left(\v^\top\theta(\Z^{(i)}\w)- y_{\Z^{(i)}} \right)^2 \quad \mbox{subject to} \quad  \w\in\mathbb{Q}_1.
\end{equation}
Note that the above optimization problem is \emph{discrete, both in its objective and constraint}. We define the weight quantizer $\mathcal{Q}$, which is the projection of $\w$ onto $\mathbb{Q}_1= \left\{\pm \frac{1}{\sqrt{n}}\right\}^n$, as
\begin{equation}\label{eq:quantizer}
\mathcal{Q}(\w):= \mathrm{proj}_{\mathbb{Q}_1}(\w)  = \frac{1}{\sqrt{n}}\mathrm{sign}(\w).
\end{equation}

\subsection{STE-Gradient Method}
Since the activation function $\theta$ is piece-wise constant, the sample loss $l$ has a vanished gradient:
$$
\nabla_{\w} l = \Z^\top (\theta^\prime(\Z \w)\odot \v) \left(\v^\top\theta(\Z\w)- y_{\Z} \right) = \mathbf{0} \quad \mbox{a.e.},
$$
which renders the standard backpropagation inapplicable.
To circumvent this issue, we consider a surrogate gradient obtained by replacing $\theta^\prime$ with the derivative of ReLU $\mu(x) := \max\{x,0\}$, referred to as ReLU STE:
\begin{equation}\label{eq:ste_sample}
    \tilde{\nabla}_{\w} l = \Z^\top (\mu^\prime(\Z \w)\odot \v) \left(\v^\top\theta(\Z\w)- y_{\Z} \right),
\end{equation}
where $\mu^\prime(x) = \1_{\{x\geq 0\}}$. ReLU STE was proved effective for activation quantization of shallow neural nets \cite{yin2018understanding} such as LeNet-5 \cite{mnist_98}.

Then we have
\begin{equation}\label{eq:ste}
\tilde{\nabla}_{\w} L(\w) = \frac{1}{N}\sum_{i=1}^N \tilde{\nabla}_{\w} l_i(\w) =  \frac{1}{N}\sum_{i=1}^N (\Z^{(i)})^\top (\mu^\prime(\Z^{(i)} \w)\odot \v) \left(\v^\top\theta(\Z^{(i)}\w)- y_{\Z^{(i)}} \right).
\end{equation}
In this paper, we study the most widely used quantization-aware training (QAT) algorithm \cite{halfwave_17,Hubara2017QuantizedNN,yin2018blended}, which we refer to as STE-gradient method. It updates the weights using STE-induced surrogate gradient while keeping them quantized:
\begin{equation}\label{eq:ste_gradient}
    \begin{cases}
     \x^t = \x^{t-1} - \eta_t\tilde{\nabla}_{\w} L(\w^{t-1})\\
     \w^t = \mathcal{Q}(\x^t).
    \end{cases}
\end{equation}
The STE-gradient method (\ref{eq:ste_gradient}) resembles the classical projected gradient descent (PGD) iteration $\w^t = \mathcal{Q}(\w^{t-1}- \eta_t \nabla_{\w} L(\w^{t-1}))$,
but features two key modifications that make it particularly effective for the discrete optimization problem (\ref{eq:qat}). First, it employs the STE-gradient to bypass the zero gradient issue arising from quantized activation $\theta$. Second, unlike PGD, the gradient step of STE-gradient method combines the continuous variable $\x$ and the (STE-)gradient evaluated at $\w$,
allowing informative gradients to aggregate over time at small step size $\eta_t$ without being offset by the discretization $\mathcal{Q}$.

Hereby we summarize the STE-gradient method for solving (\ref{eq:qat}) in Algorithm \ref{alg:ste}, assuming $\eta_t$ satisfies the condition below.

\begin{assumption}[Step Size]
\label{assump:step_size}
The step size $\eta_t$ satisfies: (1) Boundedness: $\forall t$, $0<\eta_{t+1} \le \eta_t \le \eta_{\max} < \infty$; (2) Non-Summability: $\sum_{t=1}^\infty \eta_t = \infty$; (3) Slow Decay: $\lim_{t\to\infty} \frac{\eta_{t+1}}{\eta_t} = 1$.
In particular, $\eta_t$ can be the constant step size or diminishing step size with $\eta_t \propto t^{-p}$ for $p \in (0, 1]$.
\end{assumption}


In fact, (\ref{eq:ste_gradient}) is equivalent to the following single-step iteration:
\begin{equation}\label{eq:ste_single}
    \x^t = \x^{t-1} - \eta_t\tilde{\nabla}_{\w} L(\mathcal{Q}(\x^{t-1})).
\end{equation}
Here, $\x\in\R^n$ represents the latent continuous variables that are quantized to $\w\in\mathbb{Q}_1$, i.e., $\w = \mathcal{Q}(\x)$, and the STE gradient is evaluated for $L$ at $\w = \mathcal{Q}(\x)$. The above iteration essentially addresses the following unconstrained minimization that is equivalent to problem (\ref{eq:qat}):
$$
\min_{\x\in\R^n} L(\mathcal{Q}(\x)),
$$
using $\emph{dual STEs}$ to construct the surrogate of
$$
\nabla_{\x} L(\mathcal{Q}(\x)) = \mathcal{Q}^{\prime}(\x) \odot \nabla_{\w} L(\mathcal{Q}(\x)) \approx \tilde{\nabla}_{\w} L(\mathcal{Q}(\x)),
$$
where the ReLU STE addresses the discreteness of $L$, and the identity STE handles that of the quantizer $\mathcal{Q}$.

\begin{algorithm}[h]
\caption{STE-based Training of Binary Neural Network.}
\label{alg:ste}
\begin{algorithmic}
\STATE {\bfseries Input:}  max iteration number $T$, step size $\eta_t>0$.
\STATE {\bfseries Initialize:} latent float weights $\x^0\in\R^n$, binary weights $\w^0 = \mathcal{Q}(\x^0)$.
\FOR{$t=1$ {\bfseries to} $T$}
\STATE $\x^t = \x^{t-1} - \eta_t\tilde{\nabla}_{\w} L(\w^{t-1})$ \hfill $\triangleright$ STE-gradient $\tilde{\nabla}_{\w} L$ is given by (\ref{eq:ste})
\STATE  $\w^t = \mathcal{Q}(\x^t)$ \hfill $\triangleright$ quantize the weights by (\ref{eq:quantizer})
\ENDFOR
\end{algorithmic}
\end{algorithm}

\section{Main Results}\label{sec:main_results}
We present the first finite-sample analysis of STE for 

quantization-aware training. Our main results provide global convergence guarantees for Algorithm~\ref{alg:ste}, assuming a sufficiently large sample size $N$. Specifically, we show that $O(n^2)$ samples are sufficient to ensure ergodic convergence to $\w^*$, while $O(n^4)$ samples suffice for non-ergodic convergence, where the iterates $\{\w^t\}$ visit $\w^*$ infinitely often.




\subsection{Ergodic Convergence}
We demonstrate that the ergodic average of the iterates of the STE-gradient method, $\overline{\w}^T: = \frac{1}{T}\sum_{t=1}^T \w^t$,  converges to $\w^*$ up to an error floor that depends only on the ratio $n/N$.

\begin{theorem}[Informal] Under Assumptions~\ref{assump:data}--\ref{assump:step_size},
\label{thm:Informal_ergodic_informal_nonasymp}
when the sample size
$$
N \gtrsim  \left(\tfrac{(C'\|\v\|_1^2 +  C''K_\xi\|\v\|_1)n}{\,\,\|\v\|^2}\right)^2,
$$
for some universal constants $C'$ and $C''$, then with high probability,
 the ergodic average of iterates $\overline{\w}^T$ generated by Algorithm \ref{alg:ste} satisfies for a sufficiently large $T$ that
\[
\left\| \overline{\w}^T - \w^* \right\|_\infty \lesssim  O\left(\sqrt{\frac{n}{N}} + \frac{1}{\sqrt{n} T}\right).
\]
Moreover, the exact recovery can be achieved by
$\mathcal{Q}\left( \overline{\w}^T \right) =\w^*$
for sufficiently large $T$.
\end{theorem}



The sample complexity bound depends on $K_\xi$, the sub-Gaussian norm of noise $\xi$; heavier noise requires more samples to guarantee convergence.
Moreover, it also depends on $\|\v\|_1/\|\v\|_2$, which can be interpreted as the effective sparsity of $\v$ \cite{yin2014ratio}.
 In low-noise regime where $K_\xi  \ll \|\v\|_1$, the effective sparsity primarily determines the sample complexity $N$. In high-noise regime where $K_\xi \gg \|\v\|_1$, the noise level $K_\xi$ becomes the dominant factor. This trade-off similarly applies to the non-ergodic convergence case.

\subsection{Non-Ergodic Convergence}
\begin{theorem}[Informal]
\label{thm:global_coordinate_recovery_noisy}
Under Assumptions~\ref{assump:data}--\ref{assump:step_size}, there exist universal constants $C'$ and $C''$ such that, if the sample size satisfies $N\gtrsim\left(\tfrac{(C'\|\v\|_1^2 +  C''K_\xi\|\v\|_1) n^2}{\,\,\|\v\|^2}\right)^2$, the iterates $\{\w^t\}$ generated by Algorithm \ref{alg:ste} visit $\w^*$ infinitely often with high probability.
Furthermore, if the noise $\xi$ is non-zero and continuous, $\{\w^t\}$ also depart from $\w^*$ infinitely often with probability roughly at least $1/2$ over the randomness of sample draw $\{(\Z^{(i)}, \xi^{(i)})\}_{i=1}^N$.
\end{theorem}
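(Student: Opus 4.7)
My plan is to handle the two claims using a shared infrastructure: a coordinate-wise decomposition of the STE-gradient map combined with the non-summability of $\eta_t$ from Assumption~\ref{assump:step_size}. For each $\w\in\mathbb{Q}_1$ I would write $\tilde{\nabla}_{\w} L(\w)$ as a ``population drift'' $\mathbb{E}[\tilde{\nabla}_{\w} L(\w)]$ plus a ``finite-sample fluctuation'' $\tilde{\nabla}_{\w} L(\w)-\mathbb{E}[\tilde{\nabla}_{\w} L(\w)]$. A Bernstein/sub-Gaussian tail bound combined with a union bound over the $2^n$ elements of $\mathbb{Q}_1$ (accommodated by Assumption~\ref{assump:data}) controls the fluctuation in the $\ell_\infty$ norm by $(\|\v\|_1^2+K_\xi\|\v\|_1)\sqrt{n/N}$ uniformly over $\w$. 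The hypothesis $N\gtrsim n^4$, which is $n^2$ tighter than the bound of Theorem~\ref{thm:Informal ergodic_informal_nonasymp}, is calibrated so that this per-coordinate fluctuation is strictly dominated by a coordinate-wise lower bound on the population drift to be established below.

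\textbf{Part 1: Infinite visits to $\w^*$.} For each $\w\ne\w^*$, I would show, borrowing the RAIC-style analysis from 1-bit compressed sensing~\cite{friedlander2021nbiht}, that at every coordinate $i$ on which $\w$ and $\w^*$ disagree the population drift satisfies $\mathrm{sign}(\mathbb{E}[\tilde{\nabla}_{\w} L(\w)]_i)=-\mathrm{sign}(w_i^*)$ with magnitude $\gtrsim 1/\sqrt{n}$. Combined with the $\ell_\infty$ fluctuation bound, the full STE gradient then has the ``correct'' sign at each misaligned coordinate, uniformly in $\w\in\mathbb{Q}_1$. Consequently, whenever $\w^t\ne\w^*$, each misaligned latent coordinate $x_i^t$ is driven toward the half-line $\{\mathrm{sign}(x)=\mathrm{sign}(w_i^*)\}$ by at least $c\eta_t/\sqrt{n}$ per iteration; non-summability $\sum_t\eta_t=\infty$ then forces $\mathrm{sign}(x_i^t)$ to flip in finite time. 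An occupation-time argument in the spirit of the paper's ``drift dynamics'' formalism caps the sojourn between successive visits by a finite random time, yielding $\w^t=\w^*$ for infinitely many $t$.

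\textbf{Part 2: Infinite departures under noise.} At $\w^t=\w^*$, the model \eqref{eq:labels} reduces the STE gradient to $\boldsymbol{g}:=\tilde{\nabla}_{\w} L(\w^*)=-\tfrac{1}{N}\sum_{i=1}^N(\Z^{(i)})^\top(\mu'(\Z^{(i)}\w^*)\odot\v)\,\xi^{(i)}$, a linear functional of the $\xi^{(i)}$ that is absolutely continuous, has $g_i\ne 0$ a.s. for every $i$, and satisfies $\boldsymbol{g}\stackrel{d}{=}-\boldsymbol{g}$. The events $E_+:=\{\exists i:\mathrm{sign}(g_i)=\mathrm{sign}(w_i^*)\}$ and $E_-:=\{\forall i:\mathrm{sign}(g_i)=-\mathrm{sign}(w_i^*)\}$ partition the probability space modulo null events, and by the symmetry of $\boldsymbol{g}$ the probability of $E_-$ equals that of $\{\forall i:\mathrm{sign}(g_i)=\mathrm{sign}(w_i^*)\}\subseteq E_+$; hence $\mathbb{P}(E_+)\ge 1/2$. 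On $E_+$, while $\w^t$ remains at $\w^*$ the latent iterate evolves linearly, $\x^{t+k}=\x^{t}-\bigl(\sum_{s=t+1}^{t+k}\eta_s\bigr)\boldsymbol{g}$, and non-summability of $\eta_t$ forces any coordinate $i$ with $\mathrm{sign}(g_i)=\mathrm{sign}(w_i^*)$ to cross zero in finite time, changing $\w^{t+k}$. Combined with Part~1, this delivers infinitely many departures on $E_+$.

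\textbf{Main obstacle.} The crux is the per-coordinate drift lower bound $|\mathbb{E}[\tilde{\nabla}_{\w} L(\w)]_i|\gtrsim 1/\sqrt{n}$ for every misaligned $i$ and every $\w\in\mathbb{Q}_1\setminus\{\w^*\}$. Existing population-loss analyses~\cite{yin2018understanding,long2021learning} only need a correlation along $\w-\w^*$, an $\ell_2$-type bound, whereas here the statement must survive an $\ell_\infty$ concentration budget and a union bound over exponentially many binary vectors. I expect to address this via a leave-one-out decomposition of the expected STE gradient together with sign-consistency estimates in the spirit of the RAIC framework, which is also the step most sensitive to the precise interplay between the label noise $\xi$ and the Gaussian row structure of $\Z$.
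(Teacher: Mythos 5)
Your Part 2 is essentially sound and matches the paper's argument: at $\w^*$ the STE gradient reduces to a conditionally symmetric, a.s.\ nonvanishing vector (this is the content of Lemma~\ref{lem:gradient_symmetry_nonvanishing}), and non-summability of $\eta_t$ forces a sign flip on any coordinate pushed away from $w_j^*$; your vector-symmetry bookkeeping ($\boldsymbol{g}\stackrel{d}{=}-\boldsymbol{g}$, hence $\mathbb{P}(E_+)\ge 1/2$) is a valid, slightly different route to the same $1/2$ bound the paper gets from a single coordinate. The genuine gap is in Part 1, at the step from per-coordinate behavior to visits of the \emph{full vector} $\w^*$. Showing that every misaligned coordinate $x_i^t$ is driven back to the correct sign in finite time does not yield $\w^t=\w^*$ for any $t$: while one coordinate is being corrected, correctly aligned coordinates (whose drift is zero) can be pushed wrong by the perturbation, and with $n$ coordinates the incorrect phases can interleave so that some coordinate is always wrong. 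Your stated calibration of the hypothesis $N\gtrsim n^4$ --- ``the fluctuation is strictly dominated by the drift'' --- only requires $\rho=O(n/\sqrt{N})$ to be a small constant, i.e.\ $N\gtrsim n^2$, which is exactly the ergodic-case condition of Theorem~\ref{thm:Informal ergodic_informal_nonasymp}; it cannot explain, and does not suffice for, simultaneous sign agreement. The paper's proof (Theorem~\ref{thm:noisy_recurrence_departure}) closes this by a quantitative occupation-time bound: the one-step reset property plus a minimum cycle length $L+1\gtrsim 1/\rho$ show each coordinate is wrong on at most a $1/(L+1)$ fraction of steps, and a union bound over the $n$ coordinates shows $\w^t\neq\w^*$ on at most a fraction $n/(L+1)$ of steps; one needs $\rho\lesssim 1/n$, i.e.\ $N\gtrsim n^4$, precisely so that this fraction is below $1$, leaving a positive (hence infinite) set of times with $\w^t=\w^*$. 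Your sentence ``an occupation-time argument caps the sojourn between successive visits'' asserts this conclusion but supplies no mechanism for it, so the central quantitative idea of the non-ergodic result is missing.

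A secondary point: what you flag as the ``main obstacle'' --- a per-coordinate lower bound of order $1/\sqrt{n}$ on the expected STE gradient at misaligned coordinates --- is not actually an obstacle. Since the cross term and the noise term are mean-zero, the expectation identity used in the paper gives exactly $\mathbb{E}[\tilde{\nabla}_{\w}L(\w)]=\tfrac{\|\v\|^2}{\tau}(\w-\w^*)$, so every misaligned coordinate has drift of magnitude exactly $2\|\v\|^2/(\tau\sqrt{n})$; no leave-one-out analysis is needed there. The real technical work on the concentration side is the uniform $\ell_\infty$ bound over all $2^n$ points of $\mathbb{Q}_1$ (Theorem~\ref{thm:RAIC for two-layer quantized neural networks_appendix}), which your Bernstein-plus-union-bound plan does target, and the dynamical bookkeeping described above, which it does not.
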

We remark that this departure-probability lower bound is conservative, and the true escape probability could be significantly higher in practice.

To understand the instability behavior of iterates $\{\w^t\}$ around $\w^*$ in the presence of label noise, we recall that STE-gradient method is equivalent to the single-step iteration in (\ref{eq:ste_single}):
$$
\x^{t} = \x^{t-1} - \eta_t\tilde{\nabla}_{\w} L(\mathcal{Q}(\x^{t-1})).
$$
When the optimal solution is achieved at $\w^{T} = \mathcal{Q}(\x^{T})= \w^*$ for some $T$, the corresponding STE-gradient with $\mu^\prime(x) = \1_{\{x\geq 0\}}$ is given by
$$
\tilde{\nabla}_{\w} L(\mathcal{Q}(\x^{T})) =  \tilde{\nabla}_{\w} L(\w^*)= -\frac{1}{N}\sum_{i=1}^N (\Z^{(i)})^\top (\mu'(\Z^{(i)}\w^*)\odot\v) \xi^{(i)}
$$
is nonzero with probability $1-2^{-\|v\|_0N}$ under the continuous nonzero-noise assumptions, where $\|v\|_0:=|\operatorname{supp}(\v)|$. Unless $\mathrm{sign}(\w^*) = \mathrm{sign}(-\tilde{\nabla}_{\w} L(\w^*))$, $\{\x^{t}\}_{t> T}$ will move in a fixed direction until $\w^t = Q(\x^t)\neq \w^*$ for some $t > T$. Subsequently, $\w^t$ will repeatedly revisit and escape from $\w^*$. Similar oscillation behavior was also observed empirically in ImageNet classification towards the end of quantization-aware training using STE \cite{nagel2022overcoming}.

However, we emphasize that this divergence behavior is \emph{beneficial}, as it helps prevent the iterates from stalling. In contrast, even with the STE-gradient $\tilde{\nabla}_{\w} L$, any $\w\in\mathbb{Q}_1$ becomes a stationary point of the following PGD-like iteration under a small $\eta_t>0$:
$$
\w^t = \mathcal{Q}(\w^{t-1}- \eta_t \tilde{\nabla}_{\w} L(\w^{t-1})),
$$
since $\mathcal{Q}$ is a discrete operator. This will lead to stagnation and thus poor empirical performance.

\section{Proof Ideas}
This section outlines the proofs of Theorems~\ref{thm:Informal_ergodic_informal_nonasymp} and~\ref{thm:global_coordinate_recovery_noisy}. The first ingredient is a finite-sample concentration estimate showing that the STE-gradient is uniformly close to a linear drift toward the planted binary vector $\w^*$. The second ingredient is a coordinate-wise occupation-time analysis for the latent dynamics generated by Algorithm~\ref{alg:ste}.

\subsection{$\ell_\infty$ Concentration Bound for the STE-Gradient}\label{sec:concentration}
We first record the sample-size condition used throughout the concentration argument.

\begin{assumption}\label{assump:sample_size}
There exists some sufficiently large universal constant $C_1 > 0$ such that $N \ge {C_1 n^2}$.
\end{assumption}

The following theorem is the finite-sample analogue of the population STE alignment property. It shows that, uniformly over the quantized parameter space $\mathbb Q_1$, the empirical STE-gradient behaves like the linear map
\[
\w \mapsto \frac{\|\v\|^2}{\tau}(\w-\w^*),
\qquad \tau=2\sqrt{2\pi},
\]
up to an $\ell_\infty$ perturbation of order $\sqrt{n/N}$. This is the restricted approximate invertibility principle that drives the convergence analysis.

\begin{theorem}
\label{thm:RAIC}
\label{thm:RAIC_for_two_layer_quantized_neural_networks_appendix}
Suppose that Assumptions~\ref{assump:data}, \ref{assump:noise}, and~\ref{assump:sample_size} hold. Let $\w^* \in \mathbb{Q}_1$  and $\tau = 2\sqrt{2\pi}$. Then, there exist positive universal constants $C_1, C', C'', c$ such that  with probability at least $1 - 5m\exp(-cn)$,
\begin{align*}
\left \|{\|\v\|^2 \over \tau } (\w-\w^*)  -  \tilde{\nabla}_{\w} L(\w) \right \|_\infty \le {1 \over \tau} (C' \|\v\|_1^2 + C''K_\xi \|\v\|_1) \sqrt{ \frac{n}{N} }
\end{align*}
for all $\w \in \mathbb{Q}_1$.
\end{theorem}

To see the mechanism behind Theorem~\ref{thm:RAIC}, we decompose the STE-gradient into a noiseless term and a noise term:
\begin{align*}
\tilde{\nabla}_{\w} L(\w)
= & \underbrace{\frac{1}{N}\sum_{i=1}^N (\Z^{(i)})^\top (\mu^\prime(\Z^{(i)} \w)\odot \v) \v^\top\left(\theta(\Z^{(i)}\w)- \theta(\Z^{(i)}\w^*) \right)}_{\text{noiseless term}} \\
& \qquad \qquad \qquad \qquad \qquad \qquad \qquad - \underbrace{\frac{1}{N}\sum_{i=1}^N (\Z^{(i)})^\top \Bigl[\mu'(\Z^{(i)}\,\w)\odot\v\Bigr]\, \xi^{(i)}}_{\text{noise term}}.
\end{align*}
The signal is carried by the diagonal part of the noiseless term, where the same row $\z_j^{(i)}$ appears both in the STE factor and in the activation mismatch. The off-diagonal part is a cross term between different rows and is controlled separately. Using $\mu^\prime(x)=\theta(x)=\1_{\{x>0\}}$, we write
\small
\begin{align*}
    \mathrm{noiseless \; term} &= \underbrace{ \frac{1}{N}\sum_{i=1}^N\sum_{j=1}^m
   v_j^2\,
   \Bigl(\1_{\bigl\{\z_j^{(i)\top} \w>0\bigr\}}\,\z_j^{(i)}\Bigr)\;
   \Bigl(\1_{\{\z_j^{(i)\top} \w>0\}} - \1_{\{\z_j^{(i)\top} \w^*>0\}}\Bigr)}_{\text{squared term}}\\
& \qquad + \underbrace{\frac{1}{N}
\sum_{i=1}^N\sum_{\substack{j,k =1\\j\neq k}}^m
   v_j\,v_k\,
   \Bigl(\1_{\bigl\{\z_j^{(i)\top} \w>0\bigr\}}\,\z_j^{(i)}\Bigr)\;
   \Bigl(\1_{\{\z_k^{(i)\top} \w>0\}} - \1_{\{\z_k^{(i)\top} \w^*>0\}}\Bigr)}_{\text{cross term}}.
\end{align*}
\normalsize
Among these three pieces, the squared term is the informative one: it is the part that produces the linear drift toward $\w^*$. The next single-row estimate isolates this mechanism and shows why the STE-gradient behaves like a restricted approximate inverse on the quantized set $\mathbb Q_1$.

For the squared term, define
\begin{align*}
\widetilde{g}^{(j)}(\w)
&= \frac{1}{N}
\sum_{i=1}^N
   \Bigl(\1_{\{\z_j^{(i)\top} \w>0\}} - \1_{\{\z_j^{(i)\top} \w^*>0\}}\Bigr)  \Bigl(\1_{\bigl\{\z_j^{(i)\top} \w>0\bigr\}}\,\z_j^{(i)}\Bigr)\;,
\end{align*}
so that the squared term is given by $\sum_{j=1}^m
   v_j^2\,\widetilde{g}^{(j)}(\w)$. To make the notation light, define
\[
D(\w;\z^{(i)})
  \;:=\; \1_{\{\z^{(i)\top} \w>0\}}
          - \1_{\{\z^{(i)\top} \w^*>0\}}
\]  and
\[
\widetilde{g}(\w) =
\frac{1}{N}\sum_{i=1}^N D(\w;\z^{(i)}) \1_{\bigl\{\z^{(i)\top} \w>0\bigr\}}\,\z^{(i)}.
\]

\begin{theorem}
\label{thm:RAIC_single_term}
Under the same conditions in Theorem~\ref{thm:RAIC_for_two_layer_quantized_neural_networks_appendix}, there exist some positive universal constants $c_7$ and $C_4$ such that, with probability at least $1 - 3 \exp(-c_7 n) $, we have
\[
\|\w-\w^* - \tau \cdot \widetilde{g}(\w)\|_\infty \le C_4\sqrt{\frac{n}{N}}.
\]
\end{theorem}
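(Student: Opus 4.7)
The plan is to show that $\widetilde{g}(\w)$ concentrates uniformly over $\w\in\mathbb{Q}_1$ around an expectation that equals $(\w-\w^*)/\tau$ exactly, after which the theorem becomes an $\ell_\infty$ deviation bound closed by a union bound over the finite set $\mathbb{Q}_1$.

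First I would simplify the summand. Since $D(\w;\z)\,\1_{\{\z^\top\w>0\}}$ is nonzero only when $\z^\top\w>0$ and $\z^\top\w^*<0$, in which case it equals $1$, we may write
\[
\widetilde{g}(\w) = \frac{1}{N}\sum_{i=1}^N \1_{A(\w,\z^{(i)})}\,\z^{(i)}, \qquad A(\w,\z) := \{\z^\top\w>0,\ \z^\top\w^*<0\}.
\]
The $\z\mapsto-\z$ symmetry of the Gaussian gives $\mathbb{E}[\1_A\z] = \tfrac12\mathbb{E}[(\1_{\{\z^\top\w>0\}}-\1_{\{\z^\top\w^*>0\}})\z]$, and projecting $\z$ onto $\w$ and using independence of the orthogonal component yields the standard identity $\mathbb{E}[\1_{\{\z^\top\w>0\}}\z]=\w/\sqrt{2\pi}$. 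Combining the two gives $\mathbb{E}[\widetilde{g}(\w)]=(\w-\w^*)/\tau$ on the nose, so the problem reduces to bounding $\widetilde{g}(\w)-\mathbb{E}[\widetilde{g}(\w)]$ in $\ell_\infty$.

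Second, for each fixed $\w$ and coordinate $k\in[n]$ I would decompose the standard basis vector as $\e_k = a_k\w + b_k\w^\perp + \boldsymbol{c}_k$, where $\w^\perp$ is the unit vector in $\mathrm{span}(\w,\w^*)$ orthogonal to $\w$ and $\boldsymbol{c}_k\perp\mathrm{span}(\w,\w^*)$. Setting $u=\z^\top\w$, $v=\z^\top\w^\perp$, and $r=\boldsymbol{c}_k^\top\z$ (three independent standard Gaussians, with $r$ independent of $\1_A$) gives the three-part orthogonal split
\[
\1_{A}\,z_k = a_k\,u\,\1_A + b_k\,v\,\1_A + r\,\1_A,
\]
corresponding to the three components handled by Lemmas \ref{lem: Concentration of main term 1}--\ref{lem: Concentration of main term 3} in the sketch. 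Each piece is dominated in absolute value by a standard Gaussian (since $|a_k|,|b_k|\le 1$ and $r\sim\mathcal{N}(0,1)$), hence sub-Gaussian with $O(1)$ $\psi_2$-norm uniformly in $\w$. The sub-Gaussian Hoeffding inequality then yields, for any fixed $(\w,k)$ and $t\lesssim 1$, a single-coordinate deviation bound of the form
\[
\Pr\!\Bigl(\bigl|\bigl[\widetilde{g}(\w)-\mathbb{E}\widetilde{g}(\w)\bigr]_k\bigr|>t\Bigr)\le 6\exp(-cNt^2).
\]

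Third, I would close with a union bound. Plugging in $t=C_4\sqrt{n/N}$ makes the per-pair failure probability at most $6\exp(-cC_4^2 n)$; union-bounding over $|\mathbb{Q}_1|=2^n$ vectors and $n$ coordinates gives a total failure probability of $6n\cdot 2^n\exp(-cC_4^2 n)\le 3\exp(-c_7 n)$ as soon as $C_4$ is chosen large enough that $cC_4^2>\log 2$, completing the proof. The main obstacle I anticipate is keeping the sub-Gaussian constants on $u\,\1_A$ and $v\,\1_A$ uniform in $\w$: the joint conditional distribution of $(u,v)$ given $A$ depends on the angle between $\w$ and $\w^*$, which varies widely as $\w$ ranges over $\mathbb{Q}_1$. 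The saving grace is the pointwise domination $|u\,\1_A|\le|u|$, $|v\,\1_A|\le|v|$, which forces $\|u\,\1_A\|_{\psi_2},\|v\,\1_A\|_{\psi_2}=O(1)$ regardless of that angle. A secondary subtlety is the degenerate case $\w=\pm\w^*$, where $\w^\perp$ is ill-defined; but there $A$ is empty (or of measure $1/2$) and both sides of the claimed inequality vanish or can be computed directly, so these boundary cases are trivial.
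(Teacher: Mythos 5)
Your proposal is correct and follows essentially the same route as the paper's own proof: the exact mean identity $\mathbb{E}[\widetilde{g}(\w)]=(\w-\w^*)/\tau$, a three-part orthogonal decomposition over the span of $\{\w,\w^*\}$ plus its orthogonal complement, pointwise domination by a Gaussian giving $\psi_2$-norms uniform in $\w$, sub-Gaussian Hoeffding, a union bound over the $n\cdot 2^n$ coordinate--vector pairs, and separate treatment of the degenerate cases $\w=\pm\w^*$. The only differences are cosmetic: you derive the mean via the $\z\mapsto-\z$ symmetry and the wedge-indicator simplification rather than invoking the prior-work lemma the paper cites, and you expand $\e_k$ in the basis $\{\w,\w^\perp\}$ instead of decomposing $\z$ along $(\w-\w^*)/\|\w-\w^*\|$ and $(\w+\w^*)/\|\w+\w^*\|$, which is just a rotation of the same two-dimensional subspace.
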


The following corollary of Theorem~\ref{thm:RAIC_single_term} gives the concentration bound of the squared term:
\begin{cor}
\label{cor:noiseless_part_bound}
Under the same conditions in Theorem~\ref{thm:RAIC_for_two_layer_quantized_neural_networks_appendix}, there exist some positive universal constants $c_7$ and $C_4$ such that, with probability at least $1 - 3m \exp(-c_7 n)$,
\begin{align*}
\nonumber
&\left \|\|\v\|^2(\w-\w^*)  - \tau  \sum_{j=1}^m
   v_j^2\,\widetilde{g}^{(j)}(\w) \right \|_\infty \le C_4 \|\v\|_2^2 \sqrt{\frac{n}{N}}
\end{align*}
\normalsize
for all $\w \in \mathbb{Q}_1$.
\end{cor}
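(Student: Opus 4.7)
\medskip

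\noindent\textbf{Proof proposal for Corollary~\ref{cor:noiseless_part_bound}.} The plan is to reduce the corollary directly to Theorem~\ref{thm:RAIC_single_term} by observing that the expression $\|\v\|^2(\w-\w^*) - \tau \sum_{j=1}^m v_j^2 \widetilde g^{(j)}(\w)$ decomposes cleanly as a $v_j^2$-weighted sum of identical single-row error terms, each of which is controlled by Theorem~\ref{thm:RAIC_single_term}. A simple union bound over the $m$ rows, followed by the triangle inequality in $\ell_\infty$, yields the claim.

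\smallskip

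\noindent\textbf{Step 1: Algebraic decomposition.} Since $\sum_{j=1}^m v_j^2 = \|\v\|^2$, I would rewrite
\begin{equation*}
\|\v\|^2(\w-\w^*) - \tau \sum_{j=1}^m v_j^2\, \widetilde g^{(j)}(\w)
= \sum_{j=1}^m v_j^2 \Bigl[(\w-\w^*) - \tau\, \widetilde g^{(j)}(\w)\Bigr].
\end{equation*}
This purely algebraic identity isolates, for each $j$, the quantity whose $\ell_\infty$ norm is exactly the object of Theorem~\ref{thm:RAIC_single_term} (with the generic Gaussian sequence $\{\z^{(i)}\}$ in the theorem instantiated as $\{\z_j^{(i)}\}_{i=1}^N$, the $j$-th row across the $N$ samples).

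\smallskip

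\noindent\textbf{Step 2: Apply Theorem~\ref{thm:RAIC_single_term} row-by-row and union bound.} By Assumption~1, for each fixed $j\in[m]$ the vectors $\{\z_j^{(i)}\}_{i=1}^N$ are i.i.d.\ $\mathcal N(0,\I_n)$, so Theorem~\ref{thm:RAIC_single_term} applies and yields an event $\mathcal E_j$ of probability at least $1-3\exp(-c_7 n)$ on which
\begin{equation*}
\bigl\| (\w-\w^*) - \tau\, \widetilde g^{(j)}(\w) \bigr\|_\infty \;\le\; C_4\sqrt{n/N} \qquad \text{for all } \w\in\mathbb{Q}_1.
\end{equation*}
A union bound over $j=1,\dots,m$ then gives that the intersection $\bigcap_{j=1}^m \mathcal E_j$ holds with probability at least $1-3m\exp(-c_7 n)$, simultaneously for all $\w\in\mathbb{Q}_1$ and all $j$.

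\smallskip

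\noindent\textbf{Step 3: Triangle inequality and conclusion.} On this event, I would apply the triangle inequality in $\ell_\infty$ to the decomposition of Step~1:
\begin{equation*}
\Bigl\| \|\v\|^2(\w-\w^*) - \tau \sum_{j=1}^m v_j^2 \widetilde g^{(j)}(\w) \Bigr\|_\infty
\;\le\; \sum_{j=1}^m v_j^2 \bigl\|(\w-\w^*) - \tau\, \widetilde g^{(j)}(\w)\bigr\|_\infty
\;\le\; C_4 \|\v\|^2 \sqrt{n/N},
\end{equation*}
which is precisely the bound in the corollary (uniformly in $\w\in\mathbb{Q}_1$).

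\smallskip

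\noindent\textbf{Expected main obstacle.} There is essentially no analytic obstacle here: all heavy lifting lives in Theorem~\ref{thm:RAIC_single_term}, and the reduction is a one-line decomposition plus a union bound. The only minor point worth verifying is that Theorem~\ref{thm:RAIC_single_term} indeed holds uniformly over $\w\in\mathbb{Q}_1$ on a single event (otherwise one would need a further union bound over the $2^n$ sign patterns, which is why Assumption~1 includes $m\lesssim 2^n$ and the theorem statement already builds in uniformity via the RAIC-type argument). Given this, the union bound factor of $m$ is absorbed cleanly into the probability, matching the $1-3m\exp(-c_7 n)$ stated in the corollary.
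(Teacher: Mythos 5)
Your proposal is correct and follows essentially the same route as the paper: decompose via $\sum_j v_j^2 = \|\v\|^2$, apply Theorem~\ref{thm:RAIC_single_term} to each $\widetilde g^{(j)}$ (already uniform over $\w\in\mathbb{Q}_1$), take a union bound over the $m$ rows to get the $1-3m\exp(-c_7 n)$ probability, and finish with the triangle inequality in $\ell_\infty$. No gaps.
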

\begin{proof}[Proof of Corollary~\ref{cor:noiseless_part_bound}.]
By the triangle inequality,
\begin{align*}
\nonumber
&\left \|\|\v\|^2(\w-\w^*)  - \tau  \sum_{j=1}^m
v_j^2\,\widetilde{g}^{(j)}(\w) \right \|_\infty
\le \sum_{j=1}^m v_j^2 \left \|(\w-\w^*)  - \tau
\widetilde{g}^{(j)}(\w) \right \|_\infty \\
&\le \sum_{j=1}^m v_j^2  C_4 \sqrt{\frac{n}{N}}= C_4 \|\v\|^2 \sqrt{\frac{n}{N}},
\end{align*}
where the second inequality is by applying Theorem~\ref{thm:RAIC_single_term} to each $\widetilde{g}^{(j)}$ for $j \in [m]$ and by the union bound of the probabilities.
\end{proof}

\begin{proof}[Proof of Theorem~\ref{thm:RAIC_single_term}.]
Our proof strategy for Theorem~\ref{thm:RAIC_single_term} adapts techniques from the NBIHT analysis for 1-bit compressed sensing \cite{chen2024optimal,friedlander2021nbiht, matsumoto2024binary}.
In particular, for $\w \in \mathbb{Q}_1\setminus \{\pm \w^*\}$, we consider the following orthogonal decomposition of $\z^{(i)}$:
\begin{equation*}
\z^{(i)} = \inner{\z^{(i)},  {\w - \w^* \over \|\w - \w^*\|} }  {\w - \w^* \over \|\w - \w^*\|} + \inner{\z^{(i)},  {\w + \w^* \over \|\w + \w^*\|} }  {\w + \w^* \over \|\w + \w^*\|} + \b_i(\w),
\end{equation*}
where $\b_i(\w)$ is orthogonal to $\w - \w^*$ and $\w + \w^*$. Note that $\w - \w^*$ and $\w + \w^*$ are also orthogonal since $\|\w\| = \|\w^*\| = 1$.

Using this three-component decomposition of $\z^{(i)}$, we then establish the bound as follows:

\begin{align*}
\nonumber
&\|\w - \w^*- \tau \cdot \widetilde{g}(\w)\|_\infty \\
&= \left \|\w - \w^* - {\tau  \over N} \sum_{i=1}^N \Bigl(D(\w;\z^{(i)})\Bigr)\Bigl(\1_{\bigl\{\z^{(i)\top} \w>0\bigr\}}\,\z^{(i)}\Bigr) \right \|_\infty\\
&\le \left \|\w - \w^* - {\tau \over N}  \sum_{i=1}^N \Bigl(D(\w;\z^{(i)})\Bigr)  \inner{\1_{\bigl\{\z^{(i)\top} \w>0\bigr\}}\,\z^{(i)},  {\w - \w^* \over \|\w - \w^*\|} }  {\w - \w^* \over \|\w - \w^*\|} \right \|_\infty \\
&\quad + \left \|  {\tau \over N}   \sum_{i=1}^N \Bigl(D(\w;\z^{(i)})\Bigr)  \inner{\1_{\bigl\{\z^{(i)\top} \w>0\bigr\}}\,\z^{(i)},  {\w + \w^* \over \|\w + \w^*\|} }  {\w + \w^* \over \|\w + \w^*\|} \right \|_\infty \\
&\quad + \left \|  {\tau \over N}    \sum_{i=1}^N \Bigl(D(\w;\z^{(i)})\Bigr) \1_{\bigl\{\z^{(i)\top} \w>0\bigr\}}  \b_i(\w)  \right \|_\infty \\
&\le  \left | \|\w - \w^*\| - {\tau \over N}  \sum_{i=1}^N \Bigl(D(\w;\z^{(i)})\Bigr) \1_{\bigl\{\z^{(i)\top} \w>0\bigr\}}  \inner{\z^{(i)},  {\w - \w^* \over \|\w - \w^*\|} } \right | \\
&\quad +  \left |   {\tau \over N}   \sum_{i=1}^N \Bigl(D(\w;\z^{(i)})\Bigr) \1_{\bigl\{\z^{(i)\top} \w>0\bigr\}} \inner{ \z^{(i)},  {\w + \w^* \over \|\w + \w^*\|} }  \right | \\
&\quad + \left \|{\tau \over N}    \sum_{i=1}^N \Bigl(D(\w;\z^{(i)})\Bigr) \1_{\bigl\{\z^{(i)\top} \w>0\bigr\}}  \b_i(\w) \right \|_\infty\\
&\le C_3 \tau \sqrt{n \over N} + C_3' \tau \sqrt{n \over N} + C_3'''\,\tau\;\sqrt{\frac{n}{N}}\\
&\le C_4\sqrt{\frac{n}{N}}
\end{align*}
where we have applied Lemmas \ref{lem:Concentration_of_main_term_1}, \ref{lem:Concentration_of_main_term_2}, and \ref{lem:Concentration_of_main_term_3} followed by the triangle inequality. 
The common idea behind these three lemmas is to fix
\(\w\in\mathbb Q_1\), identify the population mean of each projected component,
and then apply sub-Gaussian concentration followed by a union bound over the
finite quantized set \(\mathbb Q_1\). The first projected component
recovers the population drift in the \(\w-\w^*\) direction, with magnitude of
order \(\|\w-\w^*\|\). The second and
residual components are centered,  by the orthogonality of
\(\w-\w^*\) and \(\w+\w^*\) since $\|\w\|=\|\w^*\|=1$, and by projection onto the orthogonal complement.
Consequently, the deviations of all three components from their population
contributions are controlled uniformly, for the relevant
\(\w\in\mathbb Q_1\setminus\{\pm\w^*\}\), at order \(O(\sqrt{n/N})\). For more details, see the proofs of Lemmas \ref{lem:Concentration_of_main_term_1}, \ref{lem:Concentration_of_main_term_2}, and \ref{lem:Concentration_of_main_term_3} in the Appendix. 

It remains to combine the success probabilities of the three estimates. 

The success probability follows from the union bound by taking the minimum of the positive constants $c_2, c_2', c_2'''$ in the lemmas such that
$1 - \exp(-c_2 n) - \exp(-c_2' n) - \exp(-c_2''' n) \ge 1 - 3\exp(-c_7 n)$, where $c_7$ is the minimum.

For the case $\w = -\w^*$, the same argument holds but with a simpler decomposition of $\z^{(i)}$.
\begin{align*}
\z^{(i)} = \inner{\z^{(i)},  {\w - \w^* \over \|\w - \w^*\|} }  {\w - \w^* \over \|\w - \w^*\|}  + \widetilde{\b_i}(\w),
\end{align*} where $\widetilde{\b_i}(\w)$ is orthogonal to $\w - \w^* = -2\w^*$. Applying the Lemmas \ref{lem:Concentration_of_main_term_1} and \ref{lem:Concentration_of_main_term_3} as before yields
\begin{align*}
\nonumber
&\|\w - \w^*- \tau \cdot \widetilde{g}(\w)\|_\infty \\
&\le  \left | \|\w - \w^*\|_2 - {\tau \over N}  \sum_{i=1}^N D(\w;\z^{(i)}) \1_{\bigl\{\z^{(i)\top} \w>0\bigr\}}  \inner{\z^{(i)},  {\w - \w^* \over \|\w - \w^*\|} } \right | \\
&\quad + \left \|{\tau \over N}    \sum_{i=1}^N D(\w;\z^{(i)}) \1_{\bigl\{\z^{(i)\top} \w>0\bigr\}} \widetilde{\b_i}(\w) \right \|_\infty\\
&\le C_3 \tau \sqrt{n \over N} + C_3'''\,\tau\;\sqrt{\frac{n}{N}} \le C_4\sqrt{\frac{n}{N}}.
\end{align*}
In the case when $\w = \w^*$,  the bound in Theorem~\ref{thm:RAIC_single_term} holds trivially.

\end{proof}

The remaining pieces are centered. The cross term is controlled by independence between different Gaussian rows, and the noise term is controlled by the sub-Gaussian noise assumption. Their estimates are stated and proved in Appendix~\ref{sec:corss_term} and Appendix~\ref{sec:noise_term}. Combining these estimates with Corollary~\ref{cor:noiseless_part_bound} gives the full STE-gradient concentration theorem.

\begin{proof}[Proof of Theorem~\ref{thm:RAIC_for_two_layer_quantized_neural_networks_appendix}.]
Finally, combining the bound for the squared term part in Corollary~\ref{cor:noiseless_part_bound} and the cross term part in Lemma~\ref{lem:Concentration_of_cross_term_linf_alt}, and the noisy part in Lemma~\ref{lem:Concentration_of_noise_term_uniform} and applying the triangle inequality, we have
\begin{align*}
\nonumber
&\left \|\|\v\|^2(\w-\w^*)  - \tau  \tilde{\nabla}_{\w} L(\w) \right \|_\infty \\
&\le C_4 \|\v\|^2 \sqrt{\frac{n}{N}} + C_3\,\tau\,\vnorm{\v}_1^2 \sqrt{\frac{n + 2\ln m}{N}} + C_8 \tau K_\xi \|\v\|_1 \sqrt{\frac{n + 2\ln m}{N}} \\
&\le  (C' \|\v\|_1^2 + C''K_\xi\|\v\|_1)\sqrt{ { n \over N} } \quad \small({\text{since $m \lesssim O(2^n)$ by Assumption~\ref{assump:data} and $\|\v\| \le \|\v\|_1$}}) \normalsize
\end{align*}
\normalsize
for all $\w \in \mathbb{Q}_1$. Dividing both sides by $\tau$ yields the bound in Theorem~\ref{thm:RAIC_for_two_layer_quantized_neural_networks_appendix}. The success probability follows by taking the minimum of the positive constants $c_7, c_5, c_6$ in the lemmas such that
$1 - 3m \exp(-c_7 n) - \exp(-c_5 n) - \exp(-c_6 n) \ge 1 - 5m\exp(-c n)$, where $c$ is the minimum.
\end{proof}

\subsection{Finite-Sample Convergence Analysis}
Following Theorem \ref{thm:RAIC}, the first step of the STE-gradient method can be rewritten as:
\begin{align}
\nonumber
\x^t &= \x^{t-1} - \eta_t\tilde{\nabla}_{\w} L(\w^{t-1})  \\
\nonumber
&= \x^{t-1} - \eta_t \, {\|\v\|^2 \over \tau} (\w^{t-1}-\w^*) +  \eta_t \left( {\|\v\|^2 \over \tau} (\w^{t-1}-\w^*)  - \tilde{\nabla}_{\w} L(\w^{t-1})\right) \\
&= \x^{t-1} + \underbrace{\eta_t {\|\v\|^2 \over \tau} (\w^* - \w^{t-1})}_{\text{drift term}} +  \underbrace{\eta_t O\left( \sqrt{n \over N} \right)}_{\text{perturbation term}},  \label{eq:noisy_main_eq_2_body}
\end{align}
The key to understanding the convergence behavior, both in the ergodic and non-ergodic cases, lies in analyzing the component-wise dynamics governed by \eqref{eq:noisy_main_eq_2_body}, as outlined in the remainder of this section.

\subsubsection{Ergodic Convergence Analysis}
\label{sec:ergodic_constant_proof_main}


We first outline the proof of Theorem~\ref{thm:Informal_ergodic_informal_nonasymp} and then give the representative constant-step-size argument in detail. This inclusion is meant to make the occupation-time mechanism explicit; the extension to general step sizes and nonzero initialization is deferred to Appendix~\ref{sec:general_decaying_step_size_analysis_final_corrected_appendix}.

\paragraph{Drift dominates the perturbation when $N$ is large.} An important observation is that since $\w^t = \mathcal{Q}(\x^t)$, it holds that $\mathrm{sign}(w_j^t) = \mathrm{sign}(x_j^t)$, $\forall \; j\in[n]$. Whenever
$\mathrm{sign}(w_j^{t-1}) = \mathrm{sign}(x_j^{t-1})\neq \mathrm{sign}(w_j^*)$,
the drift term actively steers the dynamics of $\{x_j^t\}$ towards the desired sign, $\mathrm{sign}(w_j^*)$. Note that each component of the drift term has the scale of $O(1/\sqrt{n})$.
This drift term dominates the perturbation term when the perturbation-to-drift strength ratio, denoted by $\rho = O(n/\sqrt{N})$, is sufficiently small. Crucially, $\rho$ is governed by the concentration bound in Theorem~\ref{thm:RAIC} and scales inversely with $\sqrt{N}$, ensuring that the drift prevails over the perturbation when the sample size~$N$ is large enough.

\paragraph{Occupation time analysis.} Moreover, when the perturbation-to-drift strength ratio $\rho$ is sufficiently small, each coordinate $x_j^t$ exhibits a pattern of brief excursions into the incorrect sign region, followed by longer phases in the sign region of $w_j^*$. Crucially, both $w_j^t$ and $x_j^t$ spend proportionally more time in the correct sign region, which approximately a factor of $1/\rho$ longer than in the incorrect one. As a result, the average of $w_j^t$ over time $t$ converges to $w_j^*$, up to an error of $O(\rho/\sqrt{n}) = O(\sqrt{n/N})$. Therefore, $\|\overline{\w}^T - \w^*\|_\infty$ also has an error floor of $O(\sqrt{n/N})$. This is the mechanism formalized next in the constant-step-size setting and extended to the general step-size case in Appendix~\ref{sec:general_decaying_step_size_analysis_final_corrected_appendix}.

\medskip
\noindent We now make the above mechanism precise in the representative constant-step-size setting. The purpose of including this argument here is to show the core idea behind the ergodic analysis: the concentration theorem converts the STE-gradient dynamics into a deterministic drift system with a uniformly bounded perturbation; once the drift is stronger than the perturbation, incorrect-sign excursions become both short and infrequent. The full general-step-size result is obtained later in Appendix~\ref{sec:general_decaying_step_size_analysis_final_corrected_appendix} by extending this same occupation-time argument beyond constant step sizes and zero initialization.

\begin{theorem}[Non-Asymptotic Ergodic Convergence]
\label{thm:ergodic_informal_nonasymp}
Suppose that Assumptions~\ref{assump:data} and~\ref{assump:noise} hold. Assume the sample size $N$ is large enough such
\[
N \ge \max\left( C_1 n^2, \left(  \frac{5n (C' \|\v\|_1^2 + C''K_\xi \|\v\|_1)}{2\|\v\|^2} \right)^2  \right)
\]
to ensure that the perturbation-to-drift strength ratio $\rho := \frac{n (C' \|\v\|_1^2 + C''K_\xi \|\v\|_1)}{2\|\v\|^2 \sqrt{N}} < 1/5$ for some positive universal constants $C_1, C'$ and $C''$.
Then, with probability at least $1 - 5m\exp(-cn)$ for some positive universal constant $c > 0$, the following statements hold:

For a constant step size $\eta_0$ and $\x^0 = 0$, the ergodic average satisfies for a sufficiently large $T$:
\[
\left\| \overline{\w}^T - \w^* \right\|_\infty \le \frac{1}{\sqrt{n}} \cdot \frac{2 \rho}{1-\rho} + O\left(\frac{1}{T\sqrt{n}}\right).
\]
Moreover, if the number of iterations $T$ is large enough such  that $T > \frac{4(1-\rho)}{1-3\rho}$
the exact recovery is possible, by quantizing the ergodic average, i.e., \[
\mathcal{Q}(\overline{\w}^T) = \w^*
\]
\end{theorem}

\begin{rmk}
Note that the bound in Theorem~\ref{thm:ergodic_informal_nonasymp} for the ergodic average implies the bound
\[
\left\| \overline{\w}^T - \w^* \right\|_\infty \le \frac{1}{\sqrt{n}} \cdot \frac{2 \rho}{1-\rho} + O\left(\frac{1}{T\sqrt{n}}\right) = O\left(\sqrt{\frac{n}{N}}\right) + O\left(\frac{1}{\sqrt{n} T}\right),
\] in Theorem~\ref{thm:Informal_ergodic_informal_nonasymp} (the informal version) because $\rho = O\left({n \over \sqrt{N}} \right)$.
\end{rmk}

\begin{proof}[Proof of Theorem~\ref{thm:ergodic_informal_nonasymp}.]
The first step of the STE-gradient method can be rewritten as:
\begin{align}
\nonumber
\x^t &= \x^{t-1} - \eta_t\tilde{\nabla}_{\w} L(\w^{t-1})  \\
&= \x^{t-1} + \underbrace{\eta_t {\|\v\|^2 \over \tau} (\w^* - \w^{t-1})}_{\text{drift term}} +  \underbrace{\eta_t  \left( {\|\v\|^2 \over \tau} (\w^{t-1}-\w^*)  - \tilde{\nabla}_{\w} L(\w^{t-1})\right) }_{\text{perturbation term } \eta_t\boldsymbol{\epsilon}^{t}}. \label{eq:noisy_main_eq_2}
\end{align}

The unscaled perturbation term, $\boldsymbol{\epsilon}^{t}$, captures the approximation error given by Theorem \ref{thm:RAIC_for_two_layer_quantized_neural_networks_appendix}.
Suppose each component of $|\boldsymbol{\epsilon}^{t}|$ is bounded by
$$
\widetilde{\Delta}:=  {1 \over \tau} (C' \|\v\|_1^2 + C''K_\xi \|\v\|_1) \sqrt{ \frac{n}{N} }  = O((\|\v\|_1^2 + K_\xi \|\v\|_1) )\sqrt{ \frac{n}{N} }.
$$
This bound holds with probability at least $1 - 5m\exp(-cn)$ for some positive universal constant $c > 0$, as established by the $\ell_\infty$ concentration bound in Theorem~\ref{thm:RAIC_for_two_layer_quantized_neural_networks_appendix}.

The key to understanding the convergence behavior lies in analyzing the drift of the dynamics of a single coordinate $x_j^t$ governed by Equation \eqref{eq:noisy_main_eq_2}. 

The  drift term is $\eta_t {\|\v\|^2 \over \tau} \left(  w_j^* - \frac{1}{\sqrt{n}}\mathrm{sign}(x_j^{t-1}) \right)$ and its magnitude is $\eta_t {2 \|\v\|^2 \over \tau \sqrt{n}}$ when the $\frac{1}{\sqrt{n}}\mathrm{sign}(x_j^{t-1}) \neq w_j^*$ or the sign of $x_j^{t-1}$ differs from the sign of $w_j^*$. Thus, this justifies the perturbation-to-drift strength ratio $\rho$ definition in Theorem~\ref{thm:ergodic_informal_nonasymp} since
\[
\frac{\eta_t \cdot \widetilde{\Delta}}{\eta_t \cdot 2 \|\v\|^2 / (\tau \sqrt{n})} = \frac{(C'\|\v\|_1^2 +  C''K_\xi\|\v\|_1) n}{2\|\v\|^2 \sqrt{N}} = \rho
\]


To make the notation light and connect with the cycle analysis later, and since we consider the constant step size case in this section, we also define
\[
a := \frac{\eta_0 \|\v\|^2}{\tau \sqrt{n}} = {\eta_0\widetilde{\Delta} \over 2 \rho}.
\]

Before analyzing the zero-crossing behavior, let us recall the key parameters from our setup: $\rho$ captures the ratio between the upper bounds of the perturbation and drift magnitudes, and $2a$ is the size of the drift (including the step size $\eta_0$).

{\bf Step 1: Zero--Crossing Times.} This analysis uses the parameters $a$ and the perturbation-to-drift strength ratio $\rho$, where $\rho$ incorporates the  perturbation bound $\widetilde{\Delta}$ such that $\eta_0|\epsilon^t_{j}| \le \eta_0\widetilde{\Delta} = 2a\rho$.

First, assume that $w_j^* = 1/\sqrt{n}$ without loss of generality. The other case follows by symmetry.
Define the sequence of zero–crossing times for coordinate $j$:
\[
 t_1 < t_2 < t_3 < \cdots,
\]
where \(t_{2m-1}\) is the \(m\)th time \(x_j^t\) crosses from nonnegative ($\ge 0$) to negative ($<0$), and \(t_{2m}\) is the \(m\)th time it crosses from negative back to nonnegative.
When the process crosses from nonnegative to negative at time \(t_{2m-1}\), we have $x_j^{t_{2m-1}-1} \ge 0$. The update dynamics give:
\[
0 > x_j^{t_{2m-1}} = x_j^{t_{2m-1}-1} - \eta_0\,{\|\v\|^2 \over \tau}\Bigl(\frac{1}{\sqrt{n}}-(+\frac{1}{\sqrt{n}})\Bigr) + \eta_0\,\epsilon^{t_{2m-1}}_{j} = x_j^{t_{2m-1}-1} + \eta_0\,\epsilon^{t_{2m-1}}_{j}.
\]
Since $x_j^{t_{2m-1}-1} \ge 0$ and the perturbation term is bounded as $\epsilon^{t_{2m-1}}_{j} \ge -\widetilde{\Delta}$:
\[
x_j^{t_{2m-1}} \ge 0 - \eta_0\widetilde{\Delta} = -2a\rho.
\]
We show that the next crossing $t_{2m}$ occurs immediately, $t_{2m} = t_{2m-1}+1$. At time $t = t_{2m-1}+1$, the preceding state was $x_j^{t_{2m-1}}$, which is negative ($w_j^{t_{2m-1}}=-1/\sqrt{n}$). The dynamics give:
\begin{align*}
x_j^{t_{2m-1}+1} &= x_j^{t_{2m-1}} - \eta_0\,{\|\v\|^2 \over \tau}\Bigl(-\frac{1}{\sqrt{n}}-(+\frac{1}{\sqrt{n}})\Bigr) + \eta_0\,\epsilon^{t_{2m-1}+1}_{j} \\
&= x_j^{t_{2m-1}} + 2a + \eta_0\,\epsilon^{t_{2m-1}+1}_{j}.
\end{align*}

Using $x_j^{t_{2m-1}} \ge -2a\rho$ and the perturbation bound $\epsilon^{t_{2m-1}+1}_{j} \ge -\widetilde{\Delta}$:
\[
x_j^{t_{2m-1}+1} \ge (-2a\rho) + 2a - \eta_0\widetilde{\Delta} = -2a\rho + 2a - 2a\rho = 2a(1-2\rho).
\]
Since $\rho < 1/5 < 1/2$, we have $2a(1-2\rho) > 0$. Thus $x_j^{t_{2m-1}+1} > 0$, confirming $t_{2m} = t_{2m-1}+1$. Each negative event lasts exactly one time step.

Similarly, suppose that $w_j^* = -1/\sqrt{n}$. Then $t_{2m}$ is the $m$-th time $x_j^t$ crosses from negative ($<0$) to non-negative ($\ge 0$). This implies $x_j^{t_{2m}-1} < 0$ ($w_j^{t_{2m}-1} = -1/\sqrt{n}$). The update gives:
\[
0 \le x_j^{t_{2m}} = x_j^{t_{2m}-1} - \eta_0\,{\|\v\|^2 \over \tau}\Bigl(-\frac{1}{\sqrt{n}} - (-\frac{1}{\sqrt{n}})\Bigr) + \eta_0\,\epsilon^{t_{2m}}_{j} = x_j^{t_{2m}-1} + \eta_0\,\epsilon^{t_{2m}}_{j}.
\]
Since $x_j^{t_{2m}-1} < 0$ and the perturbation term is bounded $\epsilon^{t_{2m}}_{j} \le \widetilde{\Delta}$:
\[
x_j^{t_{2m}} \le 0 + \eta_0\widetilde{\Delta} = 2a\rho.
\]
Now consider the next step $t_{2m}+1$. We have $x_j^{t_{2m}} \ge 0$, so $w_j^{t_{2m}} = 1/\sqrt{n}$.
\[
 x_j^{t_{2m}+1} = x_j^{t_{2m}} - \eta_0\,{\|\v\|^2 \over \tau}\Bigl(+\frac{1}{\sqrt{n}} - (-\frac{1}{\sqrt{n}})\Bigr) + \eta_0\,\epsilon^{t_{2m}+1}_{j} = x_j^{t_{2m}} - 2a + \eta_0 \epsilon^{t_{2m}+1}_{j}.
\]
Using $x_j^{t_{2m}} \le 2a\rho$ and the perturbation bound $\epsilon^{t_{2m}+1}_{j} \le \widetilde{\Delta}$:
\[
x_j^{t_{2m}+1} \le (2a\rho) - 2a + \eta_0\widetilde{\Delta} = 2a\rho - 2a + 2a\rho = 2a(2\rho-1) = -2a(1-2\rho).
\]
Since $\rho < 1/5$, this is strictly negative. Thus $x_j^{t_{2m}+1} < 0$. This confirms the one-step reset property symmetrically.

In both cases, the coordinate resets to the ``correct" sign region within one time step of entering the ``incorrect" sign region, provided $\rho < 1/5$.

{\bf  Step 2: Occupation Time Analysis.} Without loss of generality, assume that $w_j^* = 1/\sqrt{n}$.
The dynamics of the $j$-th component $x_j^t$ exhibit the following cycle structure. Define a cycle as:
\begin{itemize}
    \item A \emph{negative phase}: one time step during which $x_j^t<0$ (with $\operatorname{sign}(x_j^t)=-1$).
    \item A \emph{positive phase}: a block of consecutive time steps during which $x_j^t\ge 0$ (with $\operatorname{sign}(x_j^t)=1$).
\end{itemize}
After a negative event at time $t$, we have $x_j^{t+1}\ge 2a(1-2\rho) > 0$ by the result of Step 1. To force the $j$-th component $x_j^k$ from this positive level back below zero, a cumulative downward displacement of at least $2a(1-2\rho)$ is required. Since each step in the positive phase may decrease $x_j^k$ by at most $2a\rho$ (in the worst case because the perturbation term $\eta_0|\epsilon^{t}_j| \le 2a\rho$), it takes at least
\[
L:=\left\lceil\frac{2a(1-2\rho)}{2a\rho}\right\rceil = \left\lceil\frac{(1-2\rho)}{\rho}\right\rceil
\] steps. Thus, a complete cycle consists of one negative step followed by at least $L$ positive steps, giving a minimum cycle length of $1+L$. We know that $L \ge (1-2\rho)/\rho$, which implies $1+L \ge 1 + (1-2\rho)/\rho = (1-\rho)/\rho$.

If $x_j^t\ge 0$ for every $t=1,\dots,T$, then $\frac{1}{T}\sum_{t=1}^T\operatorname{sign}(x_j^t)=1$, and the desired lower bound is immediate. Hence, assume that at least one negative event occurs. Let $T_0 = t_1$ be the first time in $[1,T]$ the iterate $x_j^t$ becomes negative. By definition, this means $x_j^t \ge 0$ for $t=1, \dots, T_0-1$, and $x_j^{T_0} < 0$.
Let $\gamma$ be the number of negative events (steps where $x_j^t < 0$) in the interval $[T_0+1, T]$. Let $T' = T - T_0$. Each negative step in $[T_0+1, T]$ initiates a cycle of minimum length $1+L$. The number of full cycles starting in $[T_0+1, T]$ is at most $\lfloor T' / (1+L) \rfloor$. The remaining partial cycle can contain at most one negative step. Therefore, $\gamma \le \lfloor T' / (1+L) \rfloor + 1$.
Using the inequality $1+L \ge (1-\rho)/\rho$, we have $1/(1+L) \le \rho/(1-\rho)$. Substituting this into the bound for $\gamma$:
\[
\gamma \le \frac{T'}{1+L} + 1 \le T' \frac{\rho}{1-\rho} + 1
\]


{\bf Step 3: Bounding the Average Sign.} Now, we find a lower bound for the sum of signs over the entire interval $[1, T]$:
\[
\sum_{t=1}^T \operatorname{sign}(x_j^t) = \sum_{t=1}^{T_0} \operatorname{sign}(x_j^t) + \sum_{t=T_0+1}^{T} \operatorname{sign}(x_j^t)
\]
For the term part, using the definition of $T_0$, we have
\[
\sum_{t=1}^{T_0} \operatorname{sign}(x_j^t) = \sum_{t=1}^{T_0-1} (+1) + \operatorname{sign}(x_j^{T_0}) = (T_0 - 1)  + (-1) = T_0 - 2
\]
For the second part, let $p$ be the number of non-negative steps ($x_j^t \ge 0$) in the interval $[T_0+1, T]$. We have $\gamma+p = T'$. The sum over this interval is given by:
\[
\sum_{t=T_0+1}^T \operatorname{sign}(x_j^t) = \gamma \times (-1) + p \times (+1) = -\gamma + (T' - \gamma) = T' - 2\gamma
\]
Using the upper bound $\gamma \le T' \frac{\rho}{1-\rho} + 1$ yields
\[
\sum_{t=T_0+1}^T \operatorname{sign}(x_j^t) = T' - 2\gamma \ge T' - 2 \left( T' \frac{\rho}{1-\rho} + 1 \right) = T' \left( 1 - \frac{2\rho}{1-\rho} \right) - 2
\]
Combining the two parts for the total sum:
\[
\sum_{t=1}^T \operatorname{sign}(x_j^t) \ge (T_0 - 2) + \left[ T' \left( 1 - \frac{2\rho}{1-\rho} \right) - 2 \right]
\]
We substitute $T' = T - T_0$:
\[
\sum_{t=1}^T \operatorname{sign}(x_j^t) \ge T_0 - 2 + (T - T_0) \left( 1 - \frac{2\rho}{1-\rho} \right) - 2
\]
Thus, we have
\begin{align}
\label{eq:sum_bound}
\sum_{t=1}^T \operatorname{sign}(x_j^t) \ge T - (T - T_0) \frac{2\rho}{1-\rho} - 4
\end{align}

Finally, we derive the lower bound for the average sign over the interval $[1, T]$ by dividing \eqref{eq:sum_bound} by $T$:
\[
\frac{1}{T} \sum_{t=1}^T \operatorname{sign}(x_j^t) \ge \frac{1}{T} \left[ T - (T - T_0) \frac{2\rho}{1-\rho} - 4 \right] = 1 - \frac{T - T_0}{T} \frac{2\rho}{1-\rho} - \frac{4}{T}
\]
Since $1 \le T_0 \le T$, we have $T - T_0 < T$, which implies $\frac{T-T_0}{T} < 1$ for all $T_0 \ge 1$. As $\frac{2\rho}{1-\rho} > 0$ for $0 < \rho < 1/2$, we can bound the term: $-\frac{T-T_0}{T} \frac{2\rho}{1-\rho} \ge -(1) \frac{2\rho}{1-\rho}$.
Therefore,
\[
\frac{1}{T} \sum_{t=1}^T \operatorname{sign}(x_j^t) \ge 1 - \frac{2\rho}{1-\rho} - \frac{4}{T}
\]
By the same argument and symmetry for the case $w_j^* = -1/\sqrt{n}$, with the possible initial mismatch due to the convention $\operatorname{sign}(0)=1$ absorbed into the $O(1/T)$ term, we have
\[
\frac{1}{T} \sum_{t=1}^T \operatorname{sign}(x_j^t) \le -1 + \frac{2\rho}{1-\rho} + \frac{4}{T}
\]

Since the component-wise error bound
\[
|w_j^* - \overline{w}_j^T| \le \frac{1}{\sqrt{n}} \left( \frac{2\rho}{1-\rho} + \frac{4}{T} \right)
\]
holds uniformly for all components $j=1, \dots, n$, as established before, we have:
\begin{align}
\label{eq:final_linf_bound}
\left\|\w^* -  \overline{\w}^T\right\|_\infty \le \frac{1}{\sqrt{n}} \left( \frac{2\rho}{1-\rho} + \frac{4}{T} \right)
\end{align}
This bound shows that the ergodic average of the iterates converges to a ball around the target $\w^*$ in the $\ell_\infty$ norm, with the radius of the ball shrinking as the perturbation-to-drift strength ratio  $\rho$ decreases, plus a term that vanishes as the number of iterations $T$ increases.

\paragraph{Exact Recovery Condition.}
Exact coordinate-wise sign recovery requires that the sign of the average iterate matches the target sign for each component:
\[
\operatorname{sign}\!\Bigl(\frac{1}{T}\sum_{t=1}^T w_j^t\Bigr)
\;=\;
\operatorname{sign}(w_j^*) \quad \text{for all } j=1, \dots, n,
\] which is equivalent to
\[
\mathcal{Q}\left(\frac{1}{T}\sum_{t=1}^T \w^t\right)
\;=\;
\w^*.
\]
This is guaranteed if $\|\w^* - \tfrac{1}{T}\sum_{t=1}^T \w^t\|_\infty < \tfrac{1}{\sqrt{n}}$. Using the bound derived in \eqref{eq:final_linf_bound}, this condition is satisfied if $
\frac{1}{\sqrt{n}} \left( \frac{2\rho}{1-\rho} + \frac{4}{T} \right) < \frac{1}{\sqrt{n}}$,
which is equivalent to the condition $\frac{2\rho}{1-\rho} + \frac{4}{T} < 1$; this condition simplifies to:
\[
\frac{4}{T} < \frac{1-3\rho}{1-\rho}
\]
Hence, from the assumption $\rho < 1/5 < 1/3$, the exact recovery is achieved when
\[
T > \frac{4(1-\rho)}{1-3\rho}.
\]
\end{proof}

\subsubsection{Last-Iterate Convergence Analysis}
When $N$ is sufficiently large, we can show that the last iterate $w_j^t$ exactly reaches $w_j^*$ and then departs from it (assuming the noise $\xi \not\equiv 0$). The proof of Theorem \ref{thm:global_coordinate_recovery_noisy} is provided in Appendix \ref{sec:Non_Ergodic_Behavior_appendix} (constant step size) and Appendix \ref{sec:general_decaying_step_size_analysis_final_corrected_appendix} (general step size).

\paragraph{Visit to $\w^*$} There exists a time $t$ at which the signs of all components of $\x^t$ simultaneously match those of $\w^*$. Intuitively, this occurs when the perturbation-to-drift strength ratio $\rho$ is sufficiently small, allowing the drift to reliably steer each coordinate toward its correct sign. Note that achieving a simultaneous sign match across all components is significantly more difficult than achieving component-wise agreement. This explains why last-iterate convergence requires more samples than ergodic convergence.

\paragraph{Escape from $\w^*$} When label noise $\xi$ is present, the STE-gradient $\tilde{\nabla}_{\w} L(\w^*)$ does not vanish. Leveraging this fact, we show that the iterates $\w^t$ repeatedly escape from $\w^*$ with probability roughly at least $1/2$, over the randomness of the sample draw. We remark that our predicted escape probability of $1/2$ is rather conservative, and we believe it can be much higher in practice.

\section{Numerical Experiments}\label{sec:experiments}

\subsection{Gaussian Data}

\paragraph{Sample Complexity.} We empirically check the sample complexity required to obtain the optimal binary weights $\w^*$, in both non-ergodic and ergodic scenarios. Fixing $m=128$, for each dimension $n$, we run Algorithm 1 across various sampling ratios $N/n$. At each sampling level, we generate 100 random instances of $\left(\{\Z_i\}_{i=1}^N, \w^*, \v \right)$, and the noises $\xi^{(i)}$'s are set to 0, i.e., no label noise. We record the recovery rate of successfully finding $\w^*$ via STE-based optimization. We run $T = 500$ iterations and define success as achieving $\mathcal{Q}(\overline{\w}^T) = \w^*$ in the ergodic convergence case, and $\w^T = \w^*$ in the non-ergodic (last-iterate) case.

The left and middle panels of Figure \ref{fig} show that the transition boundaries for the success rates in both cases appear to be linear, suggesting that the required sample complexity $N$ scales quadratically with $n$ in both cases. Therefore, our theoretical sample complexity bound in the ergodic convergence case given by Theorem \ref{thm:Informal_ergodic_informal_nonasymp} appears to be sharp, whereas in the non-ergodic case, Theorem \ref{thm:global_coordinate_recovery_noisy} falls short of achieving the empirically observed (near-)quadratic scaling, indicating that our theoretical estimate seems to be conservative in this case. However, the plots also show that the sample complexity required for last-iterate convergence is higher than that for ergodic convergence, aligning with our theoretical findings.

\paragraph{Recurrence Behavior.} We also empirically observe the phenomenon of the recurrence behavior of the last iterate $\w^T$ around $\w^*$ under noisy labels, as predicted by Theorem \ref{thm:global_coordinate_recovery_noisy}. In this case, we set $m=128, n = 25, N = 140$ and add Gaussian noises $\{\xi^{(i)}\}_{i=1}^N$ with unit variance to the labels generated according to (\ref{eq:labels}). The right penal of Figure \ref{fig} shows that $\{\w^t\}_{t\geq1}$ indeed repeatedly visits and escapes from the minimum $\w^*$, which is also consistent with our theory, i.e., Theorem \ref{thm:global_coordinate_recovery_noisy}.


\begin{figure}[h!]
\setlength{\tabcolsep}{0.5pt}
    \centering
    \begin{tabular}{ccc}
    \includegraphics[width=0.33\linewidth]{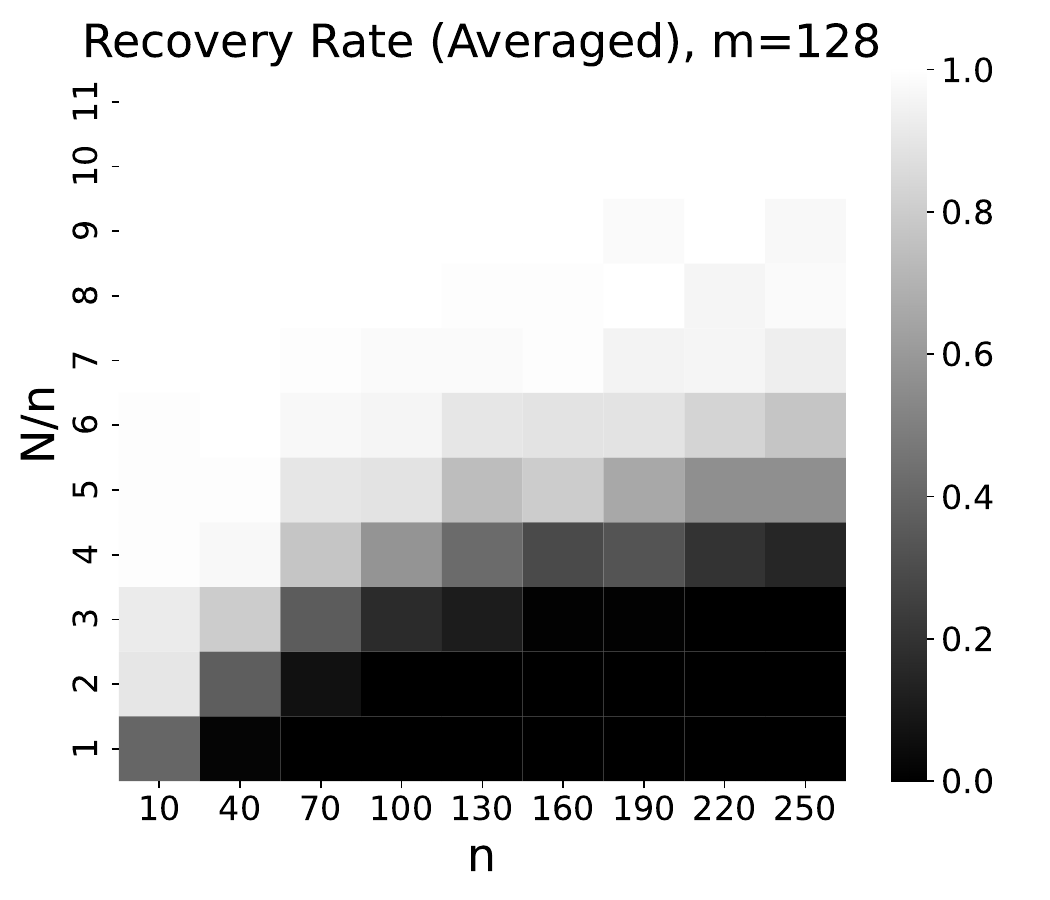} &
    \includegraphics[width=0.33\linewidth]{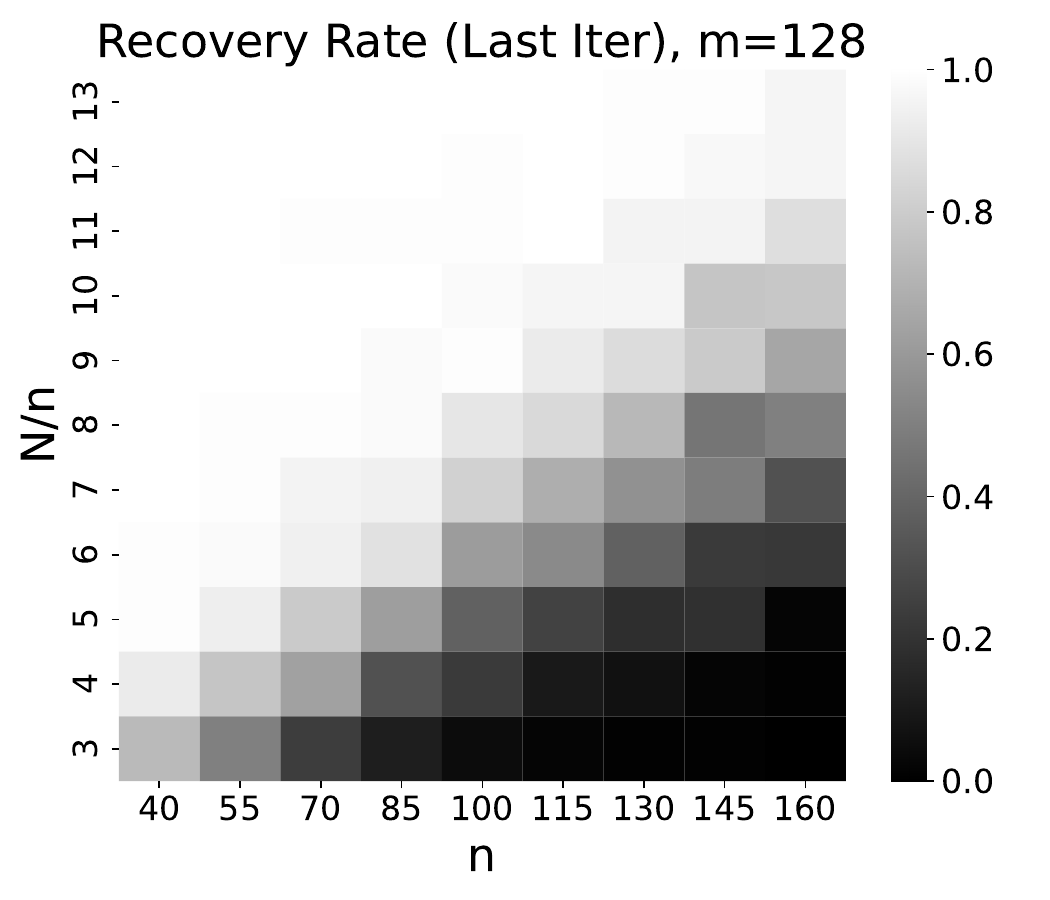} &
    \includegraphics[width=0.33\linewidth]{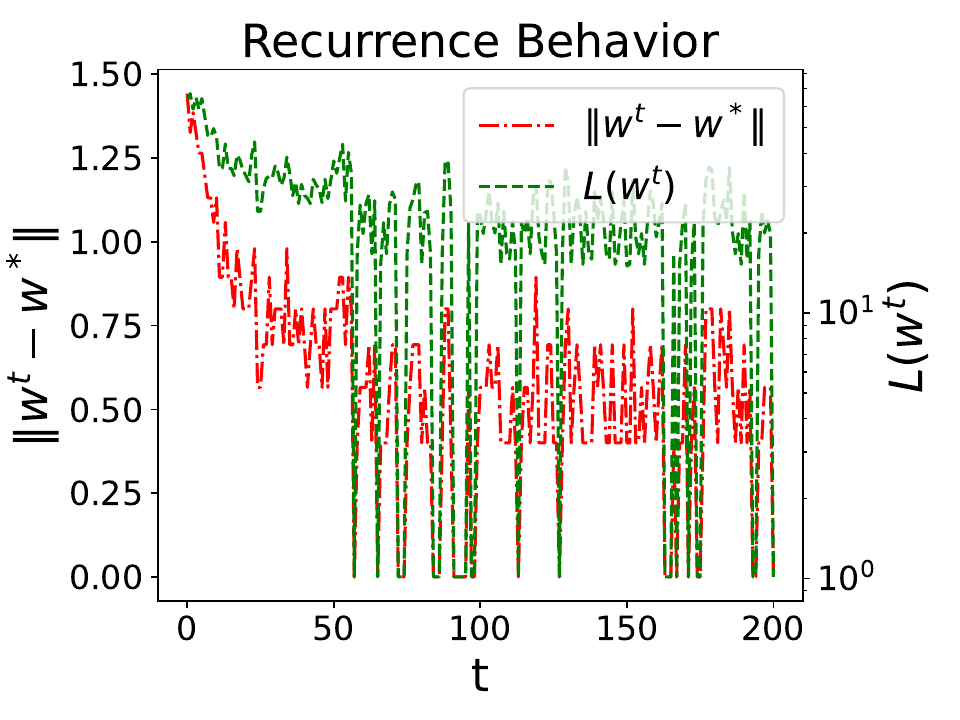}
    \end{tabular}
    \caption{{\bf Left:} Recovery rate for the ergodic averaged iterate $\mathcal{Q}(\overline{\w}^T)$ in the noiseless case. {\bf Middle:} Recovery rate for the last-iterate $\w^T$ in the noiseless case. {\bf Right:} Curves for $\|\w^t-\w^*\|$ vs. $t$ and $L(\w^t)$ vs. $t$ in the noisy case with $m=128, n = 25, N = 140$.} 
    \label{fig}
\end{figure}

\subsection{Non-Gaussian Data and Normalization}
Although the Gaussian data assumption may seem restrictive and unrealistic, we empirically show that the STE can fail to perform reliably for general non-Gaussian distributed data, raising the question of why this heuristic remains prevalent in practical quantization. To address this, we demonstrate that applying a normalization procedure significantly enhances the STE's performance, restoring its effectiveness across broader data distributions. 

\paragraph{Sub-Gaussian Data.} We further examine the behavior of Algorithm~\ref{alg:ste} on sub-Gaussian data. Our experiments indicate that its convergence guarantees deteriorate significantly for general sub-Gaussian distributions. In particular, under the uniform distribution $\mathcal{U}[0,1]$ and the Bernoulli distribution $\mathcal{B}(0.5)$ with $\mathrm{Pr}(X=0)=\mathrm{Pr}(X=1)=0.5$, both of which possess non-zero mean, the STE fails to consistently converge to an optimal solution (with a zero loss) as the sample size increases. As shown in Figure~\ref{fig:nongaussian}, the success rates (over 100 randomized runs) remain low and do not exhibit improvement with larger sample sizes $N$. When the data are normalized by subtracting the mean and dividing by the standard deviation, convergence improves notably; this corresponds to $\sqrt{12}\,(X-0.5)$ for $X \sim \mathcal{U}[0,1]$ and $2X-1$ for $X \sim \mathcal{B}(0.5)$.

\paragraph{MNIST Classification.} These findings emphasize the crucial role of normalization in ensuring the effectiveness of STE-based training. To further substantiate this observation beyond the weight recovery setting, we conduct experiments on MNIST classification using a binary CNN comprising two convolutional layers (Conv) with $5\times5$ kernels and a single fully connected (FC) layer:
$$
\mathrm{Conv} (5\times 5\times6)  \to \mathrm{MaxPool} (2\times 2) \to
 \mathrm{Conv} (5\times 5\times 16) \to  \mathrm{MaxPool} (2\times 2) \to
\mathrm{FC}.
 $$
In this configuration, both the weights and activations of the convolutional layers are binarized with an adaptive multiplicative full-precision scalar, following common practice \cite{rastegari2016xnor, yin2018blended}, while the fully connected layer is retained in full precision. The cross-entropy loss is used, and the ReLU STE is adopted for activation quantization during training. We observe that, in the absence of normalization—applied neither to the input data (entering the first convolutional layer) nor to the intermediate features (entering the second convolutional layer), the training accuracy plateaus around 30\%. In contrast, applying normalization for both the input data and intermediate features (via a non-affine batch normalization layer\footnote{The standard batch normalization with trainable affine parameters will remain effective in this case.} \cite{bnorm_15,halfwave_17}) substantially improves performance, achieving approximately 96\% accuracy.

\medskip

\noindent Taken together, our results validate the practical significance of the Gaussian data assumption central to our theoretical analysis and underscore the essential role of normalization in effective quantization.

\begin{figure}
    \centering
    \begin{tabular}{cc}
        \includegraphics[width=0.46\linewidth]{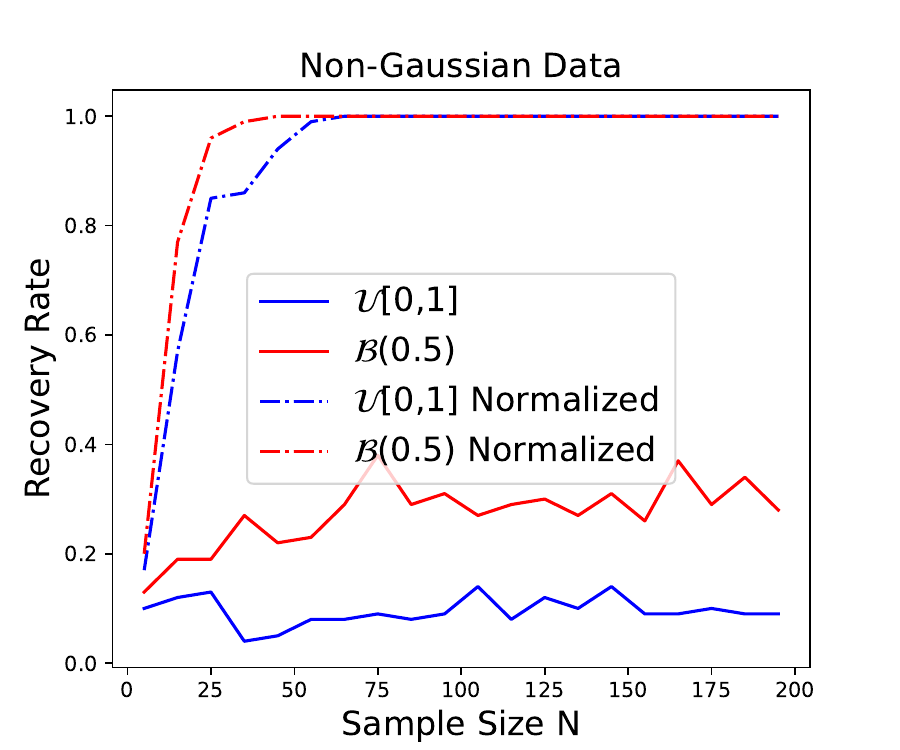} &
         \includegraphics[width=0.46\linewidth]{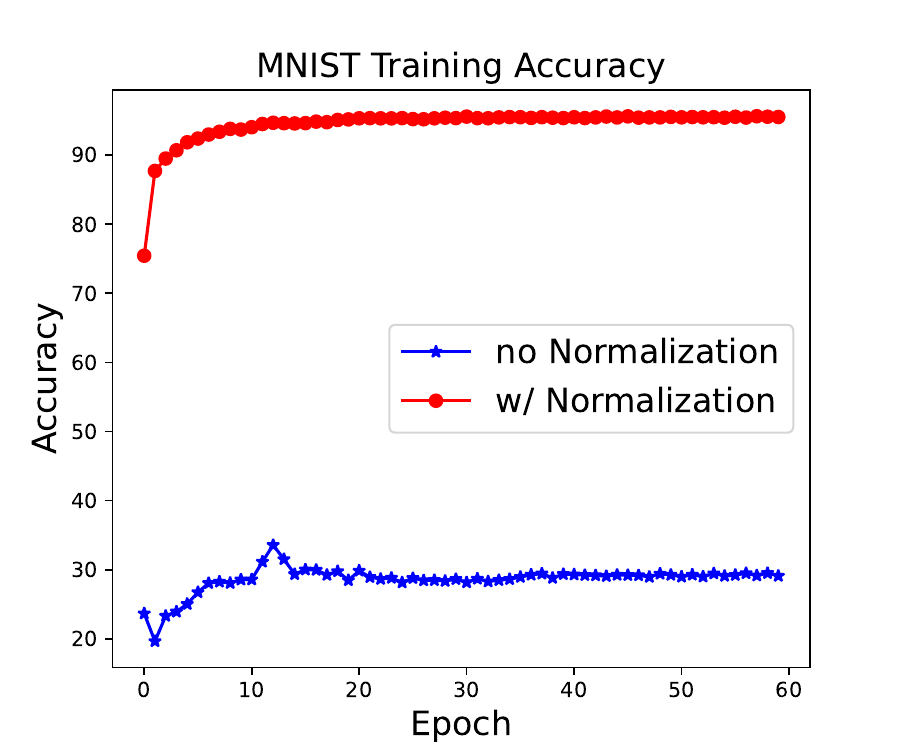}
    \end{tabular}
    \caption{{\bf Left:} Plot of recovery rate vs. sample size $N = 5, 15, \dots, 195,$ for uniformly and Bernoulli distributed data, $\mathcal{U}[0,1]$ and $\mathcal{B}(0.5)$, respectively, using Algorithm~\ref{alg:ste}. The data dimensions are fixed as $m = 32$, $n = 9$.  {\bf Right:} Training accuracy curves of a binarized CNN, comparing models trained with and without normalization applied to the input data and intermediate features.}
    \label{fig:nongaussian}
\end{figure}

\section{Concluding Remarks}
We presented the first sample complexity analysis for STE-based training for quantization-aware learning, emphasizing the pivotal role of sample size in ensuring its effectiveness. Our analysis focuses on a two-layer neural network with both binary weights and binary activations. The associated STE-gradient method employs dual STEs to handle the discreteness in both the objective and the constraints. We proved that $O(n^2)$ samples are sufficient to guarantee ergodic convergence to the optimal binary weights $\w^*$, while $O(n^4)$ samples suffice for a non-ergodic guarantee, where the iterates exactly reach $\w^*$ infinitely often. Notably, the use of STE generally leads the iterates to repeatedly diverge from the minimizer in weight quantization; however, we argue that this behavior is actually beneficial, as it prevents the premature stagnation of iterates. In addition, we empirically demonstrate that convergence guarantees break down under general non-Gaussian data distributions but can be restored through normalization.

\section*{Acknowledgments}
We thank Prof. Simon Foucart (Texas A\&M University) for inspiring discussions. This work was partially supported by NSF grants IIS-2110546, DMS-2151235, DMS-2208126, DMS-2219904, DMS-2309520, and a Qualcomm Gift Award.

\bibliographystyle{siamplain}
\bibliography{reference}

\newpage
\appendix

\paragraph{\Large{Supplementary Materials}} \mbox{}


\section{Concentration Bound for STE-Gradient}\label{sec:bound_STE}
\label{sec:Finite_sample_approximation_of_coarse_gradient_by_RAIC_appendix}
The formal STE-gradient concentration theorem is stated and proved in Section~\ref{sec:concentration}. We provide here the auxiliary estimates used in that proof.

\subsection{Concentration Bound for the Squared Term}\label{sec:squared_term}
Define
\begin{align*}
\widetilde{g}^{(j)}(\w)
&= \frac{1}{N}
\sum_{i=1}^N
   \Bigl(\1_{\{\z_j^{(i)\top} \w>0\}} - \1_{\{\z_j^{(i)\top} \w^*>0\}}\Bigr)  \Bigl(\1_{\bigl\{\z_j^{(i)\top} \w>0\bigr\}}\,\z_j^{(i)}\Bigr)\;,
\end{align*}
then the squared term  of the noiseless part of STE-gradient is given by $\sum_{j=1}^m
   v_j^2\,\widetilde{g}^{(j)}(\w)$.

To make the notation light, we define
\[
D(\w;\z^{(i)})
  \;:=\; \1_{\{\z^{(i)\top} \w>0\}}
          - \1_{\{\z^{(i)\top} \w^*>0\}}
\]  and
\[
\widetilde{g}(\w) =
\frac{1}{N}\sum_{i=1}^N D(\w;\z^{(i)}) \1_{\bigl\{\z^{(i)\top} \w>0\bigr\}}\,\z^{(i)}.
\]

The single-row RAIC estimate and the squared-term corollary are stated and proved in Section~\ref{sec:concentration}. We now give the three auxiliary concentration estimates used in the proof of Theorem~\ref{thm:RAIC_single_term}.

\subsubsection{Concentration of the first term}

\label{section:Concentration_of_main_term_1}
\begin{lemma}
\label{lem:Concentration_of_main_term_1}
Suppose that Assumptions~\ref{assump:data}, \ref{assump:noise}, and~\ref{assump:sample_size} hold. Let $\w^* \in \mathbb{Q}_1$  and $\tau = 2\sqrt{2\pi}$. Then, there exist positive universal constants $c_2$ and $C_3$ such that with probability at least $1 - \exp(-c_2 n)$, we have
    \[
     \left | \|\w - \w^*\| - {\tau \over N}  \sum_{i=1}^N \Bigl(D(\w;\z^{(i)})\Bigr) \1_{\bigl\{\z^{(i)\top} \w>0\bigr\}}  \inner{\z^{(i)},  {\w - \w^* \over \|\w - \w^*\|} }  \right |  \le C_3 \tau \sqrt{n \over N},
    \] for all $\w\in \mathbb{Q}_1 $ with $\w \neq \w^*$. Here $D(\w;\z^{(i)}) = \1_{\{\z^{(i)\top} \w>0\}} - \1_{\{\z^{(i)\top} \w^*>0\}}$.
\end{lemma}
\begin{proof}[Proof of Lemma~\ref{lem:Concentration_of_main_term_1}.]
Let $\u_1 \;=\;{\w - \w^* \over \|\w - \w^*\|} \,\in\mathbb S^{n-1}$ and let
\[
X_i(\w)
\;=\;
D(\w;\z^{(i)}) \,
\1_{\bigl\{\z^{(i)\top} \w>0\bigr\}} \,\z^{(i)\top} \u_1,
\]
then, we have
\[
{1 \over N}  \sum_{i=1}^N D(\w;\z^{(i)}) \1_{\bigl\{\z^{(i)\top} \w>0\bigr\}}  \inner{\z^{(i)},  {\w - \w^* \over \|\w - \w^*\|} }
\;=\;
\frac{1}{N}\sum_{i=1}^N X_i(\w).
\]
From Lemma 1 in \cite{long2023recurrence} (see, also \cite{yin2018blended}), we have
\begin{align*}
\mathbb{E}_{\Z} \left[D(\w;\z^{(i)}) \,
\1_{\bigl\{\z^{(i)\top} \w>0\bigr\}} \,\z^{(i)} \right] = {1 \over 2\sqrt{2\pi}} (\w-\w^*),
\end{align*}
so taking the inner product with \(\u_1\) and the linearity of expectation gives
\[
\mathbb{E}_{\Z} [X_i(\w)] = \u_1^\top \mathbb{E}_{\Z} \left[D(\w;\z^{(i)}) \,
\1_{\bigl\{\z^{(i)\top} \w>0\bigr\}} \,\z^{(i)}\right]
\;=\;
\frac{\u_1^\top(\w-\w^*)}{2\sqrt{2\pi}}
\;=\;
\frac{\|\w-\w^*\|}{2\sqrt{2\pi}}.
\]
Since $|D(\w;\z^{(i)})| \le 1$, we have
\[
|X_i(\w)|
\;\le\;
\1_{\{\z^{(i)\top} \w>0\}}\;|\z^{(i)\top} \u_1|
\;\le\;
|\z^{(i)\top} \u_1|.
\]
Since $\z^{(i)}$ is the standard Gaussian, $\|\z^{(i)\top} \u_1 \|_{\psi_2} \le C$ for some universal constant $C > 0$ uniformly for all $\w \in \mathbb{Q}_1\setminus\{\w^*\}$. So $X_i(\w)$ is also sub-Gaussian with sub-Gaussian norm uniformly bounded by $C$ for all $\w \in \mathbb{Q}_1\setminus\{\w^*\}$.
Since $\|X_i(\w)\|_{\psi_2} \le C$, $\|X_i(\w) -   \mathbb{E}_{\Z} [X_i(\w)]\|_{\psi_2} \le C'$ for some other universal constant $C' > 0$ for all $\w \in \mathbb{Q}_1\setminus\{\w^*\}$ by  \cite{vershynin2018high}.

By the generalized Hoeffding's bound \cite{vershynin2018high}, for any \(t>0\),
\[
\Pr\left(\left|\frac1N\sum_{i=1}^N (X_i(\w) - \mathbb{E}_{\Z}[X_i(\w)]) \right|>t\right)
\;\le\;
2\exp\bigl(-c\,N\,t^2 / (C')^2 \bigr),
\] for some universal constant $c > 0$.
Now choose a numeric constant \(C''>0\) such that $C'' > C'\sqrt{(2\ln 2)/c}$, and thus
\[
c \, N (C''\sqrt{n/N})^2 / (C')^2 = c \, n \, C''^2 / (C')^2 > (2\ln 2) n.
\]
If we set
\[
t = C''\sqrt{\frac{n}{N}} \quad \Longleftrightarrow \quad c N t^2 / (C')^2 > (2\ln 2) n,
\]
then with probability at least \(1 - 2e^{-c'''n}\) where \(c''':=c(C'')^2/(C')^2 > 2\ln2\).
Substituting $t$ and $\mathbb{E}_{\Z}[X_i(\w)] = \frac{\|\w-\w^*\|}{2\sqrt{2\pi}}$ in the Hoeffding's bound, we have
\[
\left|\frac1N\sum_{i=1}^N X_i(\w)
-\frac{\|\w-\w^*\|}{2\sqrt{2\pi}}\right|
\;\le\;
C''\sqrt{\frac{n}{N}},
\]
or
\[
\left|\|\w-\w^*\|
-\frac{\tau}{N}\sum_{i=1}^N X_i(\w)\right|
\;\le\;
C''\,\tau\sqrt{\frac{n}{N}}.
\]
Let $C_3 = C''$. Since $\mathbb{Q}_1$ is a finite set with cardinality \(|\mathbb{Q}_1|=2^n\), a union bound over all \( \w \in \mathbb{Q}_1, \w \neq \w^*\) (at most $2^n-1$ possibilities) gives total failure probability
\[
< 2^n\;\times\;2e^{-c'''n}
\;=\;
2\exp\bigl(n\ln2 - c'''n\bigr)
\;=\;
2\exp\bigl(-(c'''-\ln2)\,n\bigr).
\]
Since \(c'''>2\ln2\), we have \(c'''-\ln2> \ln2\), and the failure probability decays as $\exp(-\Omega(n))$. Therefore, with probability at least $1 - \exp(-\Omega(n))$, the inequality
\[
\left|\|\w-\w^*\|
-\frac{\tau}{N}\sum_{i=1}^N X_i(\w)\right|
\;\le\;
C_3\,\tau\sqrt{\frac{n}{N}}
\]
holds simultaneously for all $\w\in\mathbb{Q}_1\setminus\{\w^*\}$, which completes the proof.
\end{proof}

\subsubsection{Concentration of the second term}
\begin{lemma}[Concentration of the second term]
\label{lem:Concentration_of_main_term_2}
Suppose that Assumptions~\ref{assump:data}, \ref{assump:noise}, and~\ref{assump:sample_size} hold. Let $\w^* \in \mathbb{Q}_1$  and $\tau = 2\sqrt{2\pi}$. Then there exist positive universal constants $c_2'$ and $C_3'$, with probability at least $1 - \exp(-c_2' n)$, such that
    \[
     \left | {\tau \over N}  \sum_{i=1}^N D(\w;\z^{(i)}) \1_{\bigl\{\z^{(i)\top} \w>0\bigr\}}  \inner{\z^{(i)},  {\w + \w^* \over \|\w + \w^*\| } }  \right |  \le C_3' \tau \sqrt{n \over N},
    \]
    for all $\w \in \mathbb{Q}_1\setminus\{-\w^*\}$. Here $D(\w;\z^{(i)}) = \1_{\{\z^{(i)\top} \w>0\}} - \1_{\{\z^{(i)\top} \w^*>0\}}$.
\end{lemma}

\begin{proof}[Proof of Lemma~\ref{lem:Concentration_of_main_term_2}.]
Let $\u_2 \;=\;{\w + \w^* \over \|\w + \w^*\|}$. Since $\w, \w^* \in \mathbb{Q}_1$ and $\w \ne - \w^*$, the denominator is non-zero and $\u_2 \in \mathbb{S}^{n-1}$. Define the random variable:
\[
Y_i(\w) \;=\; D(\w;\z^{(i)}) \, \1_{\bigl\{\z^{(i)\top} \w>0\bigr\}} \,\z^{(i)\top} \u_2,
\]
The expression inside the absolute value in the lemma statement is $\frac{\tau}{N}\sum_{i=1}^N Y_i(\w)$.

Since $\mathbb{E}_{\Z} \left[D(\w;\z^{(i)}) \1_{\bigl\{\z^{(i)\top} \w>0\bigr\}} \,\z^{(i)}\right] = \frac{1}{2\sqrt{2\pi}} (\w-\w^*)$, by linearity of expectation,
\small
\[
\mathbb{E}_{\Z} [Y_i(\w)] \;=\; \inner{\mathbb{E}_{\Z} [D(\w;\z^{(i)}) \1_{\bigl\{\z^{(i)\top} \w>0\bigr\}} \,\z^{(i)}], \u_2} \;=\; \inner{{1 \over 2\sqrt{2\pi}} (\w-\w^*), {\w + \w^* \over \|\w + \w^*\|}}.
\]
\normalsize
Because $\|\w\| = \|\w^*\|$ for $\w, \w^* \in \mathbb{Q}_1$, the vectors $\w-\w^*$ and $\w+\w^*$ are orthogonal: $\inner{\w-\w^*, \w+\w^*} = \|\w\|^2 - \|\w^*\|^2 = 0$. Therefore,
\[
\mathbb{E}_{\Z} [Y_i(\w)] = 0.
\]
Next, we establish that $Y_i$ is sub-Gaussian. Since $|D(\w;\z^{(i)})| \le 1$, we have $|Y_i(\w)| \;\le\; |\z^{(i)\top} \u_2|$.
As $\z^{(i)} \sim \mathcal{N}(0, I_n)$ and $\u_2$ is a unit vector, $\z^{(i)\top} \u_2 \sim \mathcal{N}(0,1)$, which is sub-Gaussian. Since $|Y_i|$ is bounded by the magnitude of a standard sub-Gaussian random variable, $Y_i$ is itself sub-Gaussian with a sub-Gaussian norm $\|Y_i(\w)\|_{\psi_2} \le C'$ uniformly bounded by an universal constant $C'$ for all $\w \in \mathbb{Q}\setminus\{-\w^*\}$.

Since $Y_i(\w)$ are independent, mean-zero, sub-Gaussian random variables with uniformly bounded sub-Gaussian norm $C'$, we apply Hoeffding's bound \cite{vershynin2018high}, for any $t>0$:
\[
\Pr\left(\left|\frac{1}{N}\sum_{i=1}^N Y_i(\w) \right|>t\right)
\;\le\;
2\exp\bigl(-c\,N\,t^2 / (C')^2 \bigr),
\]
for some universal constant $c> 0$. Choose a constant $C''>0$ large enough such that $c''' := c(C'')^2/(C')^2 > 2\ln 2$. Let
\[
t \;=\; C''\sqrt{\frac{n}{N}}.
\]
Then, for a fixed $\w$ with $\w \ne - \w^*$, with probability at least $1 - 2e^{-c'''n}$:
\[
\Bigl|\frac{1}{N}\sum_{i=1}^N Y_i(\w)\Bigr| \;\le\; C''\sqrt{\frac{n}{N}}.
\]
Multiplying by $\tau = 2\sqrt{2\pi}$ yields:
\[
\Bigl|\frac{\tau}{N}\sum_{i=1}^N Y_i(\w)\Bigr| \;\le\; C''\,\tau\sqrt{\frac{n}{N}}.
\]
Let $C_3' = C''$. To ensure the above holds uniformly for all  $\w \in \mathbb{Q}_1\setminus\{-\w^*\}$, we use a union bound. The total number of such vectors is less than $2^{n}$. The total failure probability is bounded by
\[
< 2^{n}\;\times\;2e^{-c'''n}
\;=\;
2\exp\bigl(n\ln2 - c'''n\bigr)
\;=\;
2\exp\bigl(-(c'''-\ln2)\,n\bigr).
\]
Since $c'''>2\ln2$, let $c_2' = c''' - \ln 2 > 0$. The failure probability decays as $\exp(-\Omega(n))$. The condition $N \ge C_1' n^2$ ensures $t$ is appropriately scaled for the concentration regime. Therefore, with probability at least $1 - \exp(-c_2' n)$ (adjusting constant for the factor of 2), the inequality
\[
\Bigl|\frac{\tau}{N}\sum_{i=1}^N Y_i(\w)\Bigr| \;\le\; C_3'\,\tau\sqrt{\frac{n}{N}}
\]
holds simultaneously for all vectors $\w \in \mathbb{Q}_1$, completing the proof.
\end{proof}

\subsubsection{Concentration of the third term}
Recall that from the decomposition of $\z^{(i)}$, we have $\b_i(\w) = \z^{(i)} - \inner{\z^{(i)}, \u_1}\u_1 - \inner{\z^{(i)}, \u_2}\u_2$ is the component of $\z^{(i)}$ orthogonal to $\mathrm{span}\{\w-\w^*, \w+\w^*\}$, where $\u_1 = (\w-\w^*)/\|\w-\w^*\|$ and $\u_2 = (\w+\w^*)/\|\w+\w^*\|$. Recall also that $D(\w;\z^{(i)}) = \1_{\{\z^{(i)\top} \w>0\}} - \1_{\{\z^{(i)\top} \w^*>0\}}$.
Define the vector $R(\w) \in \mathbb{R}^n$ where
\[ R_j(\w) = \frac{\tau}{N} \sum_{i=1}^N D(\w;\z^{(i)}) \1_{\bigl\{\z^{(i)\top} \w>0\bigr\}} \inner{\b_i(\w), \e_j}. \]
\begin{lemma}
\label{lem:Concentration_of_main_term_3}
Suppose that Assumptions~\ref{assump:data}, \ref{assump:noise}, and~\ref{assump:sample_size} hold. Let $\w^* \in \mathbb{Q}_1$  and $\tau = 2\sqrt{2\pi}$. Then, there exist positive universal constants $c_2'''$ and $C_3'''$ such that with probability at least $1 - \exp(-c_2''' n)$,
    \[
    \| R(\w) \|_\infty \;\le\; C_3'''\,\tau\;\sqrt{\frac{n}{N}}
    \]
    holds simultaneously for all $\w \in \mathbb{Q}_1$ with $\w \neq \pm \w^*$.
\end{lemma}

\begin{proof}[Proof of Lemma~\ref{lem:Concentration_of_main_term_3}.]
Let $\w, \w^* \in \mathbb{Q}_1$ with $\w \ne \pm \w^*$. Define the scalar random variable for each coordinate $j \in \{1, \dots, n\}$:
\[ S_{i,j}(\w) = D(\w;\z^{(i)}) \1_{\bigl\{\z^{(i)\top} \w>0\bigr\}} \inner{\b_i(\w), \e_j}. \]
The $j$-th component of the vector $R(\w)$ is $R_j(\w) = \frac{\tau}{N} \sum_{i=1}^N S_{i,j}(\w)$.

First, we compute the expectation of $S_{i,j}$.  We know that $$
\mathbb{E}_{\Z} \left[D(\w;\z^{(i)}) \1_{\bigl\{\z^{(i)\top} \w>0\bigr\}} \,\z^{(i)}\right] = \frac{1}{2\sqrt{2\pi}}(\w-\w^*).
$$
Since $\b_i = \z^{(i)} - \inner{\z^{(i)}, \u_1}\u_1 - \inner{\z^{(i)}, \u_2}\u_2$, by linearity we have:
\small
\begin{align*}
\mathbb{E}_{\Z} [S_{i,j}] &= \mathbb{E}_{\Z} \left[ D(\w;\z^{(i)}) \1_{\bigl\{\z^{(i)\top} \w>0\bigr\}} \inner{\b_i(\w), \e_j} \right] \\
&= \mathbb{E}_{\Z} \left[ (D)\1 \left( \inner{\z^{(i)}, \e_j} - \inner{\z^{(i)}, \u_1}\inner{\u_1, \e_j} - \inner{\z^{(i)}, \u_2}\inner{\u_2, \e_j} \right) \right] \\
&= \inner{\mathbb{E}_{\Z}[(D)\1\z^{(i)}], \e_j} - \inner{\u_1, \e_j} \mathbb{E}_{\Z}[(D)\1\inner{\z^{(i)}, \u_1}] - \inner{\u_2, \e_j} \mathbb{E}_{\Z} [(D)\1\inner{\z^{(i)}, \u_2}] \\
&= \inner{\frac{1}{2\sqrt{2\pi}}(\w-\w^*), \e_j} - \inner{\u_1, \e_j} \mathbb{E}_{\Z}[X_i(\w))] - \inner{\u_2, \e_j} \mathbb{E}_{\Z}[Y_i(\w)] \\
&= \frac{1}{2\sqrt{2\pi}}(w_j-w_j^*) - \inner{\u_1, \e_j} \frac{\|\w-\w^*\|}{2\sqrt{2\pi}} - \inner{\u_2, \e_j} \cdot 0 \\
&= \frac{1}{2\sqrt{2\pi}}(w_j-w_j^*) - \frac{w_j-w^*_j}{\|\w-\w^*\|} \frac{\|\w-\w^*\|}{2\sqrt{2\pi}} = 0.
\end{align*}
\normalsize

Next, we establish that $S_{i,j}$ is sub-Gaussian. We have $|S_{i,j}| \le |\inner{\b_i, \e_j}|$. Since $\b_i$ is the projection of $\z^{(i)} \sim \mathcal{N}(0, I_n)$ onto the subspace orthogonal to $\u_1$ and $\u_2$, $\b_i$ is a mean-zero Gaussian vector within that subspace. The term $\inner{\b_i, \e_j}$ is the projection of this Gaussian vector onto $\e_j$, which makes $\inner{\b_i, \e_j}$ a mean-zero Gaussian random variable with variance $\le 1$. Since $|S_{i,j}| \le |\inner{\b_i, \e_j}|$ and $\inner{\b_i, \e_j}$ is Gaussian with variance $\le 1$ (hence sub-Gaussian), $S_{i,j}$ is also sub-Gaussian with a uniformly bounded sub-Gaussian norm $\|S_{i,j}\|_{\psi_2} \le C'$ for some universal constant $C'> 0$.

Because $S_{i,j}(\w)$ are independent (across $i$), mean-zero, sub-Gaussian random variables with uniformly bounded sub-Gaussian norm $C'$, we can apply Hoeffding's bound to their sum for fixed $j, \w$:
\[
\Pr\left(\left|\frac{1}{N}\sum_{i=1}^N S_{i,j} \right|>t\right)
\;\le\;
2\exp\bigl(-c\,N\,t^2 / (C')^2 \bigr).
\]
Next, we obtain a uniform bound over $j\in[n]$ and all $\w \in \mathbb{Q}_1\setminus \{\pm \w^*\}$, which are over fewer than $M = n \cdot 2^{n}$ events. We choose $t$ such that the failure probability for a single event is $\le \delta/M$, where $\delta = \exp(-c_2''' n)$ is the overall target failure probability. This requires $t^2 \gtrsim \frac{(C')^2}{cN} \ln(M/\delta) \approx \frac{n}{N}$. We set $t = 2 \frac{(C')^2}{cN} \ln(M/\delta) = C''\sqrt{n/N}$ for a sufficiently large constant $C''$.
By the union bound, with probability at least $1 - \delta = 1 - \exp(-c_2''' n)$:
\[ \left| \frac{1}{N} \sum_{i=1}^N S_{i,j} \right| \le t = C''\sqrt{\frac{n}{N}} \]
holds simultaneously for all $j\in[n]$ and all $\w \in \mathbb{Q}_1\setminus \{\pm \w^*\}$.
Multiplying by $\tau$ and taking the maximum over $j$:
\[ \| R(\w) \|_\infty = \max_{j\in[n]} \left| \frac{\tau}{N} \sum_{i=1}^N S_{i,j} \right| \le C'' \tau \sqrt{\frac{n}{N}}. \]
Setting $C_3''' = C''$ yields the claimed bound.
\end{proof}

\subsection{Concentration Bound for the Cross Term}\label{sec:corss_term}
Define the cross-term vector $T_{\mathrm{cross}}(\w) \in \mathbb{R}^n$ whose $p$-th component is given by
    \[ [T_{\mathrm{cross}}(\w)]_p = \sum_{j=1}^m \sum_{\substack{k=1\\k\neq j}}^m v_j\,v_k\, \frac{1}{N} \sum_{i=1}^N \Bigl(\1_{\{\z_j^{(i)\top} \w>0\}}\,[\z_j^{(i)}]_p\Bigr)\; \Bigl(\1_{\{\z_k^{(i)\top} \w>0\}} - \1_{\{\z_k^{(i)\top} \w^*>0\}} \Bigr). \]

\begin{lemma}[ $\ell_\infty$ Concentration of Cross Terms]
\label{lem:Concentration_of_cross_term_linf_alt}
Suppose that Assumptions~\ref{assump:data}, \ref{assump:noise}, and~\ref{assump:sample_size} hold. Let $\w^* \in \mathbb{Q}_1$  and $\tau = 2\sqrt{2\pi}$. Then, there exist positive universal constants $c_5$ and $C_3$ such that with probability at least $1 - \exp(-c_5 n)$,
\[ \| \tau T_{\mathrm{cross}}(\w) \|_\infty \le C_3\,\tau\,\vnorm{\v}_1^2 \sqrt{\frac{n + 2\ln m}{N}} \] holds simultaneously for all $\w \in \mathbb{Q}_1$.
\end{lemma}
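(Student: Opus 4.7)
The plan is to exploit the crucial independence between distinct rows $\z_j^{(i)}$ and $\z_k^{(i)}$ of the data matrix $\Z^{(i)}$---an independence absent in the squared-term analysis---which is what drives the better uniform-in-$(j,k)$ bound here. For $j\neq k$, write the per-sample summand as
$$A_i^{(j,k,p)}(\w) := \1_{\{\z_j^{(i)\top}\w>0\}}\,[\z_j^{(i)}]_p\,\bigl(\1_{\{\z_k^{(i)\top}\w>0\}}-\1_{\{\z_k^{(i)\top}\w^*>0\}}\bigr).$$
Because $\z_j^{(i)}$ and $\z_k^{(i)}$ are independent standard Gaussians, the expectation factorizes, and by the rotational symmetry of the Gaussian one has $\mathbb{E}[\1_{\{\z_k^{(i)\top}\w>0\}}]=\mathbb{E}[\1_{\{\z_k^{(i)\top}\w^*>0\}}]=1/2$ for any fixed nonzero direction. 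Hence $\mathbb{E}[A_i^{(j,k,p)}(\w)]=0$, which is the single most important feature of the cross term.

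Next, I would establish a sub-Gaussian control. Since the indicator difference lies in $\{-1,0,1\}$ and the other indicator in $\{0,1\}$, we have $|A_i^{(j,k,p)}(\w)|\le |[\z_j^{(i)}]_p|$, so $\|A_i^{(j,k,p)}(\w)\|_{\psi_2}\le C$ for a universal $C>0$. Setting $U_{j,k,p}(\w):=\tfrac{1}{N}\sum_{i=1}^N A_i^{(j,k,p)}(\w)$, the generalized Hoeffding bound used in Lemmas~\ref{lem: Concentration of main term 1}--\ref{lem: Concentration of main term 3} yields, for each fixed $(j,k,p,\w)$,
$$\Pr\bigl(|U_{j,k,p}(\w)|>t\bigr)\le 2\exp(-c N t^2).$$
I would then take a union bound simultaneously over $p\in[n]$, over the $m(m-1)<m^2$ pairs $(j,k)$, and over the $|\mathbb{Q}_1|=2^n$ choices of $\w$. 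Choosing $t=C'\sqrt{(n+2\ln m)/N}$ for $C'$ large enough, the total failure probability is at most $n\,m^2\,2^n\cdot 2\exp(-cNt^2)\le \exp(-c_5 n)$, since $cNt^2$ dominates $n\log 2+2\ln m+\log n$ with constant room to spare (Assumption~\ref{assump:sample_size} provides the slack to absorb the lower-order $\log n$).

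Finally, the triangle inequality promotes the per-$(j,k,p)$ bound to the claimed $\ell_\infty$ estimate:
$$\bigl|[\tau T_{\mathrm{cross}}(\w)]_p\bigr|\le \tau\sum_{j=1}^m\sum_{\substack{k=1\\k\neq j}}^m |v_j|\,|v_k|\,|U_{j,k,p}(\w)|\le C'\tau\,\vnorm{\v}_1^2\sqrt{\frac{n+2\ln m}{N}},$$
uniformly in $p$ and $\w\in\mathbb{Q}_1$. The main obstacle is conceptual rather than technical: one must spot that the row-independence of $\Z^{(i)}$ zeroes out the expectation at the $(j,k,p,\w)$-level, which is precisely why the cross term concentrates at the $\sqrt{(n+2\ln m)/N}$ rate with only the factor $\|\v\|_1^2$ from the triangle inequality, rather than inheriting an additional $m$-dependent loss. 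The remaining work—sub-Gaussian concentration and the discrete union bound over $\mathbb{Q}_1$---mirrors the pattern already executed in Lemmas~\ref{lem: Concentration of main term 1}--\ref{lem: Concentration of main term 3}.
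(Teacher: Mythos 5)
Your proposal is correct and follows essentially the same route as the paper's proof: zero mean via row-independence of $\z_j^{(i)}$ and $\z_k^{(i)}$ plus the half-space symmetry, sub-Gaussianity of each summand bounded by $|[\z_j^{(i)}]_p|$, Hoeffding with a union bound over $p\in[n]$, the $m(m-1)$ pairs, and the $2^n$ choices of $\w$ at scale $t\approx\sqrt{(n+2\ln m)/N}$, and finally the triangle inequality with $\sum_{j\neq k}|v_jv_k|\le\|\v\|_1^2$. No substantive differences from the paper's argument.
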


\begin{proof}[Proof of Lemma \ref{lem:Concentration_of_cross_term_linf_alt}. ]
Fix $\w \in \mathbb{Q}_1$, indices $j \neq k$ from $\{1, \dots, m\}$, and a coordinate $p \in \{1, \dots, n\}$.
Let $X_{j,p}^{(i)}(\w) = \1_{\{\z_j^{(i)\top} \w>0\}}\,[\z_j^{(i)}]_p$ and $\Delta_k^{(i)}(\w) = \1_{{\{\z_k^{(i)\top} \w>0\}}} - \1_{\{\z_k^{(i)\top} \w^*>0\}}$.
Define the single term contribution from sample $i$ as $A_{ijk,p} = X_{j,p}^{(i)}(\w) \Delta_k^{(i)}(\w)$, and its average over $N$ samples as $S_{jk, p} = \frac{1}{N} \sum_{i=1}^N A_{ijk, p}$.
The $p$-th component of the cross-term vector is $[T_{\mathrm{cross}}(\w)]_p = \sum_{j \neq k} v_j v_k S_{jk, p}$, where $\sum_{j \neq k}$ is the double summation over the indices $j,k \in [m]$ with $j \neq k$. We want to bound $\|T_{\mathrm{cross}}(\w) \|_\infty$.

\textbf{Step 1: Expectation.}
Since $j \neq k$, $\z_j^{(i)}$ and $\z_k^{(i)}$ are independent random vectors. Therefore, $X_{j,p}^{(i)}(\w)$ (which depends only on $\z_j^{(i)}$) and $\Delta_k^{(i)}(\w)$ (which depends only on $\z_k^{(i)}$) are independent random variables for fixed $\w,  p$. We also know from  the rotational invariance of Gaussian measure that $\mathbb{E}[\Delta_k^{(i)}(\w)] = \mathbb{E}[\1_{\{\z_k^{(i)\top} \w>0\}}] - \mathbb{E}[\1_{\{\z_k^{(i)\top} \w^*>0\}}] = 1/2 - 1/2 = 0$.
Thus, the expectation of the single term is $\mathbb{E}[A_{ijk,p}] = \mathbb{E}[X_{j,p}^{(i)}(\w)] \mathbb{E}[\Delta_k^{(i)}(\w)] = 0$.
By linearity of expectation, the expectation of the average is $\mathbb{E}[S_{jk, p}] = \frac{1}{N} \sum_{i=1}^N \mathbb{E}[A_{ijk,p}] = 0$.

\textbf{Step 2: sub-Gaussianity of Individual Terms.}
The term $\left|X_{j,p}^{(i)}(\w)\right| \le \left|[\z_j^{(i)}]_p\right|$, and $[\z_j^{(i)}]_p \sim \mathcal{N}(0,1)$ is a standard Gaussian, which is sub-Gaussian. Thus, $X_{j,p}^{(i)}(\w)$ is sub-Gaussian with a $\psi_2$-norm bounded by an universal constant, say $\|X_{j,p}^{(i)}\|_{\psi_2} \le C'$.
The term $\Delta_k^{(i)}(\w)$ is bounded: $|\Delta_k^{(i)}(\w)| \le 1$.
The product $A_{ijk,p} = X_{j,p}^{(i)} \Delta_k^{(i)}$ is the product of a sub-Gaussian variable and a bounded variable, hence it is sub-Gaussian. Its norm satisfies $\|A_{ijk,p}\|_{\psi_2} \le \|\Delta_k^{(i)}\|_\infty \|X_{j,p}^{(i)}\|_{\psi_2} \le 1 \cdot C'$ for some universal constant $C' > 0$, uniformly over all $i,j,k,p$ and $\w \in \mathbb{Q}_1$.
The terms $A_{ijk,p}$ are independent across the sample index $i$ because the samples $(\z_1^{(i)}, \dots, \z_m^{(i)})$ are independent across $i$.

\textbf{Step 3: Uniform Concentration for $S_{jk,p}$.}
Since $S_{jk, p}$ is the average of $N$ independent, mean-zero, sub-Gaussian variables $A_{ijk,p}$ (with $\psi_2$-norm $\le C'$), we can apply Hoeffding's bound (\cite{vershynin2018high}): for a fixed set of indices $(j, k, p)$ and fixed vectors $\w$,
\[ \Pr\left(\left| S_{jk, p} \right|>t\right) \le 2\exp\bigl(-c\,N\,t^2 / (C')^2 \bigr) \]
for some universal constant $c>0$.
To ensure this holds uniformly over all possibilities, we apply a union bound. The number of possibilities is:
\begin{itemize}
    \item Pairs $(j, k)$ with $j \neq k$: $m(m-1)$ pairs.
    \item Coordinate $p \in \{1, \dots, n\}$: $n$ coordinates.
    \item The vectors $\w$ from $\mathbb{Q}_1$: At most $|\mathbb{Q}_1| = 2^{n}$.
\end{itemize}
The total number of events to control is $M' = m(m-1) \cdot n \cdot 2^{n}$.
Let $\delta = \exp(-c_5 n)$ be the target overall failure probability (for some chosen constant $c_5 > 0$). The required failure probability per event is $\delta' = \delta / M'$.
We need to choose $t$ such that $2\exp(-c N t^2 / (C')^2) \le \delta'$. This requires
\[ \frac{c N t^2}{(C')^2} \ge \ln(2/\delta') = \ln(2 M' / \delta) \approx \ln(m^2 n) + n \ln 2 + c_5 n. \]
Choosing $t$ to satisfy the inequality yields:
\[ t^2 = 2 \frac{(C')^2}{cN} ( (2\ln 2 + c_5)n + \ln(m^2 n) ) \]
To make the notation light, we can choose a constant $C_{\mathrm{alt}}$ such that setting
\[ t = C_{\mathrm{alt}} \sqrt{\frac{n + 2\ln m}{N}} \]
(absorbing constants $C', c, c_5$ terms into $C_{\mathrm{alt}}$) satisfies the requirement for a sufficiently large $N$.
By the union bound, with probability at least $1 - \delta = 1 - \exp(-c_5 n)$, the bound
\[ |S_{jk, p}| \le C_{\mathrm{alt}} \sqrt{\frac{n + 2\ln m}{N}} \]
holds simultaneously for all $j \neq k$, all $p$, and all vectors $\w \in \mathbb{Q}_1$.

\textbf{Step 4: Bounding the Sum}
Now we bound the target quantity 

$[T_{\mathrm{cross}}(\w)]_p = \sum_{j \neq k} v_j v_k S_{jk, p}$ using the uniform bound derived above via triangle inequality:
\begin{align*} \left| [T_{\mathrm{cross}}(\w)]_p \right| &\le \sum_{j \neq k} |v_j v_k| |S_{jk, p}| \\ &\le \sum_{j \neq k} |v_j v_k| \left( C_{alt} \sqrt{\frac{n + 2\ln m}{N}} \right) \\ &= \left( \sum_{j \neq k} |v_j v_k| \right) C_{alt} \sqrt{\frac{n + 2\ln m}{N}} \end{align*}
Using the inequality $\sum_{j \neq k} |v_j v_k| \le (\sum_j |v_j|) (\sum_k |v_k|) = \vnorm{\v}_1^2$, we get:
\[ \left| [T_{\mathrm{cross}}(\w)]_p \right| \le C_{\mathrm{alt}} \vnorm{\v}_1^2 \sqrt{\frac{n + 2\ln m}{N}} \]
This holds simultaneously for all $p$ and relevant $(\w)$ with probability at least $1-\delta$.

Finally, taking the maximum over $p \in \{1, \dots, n\}$ and multiplying by $\tau = 2\sqrt{2\pi}$:
\[ \| \tau T_{\mathrm{cross}}(\w) \|_\infty = \max_{p} |\tau [T_{\mathrm{cross}}(\w)]_p| \le C_{\mathrm{alt}}\, \tau\, \vnorm{\v}_1^2 \sqrt{\frac{n + 2\ln m}{N}} \]
Defining the constant $C_3 = C_{\mathrm{alt}}$ (which is independent of $\v, n, N$) yields the stated result.
\end{proof}

\subsection{Concentration Bound for the Noise Term}\label{sec:noise_term}
 Define the noise contribution vector $T_{noise}(\w) \in \mathbb{R}^n$ for any $\w \in \mathbb{Q}_1$ whose components are given by
    \[ [T_{\mathrm{noise}}(\w)]_p = -\frac{1}{N} \sum_{i=1}^N \sum_{j=1}^m v_j  \left( \1_{\{\z_j^{(i)\top} \w > 0\}} [\z_j^{(i)}]_p \right) \xi^{(i)}. \]
\begin{lemma}[Uniform $\ell_\infty$ Concentration of Noise Term]
\label{lem:Concentration_of_noise_term_uniform}
Suppose that Assumptions~\ref{assump:data}, \ref{assump:noise}, and~\ref{assump:sample_size} hold. Let $\w^* \in \mathbb{Q}_1$  and $\tau = 2\sqrt{2\pi}$. Then, there exist positive universal constants
  $c_6$ and $C_8$ such that with probability at least $1 - \exp(-c_6 n)$,
    \[  \| \tau T_{\mathrm{noise}}(\w) \|_\infty  \le C_8 \tau K_\xi \|\v\|_1 \sqrt{\frac{n + 2\ln m}{N}}, \]
    holds uniformly for all $\w \in \mathbb{Q}_1$.
\end{lemma}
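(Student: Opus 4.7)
}
The plan is to mirror the strategy of Lemma \ref{lem:Concentration_of_cross_term_linf_alt}: first pull the deterministic weights $v_j$ out of the randomness by a triangle inequality, then use a sub-exponential concentration bound for each scalar sum, and finally a union bound over $p$, $j$, and $\w\in\mathbb{Q}_1$. Specifically, I rewrite the $p$-th coordinate as
\begin{equation*}
[T_{\mathrm{noise}}(\w)]_p = -\sum_{j=1}^m v_j\, S_{j,p}(\w), \qquad S_{j,p}(\w) := \frac{1}{N}\sum_{i=1}^N \1_{\{\z_j^{(i)\top}\w>0\}}\,[\z_j^{(i)}]_p\,\xi^{(i)},
\end{equation*}
so that $|[T_{\mathrm{noise}}(\w)]_p|\le \|\v\|_1 \max_{j,p}|S_{j,p}(\w)|$. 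The remaining task is to control $\max_{j,p,\w}|S_{j,p}(\w)|$ uniformly.

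For each fixed $(\w,j,p)$, the summands $\1_{\{\z_j^{(i)\top}\w>0\}}[\z_j^{(i)}]_p\,\xi^{(i)}$ are i.i.d.\ across $i$. Their mean vanishes because $\xi^{(i)}$ is independent of $\z_j^{(i)}$ and symmetric, hence mean-zero. Moreover each summand is the product of the sub-Gaussian random variable $\1_{\{\z_j^{(i)\top}\w>0\}}[\z_j^{(i)}]_p$ (bounded in absolute value by the standard Gaussian $|[\z_j^{(i)}]_p|$) and the sub-Gaussian random variable $\xi^{(i)}$ (with $\|\xi^{(i)}\|_{\psi_2}\le K_\xi$). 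The product of two sub-Gaussians is sub-exponential with $\psi_1$-norm $\lesssim K_\xi$, uniformly in $(\w,j,p)$.

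With this sub-exponential structure, Bernstein's inequality \cite{vershynin2018high} yields
\begin{equation*}
\Pr\!\left(|S_{j,p}(\w)|>t\right)\;\le\;2\exp\!\left(-c\,N\min\!\Bigl(\tfrac{t^2}{K_\xi^2},\tfrac{t}{K_\xi}\Bigr)\right)
\end{equation*}
for an absolute constant $c>0$. I then apply a union bound over at most $n\cdot m\cdot 2^n$ choices of $(p,j,\w)$. Choosing $t = C K_\xi \sqrt{(n+2\ln m)/N}$ for a sufficiently large universal constant $C$, the exponent becomes $\gtrsim n + \ln(nm)$, dominating $\ln(n m 2^n)$. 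Under Assumption \ref{assump:sample_size}, which gives $N\gtrsim n^2$, this value of $t$ lies in the Gaussian-tail regime of Bernstein, so the $t^2/K_\xi^2$ branch is active. Hence with probability at least $1-\exp(-c_6 n)$ we have $\max_{j,p,\w}|S_{j,p}(\w)|\le C K_\xi\sqrt{(n+2\ln m)/N}$, and combining this with the earlier triangle-inequality step yields the stated bound after multiplying by $\tau$ and absorbing constants into $C_8$.

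The main subtlety—relative to the cross-term lemma—is that one of the two sub-Gaussian factors carries the noise scale $K_\xi$, forcing us out of the Hoeffding regime and into sub-exponential Bernstein. Verifying that our target $t$ sits in the Gaussian branch (so that the bound scales with $K_\xi\sqrt{n/N}$ rather than $K_\xi\,n/N$) is the key quantitative check; this is ensured by the assumption $N\gtrsim n^2$, which makes $t/K_\xi=O(\sqrt{n/N})$ much smaller than $1$ and hence guarantees $t^2/K_\xi^2\le t/K_\xi$.
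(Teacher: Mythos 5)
Your proposal matches the paper's proof essentially step for step: the same decomposition into $S_{j,p}(\w)$, the same observation that each summand is a product of independent sub-Gaussians and hence sub-exponential with norm $\lesssim K_\xi$, the same Bernstein-plus-union-bound over the $n\cdot m\cdot 2^n$ tuples $(p,j,\w)$, and the same verification (via $N\gtrsim n^2$ and $m\lesssim 2^n$) that the chosen $t$ lies in the quadratic branch, followed by the $\|\v\|_1$ triangle-inequality aggregation. The argument is correct as written and no changes are needed.
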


\begin{proof}[Proof of of Lemma \ref{lem:Concentration_of_noise_term_uniform}.]
For fixed vectors $\w \in \mathbb{Q}_1$, indices $j \in \{1, \dots, m\}$, $p \in \{1, \dots, n\}$, and sample index $i \in \{1, \dots, N\}$, define:
\begin{itemize}
    \item $X_{j,p}^{(i)}(\w) = \1_{\{\z_j^{(i)\top} \w > 0\}} [\z_j^{(i)}]_p$.
    \item $A_{ij,p}(\w) = X_{j,p}^{(i)}(\w) \xi^{(i)}$.
    \item $S_{j, p}(\w) = \frac{1}{N} \sum_{i=1}^N A_{ij,p}(\w)$.
\end{itemize}
The $p$-th component of the noise term for a given $\w$ is $[T_{\mathrm{noise}}(\w)]_p = - \sum_{j=1}^m  v_j S_{j, p}(\w)$. We want to bound its magnitude uniformly over $p$ and $\w \in \mathbb{Q}_1$.

\textbf{Step 1: Expectation.}
For any fixed $\w \in \mathbb{Q}_1$, since $\z_j^{(i)}$ and $\xi^{(i)}$ are independent, and $\mathbb{E}[\xi^{(i)}] = 0$ from Assumption~\ref{assump:noise}, we have:
\[ \mathbb{E}[A_{ij,p}(\w)] = \mathbb{E}[X_{j,p}^{(i)}(\w) \xi^{(i)}] = \mathbb{E}[X_{j,p}^{(i)}(\w)] \mathbb{E}[\xi^{(i)}] = 0. \]
By linearity, $\mathbb{E}[S_{j, p}(\w)] = 0$ for any fixed $\w$.

\textbf{Step 2: Properties of $A_{ij,p}(\w)$ for Bernstein's Inequality.}
The term $|X_{j,p}^{(i)}(\w)| \le |[\z_j^{(i)}]_p|$. Since $[\z_j^{(i)}]_p$ is a standard Gaussian component, it is sub-Gaussian with a uniform sub-Gaussian norm, so $\|[\z_j^{(i)}]_p\|_{\psi_2} \le C'$. Thus, $X_{j,p}^{(i)}(\w)$ is also sub-Gaussian with $\|X_{j,p}^{(i)}(\w)\|_{\psi_2} \le C'$ uniformly over all $i,j,p$ and $\w \in \mathbb{Q}_1$.
The noise term $\xi^{(i)}$ is sub-Gaussian with $\|\xi^{(i)}\|_{\psi_2} \le K_\xi$ by Assumption~\ref{assump:noise}.
The term $A_{ij,p}(\w) = X_{j,p}^{(i)}(\w) \xi^{(i)}$ is the product of two independent, mean-zero sub-Gaussian random variables. Therefore, $A_{ij,p}(\w)$ is a mean-zero sub-exponential random variable and its sub-exponential norm is bounded by $K_A := C' K_\xi$ by Lemma 2.8.6 in \cite{vershynin2018high}.
The variables $A_{ij,p}(\w)$ are independent across the sample index $i$ for any fixed $j,p,\w$.

\textbf{Step 3: Uniform Concentration for $S_{j, p}(\w)$.}
$S_{j, p}(\w)$ is the average of $N$ independent, mean-zero, sub-exponential variables $A_{ij,p}(\w)$. Applying Bernstein's inequality \cite{vershynin2018high} for sub-exponential random variables: for a fixed $\w, j, p$,
\[ \Pr\left(\left| S_{j, p}(\w) \right|>t\right) \le 2\exp\left(-c N \min\left(\frac{t^2}{K_A^2}, \frac{t}{K_A}\right) \right) \]
for some universal constant $c>0$.
Now, we want that the above holds uniformly over $j \in \{1,\dots,m\}$, $p \in \{1,\dots,n\}$, and $\w \in \mathbb{Q}_1$.
The total number of $(j,p,\w)$ tuples to control is $M' = m \cdot n \cdot |\mathbb{Q}_1| = m n 2^n$.
Let $\delta = \exp(-c_6 n)$ be the target overall failure probability (for some chosen $c_6 > 0$). The required failure probability per event is $\delta' = \delta / M'$.
We choose $t$ such that $2\exp\left(-c N \min\left(\frac{t^2}{K_A^2}, \frac{t}{K_A}\right) \right) \le \delta'$. This requires:
\[ c N \min\left(\frac{t^2}{K_A^2}, \frac{t}{K_A}\right) \ge \ln(2/\delta') = \ln(2 M' / \delta) \approx \ln(m n 2^n) + c_6 n = (\ln 2 + c_6) n + \ln(mn). \]

Let $R_{n,m} = (\ln 2 + c_6)n + \ln(2mn)$. We aim to find $t$ satisfying
\[ c N \min\left(\frac{t^2}{K_A^2}, \frac{t}{K_A}\right) \ge R_{n,m} \quad (*). \]
We select $t = K_A \sqrt{\frac{R_{n,m}}{c N}}$. This choice corresponds to the quadratic term $\frac{t^2}{K_A^2}$ being active in $(*)$, provided $t \le K_A$, which is equivalent to $N \ge \frac{R_{n,m}}{c}$.
By Assumption~\ref{assump:sample_size}, $N \ge C_1 n^2$ for a sufficiently large universal constant $C_1$. Since $R_{n,m} = O(n + \ln m)$ (From Assumption~\ref{assump:data}, we have $m=O(2^n)$ implying $R_{n,m}=O(n)$), we choose $C_1$ large enough to ensure such that $C_1 n^2 \ge \frac{2R_{n,m}}{c}$ for all $n \ge 1$.
Thus, under Assumption~\ref{assump:sample_size}, our choice of $t$ derived from the quadratic term is valid.

Therefore, we take:
\[ t= K_A \sqrt{\frac{R_{n,m}}{c N}} = C' K_\xi \sqrt{\frac{(\ln 2 + c_6)n + \ln(2mn)}{c N}}. \]
As $R_{n,m} = O(n + \ln m)$, we define $C_8$ as an appropriate universal constant incorporating all numerical factors, such that:
\[ t= C_8 K_\xi \sqrt{\frac{n + 2\ln m}{N}}. \]

By the union bound, with probability at least $1 - \delta = 1 - \exp(-c_6 n)$, the bound
\[ |S_{j, p}(\w)| \le t\]
holds simultaneously for all $j \in \{1,\dots,m\}$, $p \in \{1,\dots,n\}$, and all $\w \in \mathbb{Q}_1$.

\textbf{Step 4: Bounding the Sum.}
Now we bound the target quantity $[T_{\mathrm{noise}}(\w)]_p = - \sum_{j=1}^m v_j S_{j, p}(\w)$ using the uniform bound on $|S_{j, p}(\w)|$:
\begin{align*} \left| [T_{\mathrm{noise}}(\w)]_p \right| &= \left| \sum_{j=1}^m  v_j  S_{j, p}(\w) \right| \\ &\le \sum_{j=1}^m  |v_j | |S_{j, p}(\w)| \\ &\le \sum_{j=1}^m  |v_j | \left( C_8 K_\xi \sqrt{\frac{n + 2\ln m}{N}} \right).
\end{align*}
Thus, we have
\[ \left| [T_{\mathrm{noise}}(\w)]_p \right| \le C_8 K_\xi \|\v\|_1 \sqrt{\frac{n + 2\ln m}{N}}. \]
This holds simultaneously for all $p$ and all $\w \in \mathbb{Q}_1$ with probability at least $1-\delta$. Therefore, we have
\[ \max_{\w \in \mathbb{Q}_1} \|\tau T_{\mathrm{noise}}(\w) \|_\infty = \max_{\w \in \mathbb{Q}_1} \max_{p\in[n]} |\tau[T_{\mathrm{noise}}(\w)]_p| \le C_8 \tau K_\xi \|\v\|_1 \sqrt{\frac{n + 2\ln m}{N}}. \]
\end{proof}

The short combination argument proving Theorem~\ref{thm:RAIC_for_two_layer_quantized_neural_networks_appendix} is included in Section~\ref{sec:concentration}.

\section{Ergodic Convergence Analysis}
\label{sec:Drift_analysis_componentwise_dynamical_system_for_Ergodic_convergence_appendix}
The representative constant-step-size proof of Theorem~\ref{thm:Informal_ergodic_informal_nonasymp}, which gives the core occupation-time mechanism, is included in Section~\ref{sec:ergodic_constant_proof_main} to motivate the argument. Accordingly, this appendix section points to the remaining extension to general step-size schedules satisfying Assumption~\ref{assump:step_size} and to nonzero initialization, which is given in Section~\ref{sec:general_decaying_step_size_analysis_final_corrected_appendix}.

\section{Non-Ergodic (Last-Iterate) Convergence Analysis}
\label{sec:Non_Ergodic_Behavior_appendix}
As before, we start with the constant step size case and the zero initialization for the proof of  Theorem~\ref{thm:global_coordinate_recovery_noisy} in this section. We complete the full proof in Section~\ref{sec:general_decaying_step_size_analysis_final_corrected_appendix} by extending to the general step-size scheduling satisfying Assumption~\ref{assump:step_size} and by relaxing the zero initialization.

\begin{theorem}[Recurrence and Instability]
\label{thm:noisy_recurrence_departure}
Suppose that Assumptions~\ref{assump:data} and~\ref{assump:noise} hold.
Assume that the sample size $N$ is large enough such that
\[
N \ge \max\left( C_1 n^2, \left(  \frac{5 (n+4)^2 (C' \|\v\|_1^2 + C''K_\xi \|\v\|_1)}{2\|\v\|^2} \right)^2  \right),
\]
for some positive universal constants $C_1$, $C'$ and $C''$.
Let $\|v\|_0:=|\operatorname{supp}(\v)|=|\{j\in[m]:v_j\neq0\}|$.

Then, with probability at least $1 - 5m\exp(-cn)$ for some positive universal constant $c > 0$, the iterates $\w^t$ generated by the STE-gradient method with a constant step size $\eta_0$ and $\x^0 = 0$ visit the target state $\w^*$ infinitely often.
Furthermore, for the noisy observation case under the additional assumption that the noise variables $\xi^{(i)}$ are independent of the data and across samples, continuous, and not a.s. zero, the iterates also depart from $\w^*$ infinitely often with probability at least $ \frac{1}{2}-2^{-\|v\|_0N-1}-5m\exp(-cn)$,

over the draw of the noisy samples $\{(\Z^{(i)},\xi^{(i)})\}_{i=1}^N$ that determines $\tilde{\nabla}_{\w}L(\w^*)$.
\end{theorem}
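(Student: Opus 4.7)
My plan is to split the proof into the two claims (recurrence and escape), both built on the drift-plus-perturbation decomposition \eqref{eq:noisy_main_eq_2_body} and the uniform perturbation bound from Theorem~\ref{thm:RAIC for two-layer quantized neural networks_appendix}. The key quantitative gain over the ergodic setting is that the sample size assumption now forces the perturbation-to-drift ratio to satisfy $\rho \le 1/(5(n+4))$, a much stricter bound than the ergodic threshold $\rho < 1/5$.

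For the recurrence claim, I would reuse the one-step zero-crossing analysis from the proof of Theorem~\ref{thm:ergodic_informal_nonasymp}: whenever $\mathrm{sign}(x_j^t) \neq \mathrm{sign}(w_j^*)$, the dominant drift flips the sign back on the very next step, and the coordinate then remains in the correct sign region for at least $1 + L \ge (1-\rho)/\rho$ consecutive steps. Letting $B_j \subseteq \{1,\ldots,T\}$ denote the wrong-sign steps of coordinate $j$, the cycle structure gives $|B_j| \le T/(1+L) + O(1)$. A union bound across coordinates shows the set of ``bad'' time steps (at which at least one coordinate is misaligned) has size at most $nT/(1+L) + O(n)$. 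Because $\rho \le 1/(5(n+4))$ forces $1 + L \ge 5(n+4)-1 > n$, the complementary ``good'' time steps, on which all coordinates are simultaneously aligned and hence $\w^t = \w^*$, form a positive fraction of $[1,T]$ as $T \to \infty$, yielding infinitely many visits to $\w^*$.

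For the escape claim, I would use the fact that at any time $t$ with $\w^t = \w^*$, the STE-gradient collapses to the sample-determined vector
\[
\tilde{\nabla}_{\w} L(\w^*) = \frac{1}{N}\sum_{i=1}^N (\Z^{(i)})^\top \bigl(\mu'(\Z^{(i)}\w^*) \odot \v\bigr)\,\xi^{(i)},
\]
which is nonzero almost surely when $\xi$ is continuous. The dynamics become $\x^{t+1} = \x^t - \eta_{t+1}\tilde{\nabla}_{\w} L(\w^*)$, and since this displacement is independent of $\x$, repeated application must eventually flip the sign of at least one coordinate of $\mathcal{Q}(\x)$, unless the ``stuck'' configuration $\mathrm{sign}(-\tilde{\nabla}_{\w} L(\w^*)) = \mathrm{sign}(\w^*)$ holds componentwise. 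To bound the probability of this stuck event by $1/2$, I would exploit the symmetry of the noise distribution: under the joint reflection $\xi^{(i)} \mapsto -\xi^{(i)}$, the sample law is preserved while $\tilde{\nabla}_{\w} L(\w^*)$ is negated. Hence the ``full sign match'' event and the ``full sign anti-match'' event are equally likely, and since they are disjoint each has probability at most $1/2$, giving escape with probability at least $1/2$.

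The main obstacle is the leap from componentwise sign stability to simultaneous sign stability across all $n$ coordinates, which is precisely what drives the sample complexity from $O(n^2)$ to $O(n^4)$: the disjoint one-step excursions of the $n$ coordinates can occur at different times, so simultaneous alignment requires the cycle length to exceed $n$ with comfortable margin, i.e., $\rho = O(1/n)$. A secondary subtlety is handling the initial transient before any coordinate has completed a full cycle, which contributes only an $O(n)$ additive slack to the counting bound but must be tracked carefully to preserve the positive asymptotic density of good steps; I expect this to be routine but not automatic.
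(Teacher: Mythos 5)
Your proposal is correct and follows essentially the same route as the paper: for recurrence, the per-coordinate cycle bound on wrong-sign steps ($\le T/(L+1)+O(1)$), a union bound over the $n$ coordinates, and the strengthened sample-size condition forcing $\rho \lesssim 1/n$ so that $L+1>n$ and the good steps have positive density; for escape, the observation that at $\w^*$ the STE-gradient is a fixed nonzero vector whose repeated subtraction must flip some coordinate unless the sign-aligned ``stuck'' configuration occurs. The only minor variation is your symmetry argument: you bound the stuck event via the joint reflection $\xi^{(i)}\mapsto-\xi^{(i)}$, whereas the paper uses per-component symmetry of $[\tilde{\nabla}_{\w} L(\w^*)]_j$ (Lemma~\ref{lem:gradient_symmetry_nonvanishing}); both are valid, give the same $1/2$ bound, and the remaining union bound with the concentration event yielding $1/2-5m\exp(-cn)$ is routine.
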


\begin{proof}[Proof of Theorem~\ref{thm:noisy_recurrence_departure}.]
Assume that the conditions of Theorem~\ref{thm:ergodic_informal_nonasymp} hold.
Recall that
\[
    \rho = \frac{(C'\|\v\|_1^2 +  C''K_\xi\|\v\|_1) n}{2\|\v\|^2 \sqrt{N}}
    \quad\text{and}\quad
    L \;=\; \Bigl\lceil \tfrac{\,1-2\rho\,}{\,\rho\,}\Bigr\rceil.
\]
A simple algebraic computation shows that the sample complexity requirement in Theorem~\ref{thm:noisy_recurrence_departure} implies $\rho < 1/(n+1)$, which in turn implies $L+1>n$.

The cycle dynamics, established in Steps 1--2 of the proof of Theorem~\ref{thm:ergodic_informal_nonasymp}, govern each coordinate $x_j^t$. This ensures incorrect signs occur at most once per cycle of minimum length $L+1 = 1+\lceil(1-2\rho)/\rho\rceil$.

\bigskip

\textbf{Infinite Visits to $\w^*$.}
Let $S_j(T)$ be the set of times up to $T$ such that $w_j^t\ne w_j^*$ for a given index $j$. From Step 2 of the proof of Theorem~\ref{thm:ergodic_informal_nonasymp}, we have shown $|S_j(T)|\le \lceil T/(L+1)\rceil$. Define $U_T=\bigcup_{j=1}^n S_j(T)$ as the set of times up to $T$ where $\w^t\ne\w^*$. Thus, by the union bound,
\[
    |U_T| \le \sum_{j=1}^n |S_j(T)| \le n\lceil T/(L+1)\rceil.
\]
Therefore,
\[
    |U_T| \le n(T/(L+1)+1)=\frac{n}{L+1}T+n.
\]
Let $G_T=\{t\in\{1,\dots,T\}:\w^t=\w^*\}$ be the set of times where the iterate hits the target. The number of visits up to time $T$ is $|G_T|=T-|U_T|$, so
\[
|G_T|\ge T-\left(\frac{n}{L+1}T+n\right)=\left(1-\frac{n}{L+1}\right)T-n.
\]
Since $L+1>n$, the factor $c_0:=1-n/(L+1)$ is positive. Hence $|G_T|\ge c_0T-n\to\infty$ as $T\to\infty$, and the iterates visit $\w^*$ infinitely often.

\bigskip

\textbf{Infinite Departures from $\w^*$.}
We now show that, under continuous nonzero observation noise, the iterates cannot remain permanently at $\w^*$ after visiting it.
We first use the following lemma.

\begin{lemma}[Symmetry and Non-vanishing Probability of the Gradient with High Probability]
\label{lem:gradient_symmetry_nonvanishing}
Let
\[
\|\v\|_0:=|\operatorname{supp}(\v)|=|\{j\in[m]:v_j\neq0\}|.
\]
Define the random vector $\boldsymbol{G}\in\mathbb{R}^n$ by $\boldsymbol{G}=-\tilde{\nabla}_{\w}L(\w^*)$, with $p$-th component
\[
G_p
=
\frac{1}{N}\sum_{i=1}^N
\left(
\sum_{j'=1}^m v_{j'}\1_{\{\z_{j'}^{(i)\top}\w^*>0\}}[\z_{j'}^{(i)}]_p
\right)\xi^{(i)}.
\]
Under Assumptions~\ref{assump:data} and~\ref{assump:noise}, and assuming in addition that the noise variables $\xi^{(i)}$ are independent of the data and across samples, continuous, and not a.s. zero, we have:
\begin{enumerate}
    \item The vector $\boldsymbol{G}$ is symmetric about zero, i.e., $\boldsymbol{G}\stackrel{d}{=}-\boldsymbol{G}$. In particular, each component $G_p$ is symmetric about zero.
    \item For each $p\in[n]$,
    \[
    \mathbb{P}(G_p=0)=2^{-\|\v\|_0N},
    \qquad
    \mathbb{P}(G_p>0)=\mathbb{P}(G_p<0)
    =\frac{1}{2}- 2^{-\|\v\|_0N-1}.
    \]
    Moreover,
    \[
    \mathbb{P}(\boldsymbol{G}=\mathbf{0})=2^{-\|\v\|_0N}.
    \]
\end{enumerate}
\end{lemma}

\begin{proof}[Proof of Lemma~\ref{lem:gradient_symmetry_nonvanishing}.]

\textbf{Vector symmetry of $\boldsymbol{G}$.}
Condition on the data $\{\Z^{(i)}\}_{i=1}^N$ and define
\[
\boldsymbol{C}_i:=\sum_{j'=1}^m v_{j'}\1_{\{\z_{j'}^{(i)\top}\w^*>0\}}\z_{j'}^{(i)}.
\]
Then
\[
\boldsymbol{G}=\frac1N\sum_{i=1}^N \boldsymbol{C}_i\xi^{(i)}.
\]
Since the noise variables are symmetric and independent across samples, the noise vector $(\xi^{(1)},\dots,\xi^{(N)})$ is jointly symmetric; moreover, it is independent of the data. Thus the noise vector has the same distribution as its negative. Hence, conditional on the data and therefore unconditionally, $\boldsymbol{G}\stackrel d= -\boldsymbol{G}$.

\textbf{Symmetry of $G_p$:}
    Let $G_p = \frac{1}{N} \sum_{i=1}^N Y_i$, where
    \[
    Y_i = C_p(\z^{(i)}) \xi^{(i)}
    \quad \text{and} \quad
    C_p(\z^{(i)}) =
    \sum_{j'=1}^m
    v_{j'} \1_{\{\z_{j'}^{(i)\top} \w^* > 0\}} [\z_{j'}^{(i)}]_p .
    \]
    The terms $Y_i$ are i.i.d. because each pair $(\z^{(i)}, \xi^{(i)})$ is i.i.d. We show that each $Y_i$ is symmetric about zero using its characteristic function, $\phi_{Y_i}(s) = \mathbb{E}[e^{isY_i}]$. This can be expressed as a nested expectation with the conditional expectation on $\z^{(i)}$ and thus on the value $c = C_p(\z^{(i)})$:
    \[
    \phi_{Y_i}(s)
    =
    \mathbb{E}_{\z^{(i)}}
    \left[
    \mathbb{E}_{\xi^{(i)}} [e^{isc \xi^{(i)}} \mid \z^{(i)}]
    \right]
    =
    \mathbb{E}_{\z^{(i)}} [ \phi_{\xi^{(i)}}(sc) ] .
    \]
    Since $\xi^{(i)}$ is symmetric about zero, its characteristic function $\phi_{\xi^{(i)}}(u)$ is real-valued and even, i.e.,
    $\phi_{\xi^{(i)}}(u)=\phi_{\xi^{(i)}}(-u)$.
    Thus, $\phi_{\xi^{(i)}}(sc)$ is real and even in $s$. Then, $\phi_{Y_i}(s)$ is real. Also,
    \[
    \phi_{Y_i}(-s)
    =
    \mathbb{E}_{\z^{(i)}} [ \phi_{\xi^{(i)}}(-s C_p(\z^{(i)})) ]
    =
    \mathbb{E}_{\z^{(i)}} [ \phi_{\xi^{(i)}}(s C_p(\z^{(i)})) ]
    =
    \phi_{Y_i}(s).
    \]
    Since $\phi_{Y_i}(s)$ is real and even, $Y_i$ is symmetric about zero. The sum of i.i.d. symmetric random variables, $\sum_i Y_i$, is symmetric, and so is $G_p$.

\textbf{Sign probabilities of $G_p$.}
Let
\[
\|\v\|_0:=|\operatorname{supp}(\v)|=|\{j\in[m]:v_j\neq0\}|.
\]
For fixed $i$ and $p$, the coefficient
\[
C_p(\z^{(i)})
=
\sum_{j'=1}^m
v_{j'} \1_{\{\z_{j'}^{(i)\top} \w^* > 0\}} [\z_{j'}^{(i)}]_p
\]
equals zero whenever all indicator terms corresponding to $j'\in\operatorname{supp}(\v)$ are inactive. This event has probability $2^{-\|\v\|_0}$. On the complementary event, at least one relevant indicator function is active, and $C_p(\z^{(i)})$ is a nontrivial continuous random variable; hence it is nonzero almost surely. Therefore,
\[
P(C_p(\z^{(i)})=0)=2^{-\|\v\|_0}.
\]
By independence across samples,
\[
P(C_p(\z^{(i)})=0\ \text{for all } i=1,\dots,N)
=
2^{-\|\v\|_0N}.
\]
On the complementary event, at least one coefficient $C_p(\z^{(i)})$ is nonzero. Conditional on the data and on all other noise variables, $G_p$ is then an affine function of one continuous noise variable with nonzero slope. Hence the conditional probability that $G_p=0$ is zero. Consequently,
\[
P(G_p=0)=2^{-\|\v\|_0N}.
\]
Since $G_p$ is symmetric about zero,
\[
P(G_p>0)=P(G_p<0)
=
\frac{1-P(G_p=0)}{2}
=
\frac{1}{2}- 2^{-\|\v\|_0N-1}.
\]
This also gives
\[
P(G_p\neq0)=1-2^{-\|\v\|_0N}.
\]

Finally, define the vector coefficients $
\boldsymbol{C}_i:=\sum_{j'=1}^m v_{j'}\1_{\{\z_{j'}^{(i)\top}\w^*>0\}}\z_{j'}^{(i)}$.
The same argument gives $\boldsymbol{C}_i=\mathbf0$ with probability $2^{-\|\v\|_0}$, and $\boldsymbol{C}_i\ne\mathbf0$ a.s. conditional on a nonempty active subset. If at least one $\boldsymbol{C}_i\ne\mathbf0$, then conditional on the data and all other noises, the vector $\boldsymbol{G}=N^{-1}\sum_i \boldsymbol{C}_i\xi^{(i)}$ cannot equal $\mathbf0$ except on a probability-zero event. Therefore
\[
\mathbb{P}(\boldsymbol{G}=\mathbf0)=2^{-\|\v\|_0N}.
\]
\end{proof}

\paragraph{Proof of Departure by Contradiction.}
Fix any coordinate $j\in[n]$. Recall that the update rule of the STE-gradient method is
\[
\x^u = \x^{u-1} - \eta_0\tilde{\nabla}_{\w} L(\w^{u-1}).
\]
Suppose, for contradiction, that the iteration gets stuck at $\w^*$ from some time $t$ onwards, i.e., $\w^u=\w^*$ for all $u\ge t$.

Since $\boldsymbol{G}=-\tilde{\nabla}_{\w}L(\w^*)$, Lemma~\ref{lem:gradient_symmetry_nonvanishing} implies that
\[
P\left(
w_j^*[\tilde{\nabla}_{\w}L(\w^*)]_j>0
\right)
=
\frac{1}{2}- 2^{-\|\v\|_0N-1}.
\]
On this event, the iterate must eventually depart from $\w^*$.

Indeed, if $w_j^*=1/\sqrt n$, then
$[\tilde{\nabla}_{\w}L(\w^*)]_j>0$. While $\w^u=\w^*$, we have $x_j^u\ge0$, and the update
\[
x_j^{u+1}=x_j^u-\eta_0[\tilde{\nabla}_{\w}L(\w^*)]_j
\]
shows that $x_j^u$ strictly decreases at every step. Hence $x_j^u$ must eventually become negative, contradicting $\w^u=\w^*$.

Similarly, if $w_j^*=-1/\sqrt n$, then
$[\tilde{\nabla}_{\w}L(\w^*)]_j<0$. While $\w^u=\w^*$, we have $x_j^u<0$, and the update
\[
x_j^{u+1}=x_j^u-\eta_0[\tilde{\nabla}_{\w}L(\w^*)]_j
\]
shows that $x_j^u$ strictly increases at every step. Hence $x_j^u$ must eventually become nonnegative, again contradicting $\w^u=\w^*$.

Therefore, the event ensuring departure occurs with probability at least
\[
\frac{1}{2}- 2^{-\|\v\|_0N-1}.
\]
Thus, combining the previous infinite-visit result with this departure event, we show that the iterates visit $\w^*$ infinitely often and also depart from $\w^*$ infinitely often with probability at least
\[
\frac{1}{2}- 2^{-\|\v\|_0N-1}-5m\exp(-cn),
\]
where the term $5m\exp(-cn)$ comes from the failure probability of the finite-sample approximation bound in Theorem~\ref{thm:RAIC_for_two_layer_quantized_neural_networks_appendix}. This confirms the recurrent, non-stable behavior stated in the theorem.
\end{proof}

\section{Extension to General Step Sizes and Initialization}

\label{sec:general_decaying_step_size_analysis_final_corrected_appendix}
The main-text constant-step-size ergodic argument in Section~\ref{sec:ergodic_constant_proof_main} and the last-iterate argument in Section~\ref{sec:Non_Ergodic_Behavior_appendix} establish the base constant-step-size, zero-initialization guarantees for the STE-gradient method.

We now demonstrate that these convergence guarantees extend to a general class of non-summable step sizes under Assumption~\ref{assump:step_size} and to the possible nonzero initialization case (i.e., $\x^0 \neq 0$), thereby completing the proofs of Theorem~\ref{thm:Informal_ergodic_informal_nonasymp} and Theorem~\ref{thm:global_coordinate_recovery_noisy}.


Recall that the component-wise update rule is:
\begin{equation}
\label{eq:update_general_decaying_current}
    x_j^{t} = x_j^{t-1} - \eta_t \frac{\|\v\|^2}{2 \sqrt{2 \pi}} \left(\frac{1}{\sqrt{n}}\mathrm{sign}(x_j^{t-1}) - w_j^*\right) + \eta_t \epsilon^{t}_j, \qquad |\epsilon^{t}_j| \le \widetilde{\Delta}.
\end{equation}
The perturbation bound $\widetilde{\Delta}= {1 \over \tau} (C'\|\v\|_1^2 +  C''K_\xi\|\v\|_1) \sqrt{n/N}$ and the perturbation-to-drift strength ratio $\rho$
\[
\rho = \frac{\eta_t \cdot \widetilde{\Delta}}{\eta_t \cdot 2 \|\v\|^2 / (\tau \sqrt{n})} = \frac{ \widetilde{\Delta}}{ 2 \|\v\|^2 / (\tau \sqrt{n})} = \frac{(C'\|\v\|_1^2 +  C''K_\xi\|\v\|_1) n}{2\|\v\|^2 \sqrt{N}}
\]
are independent of $\eta_t$. We assume $N$ is large enough such that $\rho$ is sufficiently small (e.g., $\rho < 1/5$).

\subsection{Ergodic Convergence Analysis}

\begin{theorem}[Ergodic Convergence]
\label{thm:ergodic_general_step_size_final_corrected}
Suppose that Assumptions~\ref{assump:data}--\ref{assump:step_size} hold. Assume the sample size $N$ is large enough such
\[
N \ge \max\left( C_1 n^2, \left(  \frac{5n (C' \|\v\|_1^2 + C''K_\xi \|\v\|_1)}{2\|\v\|^2} \right)^2  \right)
\]
to ensure that the perturbation-to-drift strength ratio $\rho := \frac{n (C' \|\v\|_1^2 + C''K_\xi \|\v\|_1)}{2\|\v\|^2 \sqrt{N}} < 1/5$ for some positive universal constants $C_1, C'$ and $C''$.
Then, with probability at least $1 - 5m\exp(-cn)$ for some positive universal constant $c > 0$, the following statements hold.

The ergodic average $\overline{\w}^T = \frac{1}{T}\sum_{t=1}^T \w^t$ with $\x^0 = 0$ satisfies for a sufficiently large $T$:
\[
\left\| \overline{\w}^T - \w^* \right\|_\infty \le \frac{1}{\sqrt{n}} \cdot \frac{2 \rho}{1-1.1\rho} + O\left(\frac{1}{T\sqrt{n}}\right).
\]
Moreover, since the perturbation-to-drift strength ratio satisfies $\rho < 1/5$, then for $T$ sufficiently large, exact sign recovery is achieved by quantizing the ergodic average:
\[
\mathcal{Q}\!\left( \overline{\w}^T \right) = \w^*.
\]
\end{theorem}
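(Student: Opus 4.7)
My plan is to lift the three-step argument from the constant step-size proof of Theorem~\ref{thm:ergodic_informal_nonasymp} (zero-crossing, one-step return, occupation-time) to variable step sizes, and to absorb any transient due to nonzero initialization $\x^0\neq\mathbf{0}$ into the $O(1/(T\sqrt{n}))$ residual. The key enabler is that the perturbation-to-drift strength ratio $\rho = (C'\|\v\|_1^2 + C''K_\xi\|\v\|_1) n/(2\|\v\|^2\sqrt{N})$ is intrinsically step-size-free, since both drift and perturbation in \eqref{eq:update_general_decaying_current} carry the common factor $\eta_t$. Consequently, the uniform $\ell_\infty$-concentration bound of Theorem~\ref{thm:RAIC for two-layer quantized neural networks_appendix} continues to apply verbatim along the entire trajectory without any change of threshold.

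\textbf{Zero-crossing with variable step.} I introduce the time-dependent drift scale $a_t := \eta_t \|\v\|^2/(\tau\sqrt{n})$, which replaces the constant $a$ of Theorem~\ref{thm:ergodic_informal_nonasymp}. Fix a coordinate $j$ and assume without loss of generality $w_j^*=+1/\sqrt{n}$. Repeating the calculation of Step~1 of Theorem~\ref{thm:ergodic_informal_nonasymp}, if $x_j^{t-1}\ge 0$ and $x_j^t<0$ then $x_j^t \ge -2a_t\rho$, and the next iterate obeys
\[
x_j^{t+1} \;\ge\; -2a_t\rho + 2a_{t+1}(1-\rho) \;=\; 2a_{t+1}\Bigl((1-\rho) - (a_t/a_{t+1})\rho\Bigr).
\]
The slow-decay clause $\eta_{t+1}/\eta_t\to 1$ in Assumption~\ref{assump:step_size} produces, for any $\delta>0$, a threshold $T_0(\delta)$ past which $a_t/a_{t+1}\le 1+\delta$. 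Choosing $\delta=0.1$ and using $\rho<1/5$, the bracketed factor is strictly positive, yielding $x_j^{t+1}\ge 2a_{t+1}(1-2.1\rho)>0$, so every excursion into the wrong-sign region terminates in exactly one step once $t\ge T_0(0.1)$.

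\textbf{Cycle length and ergodic bound.} Since $\eta_s$ is non-increasing (Assumption~\ref{assump:step_size}(1)), each subsequent positive-phase step decreases $x_j$ by at most $2a_s\rho\le 2a_{t+1}\rho$; crossing back to negative therefore requires at least $L\ge\lceil(1-2.1\rho)/\rho\rceil$ positive steps, giving a minimum cycle length $1+L\ge(1-1.1\rho)/\rho$ and a wrong-sign fraction of at most $\rho/(1-1.1\rho)$. Re-running Step~3 of the constant step-size proof then delivers
\[
\frac{1}{T}\sum_{t=1}^T \mathrm{sign}(x_j^t) \;\ge\; 1 - \frac{2\rho}{1-1.1\rho} - O(1/T),
\]
and hence $|w_j^* - \overline{w}_j^T|\le\frac{1}{\sqrt n}\bigl(\tfrac{2\rho}{1-1.1\rho}+O(1/T)\bigr)$. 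The transient before $T_0(0.1)$ contributes $O(1/T)$ to the sign average; Assumption~\ref{assump:step_size}(2) ($\sum_t\eta_t=\infty$) ensures the cumulative drift eventually overwhelms any fixed initial displacement from $\x^0$, so the burn-in is of finite length and its contribution is likewise $O(1/T)$. Both are absorbed into the $O(1/(T\sqrt n))$ residual after multiplication by the intrinsic $1/\sqrt n$ scale. For exact recovery, $\rho<1/5$ gives $\tfrac{2\rho}{1-1.1\rho}<\tfrac{2/5}{1-0.22}<1$, so componentwise quantization returns $\w^*$ for $T$ sufficiently large.

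\textbf{Main obstacle.} The chief technical hurdle is controlling the ratio $a_t/a_{t+1}$ uniformly along the trajectory without assuming an explicit decay rate for $\eta_t$. This is precisely what forces the slow-decay clause of Assumption~\ref{assump:step_size} and degrades the denominator from $(1-\rho)$ to $(1-1.1\rho)$ through the $\delta=0.1$ slack. A secondary care-point is that the one-step-return property holds only for $t\ge T_0(0.1)$, so the time axis must be partitioned into a bounded transient plus a cycle regime; this bookkeeping is routine but must be made quantitative so that the transient truly contributes only an $O(1/T)$ term, independent of $\x^0$ and of the particular schedule $\{\eta_t\}$ satisfying Assumption~\ref{assump:step_size}.
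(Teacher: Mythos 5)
Your proposal is correct and follows essentially the same route as the paper's proof: the same one-step reset argument made asymptotic via the slow-decay condition $\eta_{t+1}/\eta_t\to 1$ (your $\delta=0.1$ slack is exactly the paper's $M_0=1.1$ ratio bound), the same cycle-length lower bound $\lceil(1-2.1\rho)/\rho\rceil$ using non-increasing step sizes, the same occupation-time bound giving the wrong-sign fraction $\rho/(1-1.1\rho)$, and the same absorption of the finite burn-in (via non-summability of $\eta_t$) into the $O(1/(T\sqrt n))$ term. No gaps; the exact-recovery conclusion from $\rho<1/5$ is verified just as in the paper.
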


\begin{proof}[Proof of Theorem~\ref{thm:ergodic_general_step_size_final_corrected}.]
The proof adapts the constant step size analysis by verifying the persistence of essential cycle properties under Assumption~\ref{assump:step_size}. We define $D_{\mathrm{eff}} := \frac{2\|\v\|^2}{\tau \sqrt{n}}$ as the constant factor for the full corrective drift magnitude.

\paragraph{Step 1: One-Step Reset Property.}
We show that for a sufficiently large $t$, if $x_j^t$ enters the incorrect sign regime, $x_j^{t+1}$ resets to the correct sign. Assume $w_j^* = +1/\sqrt{n}$. Let time $t$ be when $x_j^t < 0$ and $x_j^{t-1} \ge 0$.
The update for $x_j^t$ (using step size $\eta_t$ and based on $x_j^{t-1}$) involved zero drift (as $\mathrm{sign}(x_j^{t-1}) = \mathrm{sign}(w_j^*)$). Thus, $x_j^t = x_j^{t-1} + \eta_t \epsilon^{t}_j \ge -\eta_t \widetilde{\Delta}$.
The update for $x_j^{t+1}$ (using step size $\eta_{t+1}$) involves the full corrective drift $\eta_{t+1}D_{\mathrm{eff}}$:
\[ x_j^{t+1} = x_j^t + \eta_{t+1} D_{\mathrm{eff}} + \eta_{t+1} \epsilon^{t+1}_j. \]
Using $x_j^t \ge -\eta_t \widetilde{\Delta}$ and $\epsilon^{t+1}_j \ge -\widetilde{\Delta}$:
\[ x_j^{t+1} \ge -\eta_t \widetilde{\Delta} + \eta_{t+1} D_{\mathrm{eff}} - \eta_{t+1} \widetilde{\Delta}. \]
We substitute $\widetilde{\Delta} = \rho D_{\mathrm{eff}}$:
\[ x_j^{t+1} \ge -\eta_t (\rho D_{\mathrm{eff}}) + \eta_{t+1} D_{\mathrm{eff}} - \eta_{t+1} (\rho D_{\mathrm{eff}}), \] which gives
\[ x_j^{t+1} \ge D_{\mathrm{eff}} \left[ \eta_{t+1}(1 - \rho) - \eta_t \rho \right] = D_{\mathrm{eff}} \eta_t \left[ \frac{\eta_{t+1}}{\eta_t}(1 - \rho) - \rho \right]. \]
For $x_j^{t+1} > 0$, we need $\frac{\eta_{t+1}}{\eta_t}(1 - \rho) - \rho > 0$, or $\frac{\eta_{t+1}}{\eta_t} > \frac{\rho}{1 - \rho}$.
By Assumption~\ref{assump:step_size}, $\lim_{t\to\infty} \frac{\eta_{t+1}}{\eta_t} = 1$. Since $\rho < 1/5 < 1/2$, we have $\frac{\rho}{1 - \rho} < 1$. Thus, for a sufficiently large $t$, the condition holds, ensuring $x_j^{t+1} > 0$. A symmetric argument holds for $w_j^* = -1/\sqrt{n}$. This relies on Assumptions~\ref{assump:step_size} and $\rho < 1/5 < 1/2$.

\paragraph{Step 2: Cycle Structure and Occupation Time Bound.}
By symmetry, assume $w_j^* = +1/\sqrt{n}$ without loss of generality.

By Step 1 and Assumption~\ref{assump:step_size} ($\lim_{s\to\infty} \eta_{s+1}/\eta_s = 1$), there exists a finite $T_{\mathrm{burnin}}$ such that for all $t \ge T_{\mathrm{burnin}}$:
\begin{itemize}
    \item[(i)] The one-step reset condition $\eta_{t+1}/\eta_t > \rho/(1-\rho)$ holds since $0 < \rho <1/5$. In other words, if $x_j^t < 0$ (incorrect sign, having entered from $x_j^{t-1} \ge 0$), then $x_j^{t+1} \ge 0$ (one-step reset to the correct sign).
    \item[(ii)] The ratio of consecutive step sizes is bounded from above: $\eta_t/\eta_{t+1} \le M_0$ for some chosen universal constant $M_0 \ge 1$.
\end{itemize}

Since the initial phase for $t < T_{\mathrm{burnin}}$ is finite,  $x_j^{T_{\mathrm{burnin}}} \in [-R,R]$ for some finite $R > 0$. If $x_j^t$ becomes negative during $t < T_{\mathrm{burnin}}$, the corrective drift component in the update, $x_j^{s+1} \ge x_j^s + \eta_{s+1}D_{\mathrm{eff}}(1-\rho)$, is positive for each subsequent step $s$ while $x_j^s$ remains incorrect ($x_j^s < 0$). Due to the non-summability of $\{\eta_s\}$ (Assumption~\ref{assump:step_size}(2)), the sum of these positive drift contributions $\sum \eta_{s+1}D_{\mathrm{eff}}(1-\rho)$ must eventually exceed any finite negative value of $x_j^{t'}$, ensuring that $x_j^t$ returns to the correct sign regime in some finite time.

For this coordinate $j$, let $T'_{\mathrm{init},j}$ be a finite time index ($T'_{\mathrm{init},j} \ge T_{\mathrm{burnin}}$) by which $x_j^t$ has returned to the correct sign region and conditions (i) and (ii) above reliably hold for subsequent iterations. For the vector and $\ell_\infty$ analyses below, take the common burn-in time
\[
T'_{\mathrm{init}}:=\max_{j\in[n]}T'_{\mathrm{init},j},
\]
which is finite since $n$ is finite. After this common burn-in, the reset and cycle bounds hold simultaneously for all coordinates.

For $t \ge T'_{\mathrm{init}}$, the system exhibits a stable cyclic behavior:
\begin{itemize}
    \item An incorrect sign phase (e.g., $x_j^k < 0$ for $w_j^*=1/\sqrt{n}$, where $k \ge T'_{\mathrm{init}}$ and $x_j^{k-1} \ge 0$) lasts exactly one time step, with $x_j^{k+1} \ge 0$ due to the one-step reset property active for $k \ge T'_{\mathrm{init}} \ge T_{\mathrm{burnin}}$.

    \item We now determine the minimum duration, $L_{M_0}$, of the subsequent correct sign phase starting at $x_j^{k+1}$. After the reset at step $k+1$, the value is $x_j^{k+1} \ge D_{\mathrm{eff}}[\eta_{k+1}(1-\rho) - \eta_k\rho]$.
    During the correct phase (steps $s=k+1, k+2, \dots$), the update is $x_j^{s+1} = x_j^s + \eta_{s+1}\epsilon^{s+1}_j$. The maximum decrease in $x_j$ at any step $s+1$ within this phase is $\eta_{s+1}\widetilde{\Delta} = \eta_{s+1}\rho D_{\mathrm{eff}}$.
    By Assumption~\ref{assump:step_size}, $\eta_{s+1} \le \eta_{k+1}$ for all steps $s+1$ in this particular correct phase starting after $x_j^{k+1}$. Thus, the maximum decrease in any single step of this correct phase is bounded by $\eta_{k+1}\rho D_{\mathrm{eff}}$.

    Let $L_{M_0}$ be the minimum number of steps in this correct phase. For the phase to end (i.e., for $x_j$ to become negative again), the cumulative maximum decrease must overcome $D_{\mathrm{eff}}[\eta_{k+1}(1-\rho) - \eta_k\rho]$:
    \[ L_{M_0} \cdot (\eta_{k+1}\rho D_{\mathrm{eff}}) \ge D_{\mathrm{eff}}[\eta_{k+1}(1-\rho) - \eta_k\rho], \]
    or
    \[ L_{M_0} \ge \frac{\eta_{k+1}(1-\rho) - \eta_k\rho}{\eta_{k+1}\rho} = \frac{1-\rho}{\rho} - \frac{\eta_k}{\eta_{k+1}}. \]
    By Assumption~\ref{assump:step_size}(3), $\lim_{k\to\infty} \eta_k/\eta_{k+1} = 1$. Since $\eta_k \ge \eta_{k+1}$, we have $1 \le \eta_k/\eta_{k+1}$. We can choose $T_{\mathrm{burnin}}$ (and thus $T'_{\mathrm{init}}$) to be sufficiently large such that for all $k \ge T'_{\mathrm{init}}$, $\eta_k/\eta_{k+1} \le M_0$, where $M_0 \ge 1$ is a universal constant arbitrarily close to $1$. For a fixed, concrete non-asymptotic analysis for $k \ge T'_{\mathrm{init}}$ (where $T'_{\mathrm{init}}$ might be chosen to guarantee a specific $M_0$, e.g., $M_0=1.5$ if we assume $\eta_{k+1}/\eta_k \ge 2/3$), we have:
    \[ L_{M_0} \ge \frac{1-\rho}{\rho} - M_0 = \frac{1-(1+M_0)\rho}{\rho}. \]
    Since $L_{M_0}$ must be an integer, $L_{M_0} \ge \left\lceil \frac{1-(1+M_0)\rho}{\rho} \right\rceil$. Note that for a constant step size case, i.e., $\eta_k = \eta_{k+1}$, then $M_0$ can be set to $1$, so we recover $L \ge \left\lceil \frac{1-2\rho}{\rho} \right\rceil$, the cycle length relation and $\rho$ we used for the proof (Step 2) of Theorem~\ref{thm:ergodic_informal_nonasymp}.

\end{itemize}

Thus, for $t \ge T'_{\mathrm{init}}$, cycles consist of one incorrect step followed by at least $L_{M_0}$ correct steps (minimum length $1+L_{M_0}$). The behavior during an initial finite period (up to $T'_{\mathrm{init}}$) does not affect the asymptotic fraction of incorrect steps. Therefore, following the counting argument from Step 2 of the proof of Theorem~\ref{thm:ergodic_informal_nonasymp}, if $\gamma$ is the number of negative events in the interval $[T'_{\mathrm{init}}+1, T]$ and $T' = T - T'_{\mathrm{init}}$, we have for large $T'$ (i.e., large $T$):
\[
\gamma \le \frac{T'}{1+L_{M_0}} + 1 \le T' \frac{\rho}{1-M_0 \cdot \rho} + 1.
\]

\paragraph{Step 3: Bounding the Average Sign.}
From Step 2, we established that after an initial finite period $T'_{\mathrm{init}}-1$, the fraction of time of each component exhibiting an incorrect sign is bounded by $\frac{\rho}{1-M_0\rho}$ for the time in $[T'_{\mathrm{init}},T]$ for sufficiently large $T$. The initial $T'_{\mathrm{init}}-1$ steps contribute terms of order $O(T'_{\mathrm{init}}/T)$ to the final average sign error.

Given the asymptotic upper bound on the fraction of incorrect steps, the logic from Step 3 of the proof of Theorem~\ref{thm:ergodic_informal_nonasymp} applies. The same analysis in Step 3 of the proof of Theorem~\ref{thm:ergodic_informal_nonasymp}  yields the similar equation \eqref{eq:sum_bound}. Since the argument is purely algebraic and we have already presented it in Step 3 of the proof of Theorem~\ref{thm:ergodic_informal_nonasymp}, we omit the detailed re-computation. This computation yields
\[
\left| \frac{1}{T} \sum_{t=1}^T \mathrm{sign}(x_j^t) - w_j^* \sqrt{n} \right| \le \frac{2\rho}{1-M_0\rho} + O\left(\frac{T'_{\mathrm{init}}}{T}\right).
\]
Since $T'_{\mathrm{init}}$ is a finite constant ($\eta_{\max}$, $\rho$, $M_0$, but not $T$), the term $O(T'_{\mathrm{init}}/T)$ is $O(1/T)$.
Hence, the $\ell_\infty$ bound on $\overline{\w}^T$ becomes:
\[ \left\| \w^* -  \overline{\w}^T\right\|_\infty \le \frac{1}{\sqrt{n}} \left( \frac{2\rho}{1-M_0\rho} \right) + O\left(\frac{1}{T\sqrt{n}}\right). \]
Since $M_0 \ge 1$ can be chosen arbitrarily close to $1$, by choosing $M_0 = 1.1$ ($T'_{\mathrm{init}}$ depends on $M_0$ but still finite), the $L_\infty$ bound on $\overline{\w}^T$ and the exact recovery condition ($\rho < 1/5$, for large $T$ as detailed in the proof of Theorem~\ref{thm:ergodic_informal_nonasymp}) follows.

\end{proof}

The next corollary generalizes Theorem~\ref{thm:ergodic_general_step_size_final_corrected} to the case when $\x^0$ is not necessarily 0 but satisfies $\x^0 \in \left[-\frac{c_0}{\sqrt{n}} , \frac{c_0}{\sqrt{n}} \right]^n$.
Note that an initialization scale of $O(1/\sqrt{n})$ per component is consistent with standard practices such as Glorot normal initialization, which is commonly used for the initialization in training neural networks.

\begin{cor}[Ergodic Convergence with Bounded Initialization]
\label{cor:ergodic_general_nonzero_init}
Suppose the conditions of Theorem~\ref{thm:ergodic_general_step_size_final_corrected} hold, except that the initialization $\x^0$ is not necessarily $0$ but satisfies $\x^0 \in \left[-\frac{c_0}{\sqrt{n}} , \frac{c_0}{\sqrt{n}} \right]^n$ for some constant $c_0 > 0$.
Then, for a sufficiently large $T$ and with probability at least $1 - 5m\exp(-cn)$ for some universal constant $c > 0$, the conclusions of Theorem~\ref{thm:ergodic_general_step_size_final_corrected} still hold:
\[
\left\| \overline{\w}^T - \w^* \right\|_\infty \le \frac{1}{\sqrt{n}} \cdot \frac{2 \rho}{1-1.1 \rho} + \frac{C_{\mathrm{init}}}{T\sqrt{n}},
\] where $C_{init}$ is a finite constant that depends on the initialization bound constant $c_0, \|v\|$ and the step size $\eta_t$.
Moreover, if $T$ is sufficiently large, then $\mathcal{Q}\!\left( \overline{\w}^T \right) = \w^*$.
\end{cor}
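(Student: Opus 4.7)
The plan is to reduce the bounded-initialization case to the zero-initialization case already handled by Theorem~\ref{thm:ergodic_general_step_size_final_corrected}, by arguing that the non-zero starting point only extends the burn-in period by a \emph{finite} amount depending on $c_0$, $\|\v\|$, $\rho$, and the step size schedule, but not on $T$ or $n$. The subsequent cyclic analysis and occupation time bounds then carry over verbatim, and the initial transient contributes only an $O(1/T)$ correction absorbed into $C_{\mathrm{init}}$.

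First, I would observe that Step 1 of the proof of Theorem~\ref{thm:ergodic_general_step_size_final_corrected} (the asymptotic one-step reset property) is insensitive to $\x^0$: it activates whenever $\eta_{t+1}/\eta_t > \rho/(1-\rho)$, which holds for all $t \ge T_{\mathrm{burnin}}$ by Assumption~\ref{assump:step_size}(3) and $\rho<1/5$. The only new ingredient needed is a bound on the time at which every coordinate has entered the correct sign region for the first time. Fix coordinate $j$ with $w_j^* = 1/\sqrt{n}$ (the other case is symmetric) and suppose $x_j^0 \in [-c_0/\sqrt{n},0)$. While $x_j^t < 0$, the update applies the full corrective drift $\eta_t D_{\mathrm{eff}}$ with $D_{\mathrm{eff}} = 2\|\v\|^2/(\tau\sqrt{n})$, offset by a perturbation of magnitude at most $\eta_t \widetilde{\Delta} = \eta_t \rho D_{\mathrm{eff}}$, giving a net increment of at least $\eta_t D_{\mathrm{eff}}(1-\rho) > 0$. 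By the non-summability condition $\sum_t \eta_t = \infty$ in Assumption~\ref{assump:step_size}(2), the cumulative drift must eventually exceed $c_0/\sqrt{n}$, so a finite crossing time $T_j$ exists. I would then set $T'_{\mathrm{init}} := T_{\mathrm{burnin}} \vee \max_j T_j$, which is finite and depends only on $c_0$, $\|\v\|$, $\rho$, and $\{\eta_t\}$.

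For $t \ge T'_{\mathrm{init}}$, the dynamics are indistinguishable from those analyzed in Steps 2 and 3 of the proof of Theorem~\ref{thm:ergodic_general_step_size_final_corrected}, so the same cycle-length lower bound $1+L_{M_0}$ and the same fraction bound $\rho/(1-M_0\rho)$ for incorrect-sign steps apply. Splitting the ergodic sum as
\[
\frac{1}{T}\sum_{t=1}^T w_j^t = \frac{1}{T}\sum_{t=1}^{T'_{\mathrm{init}}-1} w_j^t + \frac{1}{T}\sum_{t=T'_{\mathrm{init}}}^{T} w_j^t,
\]
the first piece is bounded componentwise by $(T'_{\mathrm{init}}-1)/(T\sqrt{n})$ since $|w_j^t| \le 1/\sqrt{n}$, and the second inherits the $\ell_\infty$ bound $\frac{1}{\sqrt{n}}\cdot\frac{2\rho}{1-1.1\rho} + O(1/(T\sqrt{n}))$ from the original theorem. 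Packaging $T'_{\mathrm{init}}$ together with the constants from the zero-initialization proof into $C_{\mathrm{init}}$ yields the stated inequality, and exact recovery via $\mathcal{Q}(\overline{\w}^T)=\w^*$ follows for $T$ large enough to make the right-hand side strictly less than $1/\sqrt{n}$, exactly as in the proof of Theorem~\ref{thm:ergodic_general_step_size_final_corrected}.

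The main obstacle I anticipate is giving a clean, step-size-agnostic expression for $C_{\mathrm{init}}$. For $\eta_t = \eta_0 t^{-p}$ with $p\in(0,1)$, solving $\sum_{s=1}^{T'_{\mathrm{init}}} \eta_s \gtrsim c_0/((1-\rho)\|\v\|^2)$ gives $T'_{\mathrm{init}} = O\bigl((c_0/(\eta_0 \|\v\|^2))^{1/(1-p)}\bigr)$, while $p=1$ yields an exponential dependence, and constant step size yields a linear one. Since the corollary only requires $C_{\mathrm{init}}$ to be a finite constant depending on $c_0$, $\|\v\|$, and $\{\eta_t\}$, the non-summability in Assumption~\ref{assump:step_size}(2) is enough to close the argument without making the rate in $p$ explicit; any attempt to sharpen the corollary to quantify $C_{\mathrm{init}}$ would require case analysis by the decay rate.
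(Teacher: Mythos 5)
Your proposal is correct and follows essentially the same route as the paper's proof: bound the transient $T_{\mathrm{init}}$ using the full corrective drift $\eta_t D_{\mathrm{eff}}(1-\rho)$ while the sign is wrong together with non-summability of $\{\eta_t\}$ (the paper just makes this slightly more explicit by sandwiching $x_j^{T_{\mathrm{init}}}$ to bound $\sum_{t\le T_{\mathrm{init}}}\eta_t$), then split the ergodic average and absorb the finite transient into the $C_{\mathrm{init}}/(T\sqrt{n})$ term. The only nitpick is that the first piece should be bounded by its deviation from $w_j^*$, i.e. $2(T'_{\mathrm{init}}-1)/(T\sqrt{n})$ rather than $(T'_{\mathrm{init}}-1)/(T\sqrt{n})$, which changes nothing in the conclusion.
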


\begin{proof}[Proof of Corollary~\ref{cor:ergodic_general_nonzero_init}.]
We demonstrate that a bounded non-zero initialization $\x^0$ does not alter the asymptotic conclusions of Theorem~\ref{thm:ergodic_general_step_size_final_corrected}. Without loss of generality, we analyze a single component $j$ and assume $w_j^* = 1/\sqrt{n}$. The other case follows by symmetry.

Case 1: $x_j^0 \ge 0$.
In this scenario, the iterate component $x_j^0$ starts with the correct sign. If $x_j^0 \ge 0$, the argument in the proof of Theorem~\ref{thm:ergodic_general_step_size_final_corrected} (specifically, $T_0$ is defined as the first time the iterate $x_j^t$ becomes negative as before) is unchanged, and the subsequent cycle analysis applies.

Case 2: $x_j^0 < 0$.
Let $T_{\mathrm{init}}$ be the first time $t \ge 1$ such that $x_j^t \ge 0$.
For $t \in [0, T_{\mathrm{init}}-1]$, $x_j^t < 0$, so $\mathrm{sign}(x_j^t) = -1$. The update rule for $x_j^{t+1}$ is:
\[ x_j^{t+1} = x_j^t - \eta_{t+1} \frac{\|\v\|^2}{\tau} \left(\frac{1}{\sqrt{n}}\mathrm{sign}(x_j^t) - w_j^*\right) + \eta_{t+1} \epsilon^{t+1}_j. \]
With $\mathrm{sign}(x_j^t) = -1$ and $w_j^* = 1/\sqrt{n}$, the drift term is $-\eta_{t+1} \frac{\|\v\|^2}{\tau} (-\frac{2}{\sqrt{n}}) = \eta_{t+1} D_{\mathrm{eff}}$, where $D_{\mathrm{eff}} = \frac{2\|\v\|^2}{\tau \sqrt{n}}$ is a constant.
The perturbation $\eta_{t+1}\epsilon^{t+1}_j$ satisfies $\eta_{t+1}\epsilon^{t+1}_j \ge -\eta_{t+1}\widetilde{\Delta}$. Using $\widetilde{\Delta} = \rho D_{\mathrm{eff}}$, this is $-\eta_{t+1}\rho D_{\mathrm{eff}}$, where $0< \rho < 1/5$ from the assumption in Theorem~\ref{thm:ergodic_general_step_size_final_corrected}.
Thus, for $t < T_{\mathrm{init}}$, we have
\[ x_j^{t+1} \ge x_j^t + \eta_{t+1} D_{\mathrm{eff}} - \eta_{t+1} \rho D_{\mathrm{eff}} = x_j^t + \eta_{t+1} D_{\mathrm{eff}} (1  - \rho ).  \]
Applying this repeatedly from $t=0$ to $T_{\mathrm{init}}-1$ yields
\[ x_j^{T_{\mathrm{init}}} \ge x_j^0 +  D_{\mathrm{eff}} (1  - \rho) \sum_{t=1}^{T_{\mathrm{init}}}\eta_{t}. \]
Since $x_j^0 \ge -c_0/\sqrt{n}$,
\[ x_j^{T_{\mathrm{init}}} \ge -\frac{c_0}{\sqrt{n}} +  D_{\mathrm{eff}} (1  - \rho) \sum_{t=1}^{T_{\mathrm{init}}}\eta_{t}. \]
Also, by definition of $T_{\mathrm{init}}$, $x_j^{T_{\mathrm{init}}-1} < 0$ and $x_j^{T_{\mathrm{init}}} \ge 0$. The update for $x_j^{T_{\mathrm{init}}}$ is $x_j^{T_{\mathrm{init}}} = x_j^{T_{\mathrm{init}}-1} + \eta_{T_{\mathrm{init}}} D_{\mathrm{eff}} + \eta_{T_{\mathrm{init}}} \epsilon^{T_{\mathrm{init}}}_{j}$.
Since $x_j^{T_{\mathrm{init}}-1} < 0$ and $\epsilon^{T_{\mathrm{init}}}_{j} \le \widetilde{\Delta} = \rho D_{\mathrm{eff}}$, we have:
\[ 0 \le x_j^{T_{\mathrm{init}}} < 0 + \eta_{T_{\mathrm{init}}} D_{\mathrm{eff}} + \eta_{T_{\mathrm{init}}} \rho D_{\mathrm{eff}} = \eta_{T_{\mathrm{init}}}D_{\mathrm{eff}} (1  + \rho ). \]
Combining these bounds on $x_j^{T_{\mathrm{init}}}$:
\[ D_{\mathrm{eff}} (1  - \rho) \sum_{t=1}^{T_{\mathrm{init}}}\eta_{t} \le x_j^{T_{\mathrm{init}}} - x_j^0 < \eta_{T_{\mathrm{init}}}D_{\mathrm{eff}} (1  + \rho ) + \frac{c_0}{\sqrt{n}}. \]
Therefore,
\begin{align*}
\sum_{t=1}^{T_{\mathrm{init}}}\eta_{t} &< \frac{\eta_{T_{\mathrm{init}}}D_{\mathrm{eff}} (1  + \rho ) + \frac{c_0}{\sqrt{n}}}{D_{\mathrm{eff}} (1  - \rho)} \\
&= \frac{\eta_{T_{\mathrm{init}}} (1  + \rho )}{ (1  - \rho)} + \frac{c_0/\sqrt{n}}{D_{\mathrm{eff}} (1  - \rho)} \\
&\le \frac{\eta_{\max} (1  + \rho )}{ (1  - \rho)} + \frac{\tau c_0}{2 \|\v\|^2 (1  - \rho)},
\end{align*}
where $\eta_{\max}$ is an upper bound on the step sizes (it can be set to $\eta_1$ since it is non-increasing).
Since $\sum_{t=1}^{\infty}\eta_{t} = \infty$ by Assumption~\ref{assump:step_size} and the right-hand side is a finite constant $T_{\mathrm{init}}$ must be finite, which depends on $c_0, \|v\|$ and the step size $\eta_t$.

After $T_{\mathrm{init}}$ iterations, $x_j^{T_{\mathrm{init}}} \ge 0$, and the subsequent analysis resembles the case where the iterate starts with the correct sign. The overall average $\frac{1}{T}\sum_{t=1}^T \mathrm{sign}(x_j^t)$ can be decomposed as below:
\[ \frac{1}{T}\sum_{t=1}^T \mathrm{sign}(x_j^t) = \frac{1}{T}\sum_{t=1}^{T_{\mathrm{init}}-1} \mathrm{sign}(x_j^t) + \frac{1}{T}\sum_{t=T_{\mathrm{init}}}^T \mathrm{sign}(x_j^t). \]
For $t \in [1, T_{\mathrm{init}}-1]$, $\mathrm{sign}(x_j^t) = -1$ (since $w_j^*=1/\sqrt{n}$ and $x_j^0<0$).
For the segment $[T_{\mathrm{init}}, T]$ of length $\widetilde{T} = T-T_{\mathrm{init}}+1$, the analysis from the proof of Theorem~\ref{thm:ergodic_general_step_size_final_corrected} (which adapts Theorem~\ref{thm:ergodic_informal_nonasymp}) implies:
\[ \left| \frac{1}{\widetilde{T}} \sum_{t=T_{\mathrm{init}}}^T \mathrm{sign}(x_j^t) - w_j^* \sqrt{n} \right| \le \frac{2\rho}{1-1.1\rho} + O\left(\frac{1}{\widetilde{T}}\right). \]
Thus, we have
\begin{align*}
\frac{1}{T}\sum_{t=1}^T \mathrm{sign}(x_j^t) - w_j^*\sqrt{n} &= \frac{1}{T}\sum_{t=1}^{T_{\mathrm{init}}-1} (\mathrm{sign}(x_j^t) - w_j^*\sqrt{n}) + \frac{1}{T}\sum_{t=T_{\mathrm{init}}}^T (\mathrm{sign}(x_j^t) - w_j^*\sqrt{n}) \\
&= \frac{T_{\mathrm{init}}-1}{T}(-1 - w_j^*\sqrt{n}) + \frac{\widetilde{T}}{T} \left( \frac{1}{\widetilde{T}} \sum_{t=T_{\mathrm{init}}}^T \mathrm{sign}(x_j^t) - w_j^*\sqrt{n} \right).
\end{align*}
Since $w_j^*=1/\sqrt{n}$, the first term is $\frac{T_{\mathrm{init}}-1}{T}(-2)$.
Thus, for large $T$,
\small
\[ \left| \frac{1}{T} \sum_{t=1}^T \mathrm{sign}(x_j^t) - w_j^*\sqrt{n} \right| \le \frac{2(T_{\mathrm{init}}-1)}{T} + \frac{T-T_{\mathrm{init}}+1}{T} \left( \frac{2 \rho}{1-1.1 \rho} + O\left(\frac{1}{T-T_{\mathrm{init}}+1}\right) \right) \]
\normalsize
\[ \le \frac{2 \rho}{1-1.1 \rho} + O\left(\frac{T_{\mathrm{init}}}{T}\right). \]

Therefore, the $\ell_\infty$ bound on $\overline{\w}^T$ stated in Theorem~\ref{thm:ergodic_general_step_size_final_corrected},
\[ \left\| \w^* - \overline{\w}^T \right\|_\infty \le \frac{1}{\sqrt{n}} \cdot \frac{2 \rho}{1-1.1 \rho}  + \frac{C_{\mathrm{init}}}{T\sqrt{n}}, \]
and the subsequent conclusions regarding exact recovery for sufficiently large $T$, hold even if $x_j^0$ starts within the specified bounded range, as the effect of the initial transient period $T_{\mathrm{init}}$ is absorbed into the $C_{\mathrm{init}}/T$ term of the bound.
\end{proof}

\subsection{Last-Iterate Convergence Analysis}

\begin{theorem}[Last-Iterate Convergence]
\label{thm:visits_general_decaying}
Suppose that Assumptions~\ref{assump:data}--\ref{assump:step_size} hold.
Assume that the sample size $N$ is large enough such that
\[
N \ge \max\left( C_1 n^2, \left(  \frac{5 (n+4)^2 (C' \|\v\|_1^2 + C''K_\xi \|\v\|_1)}{2\|\v\|^2} \right)^2  \right)
\]  for some positive universal constants $C_1, C'$ and $C''$.

Then, with probability at least $1 - 5m\exp(-cn)$ for some positive universal constant $c > 0$, the iterates $\w^t$ generated by the STE-gradient process with $\x^0 \in \left[-\frac{c_0}{\sqrt{n}}, \frac{c_0}{\sqrt{n}} \right]^n$ for a given $c_0 > 0$ visit the target state $\w^*$ infinitely often. Furthermore, assuming the noise variables $\xi^{(i)}$ are independent of the data and across samples, continuous, and not a.s. zero, then with $
\|v\|_0:=|\operatorname{supp}(\v)|=|\{j\in[m]:v_j\neq0\}|$,
with probability at least
\[
\frac{1}{2}- 2^{-\|v\|_0N-1}-5m\exp(-cn)
\]
over the draw of the dataset and the noise $\{(\Z^{(i)},\xi^{(i)})\}_{i=1}^N$, the iterates also depart from $\w^*$ infinitely often.

\end{theorem}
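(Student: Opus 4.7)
The plan is to extend the constant-step-size, zero-initialization proof of Theorem~\ref{thm:noisy_recurrence_departure} to the general step-size schedule satisfying Assumption~\ref{assump:step_size} and to bounded initialization $\x^0\in[-c_0/\sqrt n,c_0/\sqrt n]^n$, in much the same way that Theorem~\ref{thm:ergodic_general_step_size_final_corrected} and Corollary~\ref{cor:ergodic_general_nonzero_init} extended the ergodic analysis. The backbone is again the representation \eqref{eq:noisy_main_eq_2} together with the $\ell_\infty$ concentration bound $|\epsilon^{t}_j|\le\widetilde\Delta=\rho D_{\mathrm{eff}}$ from Theorem~\ref{thm:RAIC for two-layer quantized neural networks_appendix}, which holds with probability at least $1-5m\exp(-cn)$ and is independent of the step size. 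The sample-size requirement in the statement implies $\rho\le n/(5(n+4)^2)$, so for $M_0$ chosen slightly larger than $1$ (e.g.\ $M_0=1.1$) we have $\rho<1/(n+M_0)$, which in turn gives $L_{M_0}:=\lceil(1-(1+M_0)\rho)/\rho\rceil$ with $L_{M_0}+1>n$. This strict inequality is exactly the counting slack needed below.

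For the infinite-visits part, I first argue that the bounded initialization contributes only a finite transient. Following the argument in the proof of Corollary~\ref{cor:ergodic_general_nonzero_init}, if some coordinate starts in the incorrect sign region then each subsequent step contributes a positive drift $\eta_{s+1}D_{\mathrm{eff}}(1-\rho)$ toward the correct sign, and non-summability $\sum_s\eta_s=\infty$ forces a finite return time. Next, I invoke the one-step reset property and the asymptotic cycle lower bound $1+L_{M_0}$ established in Steps 1 and 2 of the proof of Theorem~\ref{thm:ergodic_general_step_size_final_corrected}: beyond some finite $T'_{\mathrm{init}}$ the slow-decay condition $\eta_{t+1}/\eta_t\to 1$ guarantees both $\eta_{t+1}/\eta_t>\rho/(1-\rho)$ and $\eta_t/\eta_{t+1}\le M_0$. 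Let $S_j(T)$ be the set of times $t\in[T'_{\mathrm{init}},T]$ with $w_j^t\neq w_j^*$; the cycle analysis gives $|S_j(T)|\le T/(1+L_{M_0})+O(1)$. A union bound over the $n$ coordinates yields
\[
\#\{t\in[T'_{\mathrm{init}},T]:\w^t=\w^*\}\;\ge\;T-\frac{nT}{1+L_{M_0}}-O(n)\;=\;\Bigl(1-\tfrac{n}{1+L_{M_0}}\Bigr)T-O(n),
\]
and the factor $1-n/(1+L_{M_0})$ is a positive constant by our choice of $\rho$. Hence the number of visits to $\w^*$ grows linearly in $T$ and is therefore infinite.

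For the infinite-departures part, the step-size schedule plays no role in the structural argument. The STE-gradient evaluated at $\w^*$ equals $\tilde{\nabla}_{\w}L(\w^*)=-\mathbf{G}$ with $\mathbf{G}$ as in Lemma~\ref{lem:gradient_symmetry_nonvanishing}, so $\tilde{\nabla}_{\w}L(\w^*)$ is a.s.\ nonzero and each component is symmetric about $0$ under Assumption~\ref{assump:noise} with continuous $\xi$. Thus, with probability at least $1/2$ over the sample draw, there exists a coordinate $j$ with $\mathrm{sign}(\tilde{\nabla}_{\w}L(\w^*)_j)=\mathrm{sign}(w_j^*)$; conditional on this event, suppose toward a contradiction that $\w^u=\w^*$ for all $u\ge t$. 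Then the iteration reduces to $x_j^{u+1}=x_j^u-\eta_{u+1}\tilde{\nabla}_{\w}L(\w^*)_j$, and summing from $t$ to $T$ gives a monotone drift of total magnitude proportional to $\sum_{u>t}\eta_u$, which diverges by Assumption~\ref{assump:step_size}(2). This forces $\mathrm{sign}(x_j^{u+1})\neq\mathrm{sign}(w_j^*)$ in finite time, contradicting $\w^u=\w^*$. Combining with the union bound over the RAIC event gives the claimed departure probability $1/2-5m\exp(-cn)$.

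The main obstacle is controlling the cycle length uniformly under a non-constant schedule: because the $L_{M_0}$ lower bound depends on the instantaneous ratio $\eta_k/\eta_{k+1}$, one must carefully verify that the asymptotic ratio condition in Assumption~\ref{assump:step_size}(3) allows the same strict inequality $L_{M_0}+1>n$ as in the constant step-size case, and that the sample-size scaling $N\gtrsim(n+4)^4$ in the theorem statement is sufficient for this after absorbing the factor $M_0>1$. A secondary nuance is that the finite initial transient $T'_{\mathrm{init}}$ depends on $c_0,\|\v\|,\eta_{\max}$ and $\rho$, but since $T'_{\mathrm{init}}$ is a finite deterministic constant it only contributes an additive $O(n)$ term to the visit count and does not affect either the infinite-visit or the infinite-departure conclusions.
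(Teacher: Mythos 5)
Your proposal follows essentially the same route as the paper's own proof: it handles the bounded initialization via the finite-transient argument of Corollary~\ref{cor:ergodic_general_nonzero_init}, reuses the one-step reset and $1+L_{M_0}$ cycle bound from the general step-size ergodic analysis to show the fraction of time at $\w^*$ is bounded below by $1-n/(1+L_{M_0})>0$ (using $\rho<1/(n+M_0)$ from the stated sample size), and establishes departures exactly as the paper does, via Lemma~\ref{lem:gradient_symmetry_nonvanishing} together with non-summability of the step sizes and a union bound with the concentration event. The argument and all key steps match the paper's proof, so it is correct as proposed.
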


\begin{proof}[Proof of Theorem \ref{thm:visits_general_decaying}.]
We first note that the conclusions of this theorem regarding infinite visits and departures are robust to bounded non-zero initializations $\x^0 \in \left[-\frac{c_0}{\sqrt{n}} , \frac{c_0}{\sqrt{n}} \right]^n$. As shown in the analysis for Corollary~\ref{cor:ergodic_general_nonzero_init}, any initial phase where $x_j^0$ has an incorrect sign lasts for a finite time $T_{\mathrm{init}}$. The subsequent asymptotic analysis of recurrence and departure is unaffected by this finite transient period.

The proof adapts arguments from the constant step size case (Theorem~\ref{thm:noisy_recurrence_departure}) using properties established under Assumption~\ref{assump:step_size}.
First, the sample size requirement in Theorem~\ref{thm:visits_general_decaying} implies
that $\rho := \frac{(C'\|\v\|_1^2 +  C''K_\xi\|\v\|_1) n}{2\|\v\|^2 \sqrt{N}} < 1/(n+M_0)$ where $M_0 = 1.1$ by an algebraic computation. This in turn implies that $L_{M_0}+1 > n$, where $L_{M_0} = \lceil(1-(1+M_0)\rho)/\rho\rceil = \lceil(1-2.1\rho)/\rho\rceil$. Here $L_{M_0}$, rather than the constant-step-size quantity $L$, denotes the minimum correct-phase length in the general step-size setting; the use of this $L_{M_0}$ and $M_0 = 1.1$ is justified in Step 2 of the proof of Theorem~\ref{thm:ergodic_general_step_size_final_corrected}.

\textbf{Part 1: Infinite Visits to $\w^*$}

This relies on showing the fraction of time spent at $\w^*$ is positive. The cycle analysis presented in the proof of Theorem~\ref{thm:ergodic_general_step_size_final_corrected} establishes that, after a finite coordinate-dependent burn-in, each coordinate has cycles consisting of one incorrect step and at least $L_{M_0}$ correct steps. Taking the common burn-in time $T_{\mathrm{init}}^{\max}:=\max_{j\in[n]}T_{\mathrm{init},j}$, which is finite since $n$ is finite, makes this cycle structure hold simultaneously for all coordinates. Hence, up to the vanishing transient contribution $O(T_{\mathrm{init}}^{\max}/T)$, the fraction of time where $\w^t \neq \w^*$ is asymptotically bounded above by $n/(L_{M_0}+1)$. Since the condition $\rho < 1/(n+M_0)$ ensures $L_{M_0}+1 > n$, this fraction is strictly less than 1. Therefore, the fraction of time spent exactly at $\w^*$ is asymptotically bounded below by $c = 1 - n/(L_{M_0}+1) > 0$, making the rest of the argument in Part 1 of the proof of Theorem~\ref{thm:noisy_recurrence_departure} applicable.

\textbf{Part 2: Infinite Departures from $\w^*$}

This argument demonstrates that iterates cannot remain permanently at $\w^*$ on the event
\[
A:=\{\exists j\in[n]\text{ such that }w_j^*[\tilde{\nabla}_{\w}L(\w^*)]_j>0\}.
\]
By Lemma~\ref{lem:gradient_symmetry_nonvanishing}, the event $A$ occurs with probability at least $
\frac{1-2^{-\|v\|_0N}}{2}$.

Assume we are in such a realization for some coordinate $j$. Suppose, for contradiction, that $\w^u=\w^*$ for all $u\ge t'$. Then $
x_j^{k}=x_j^{k-1}-\eta_k[\tilde{\nabla}_{\w}L(\w^*)]_j$.
Summing from $k=t'+1$ to $T+1$ gives
\[
x_j^{T+1}=x_j^{t'}-[\tilde{\nabla}_{\w}L(\w^*)]_j\sum_{k=t'+1}^{T+1}\eta_k.
\]
Since $[\tilde{\nabla}_{\w}L(\w^*)]_j$ has the same sign as $w_j^*$ on $A$ and $\sum_k\eta_k=\infty$ by Assumption~\ref{assump:step_size}, the coordinate $x_j^k$ must eventually cross the quantization boundary in the direction opposite to $w_j^*$. This contradicts $\w^u=\w^*$ for all $u\ge t'$.

Combining the event $A$ with the finite-sample approximation event from Theorem~\ref{thm:RAIC_for_two_layer_quantized_neural_networks_appendix}, which holds with probability at least $1-5m\exp(-cn)$, gives the departure probability lower bound
\[
\frac{1}{2}- 2^{-\|v\|_0N-1}-5m\exp(-cn)
\]
Since the visits to $\w^*$ are infinite, departures from $\w^*$ are also infinite on this event.

\end{proof}

\end{document}